\documentclass{article}

\usepackage{microtype}
\usepackage{graphicx}
\usepackage{subcaption}
\usepackage{booktabs} 

\usepackage{hyperref}
\usepackage{enumitem}

    \usepackage{tikz}
    \usetikzlibrary{arrows.meta}

\usepackage[accepted]{icml2026}

\usepackage{amsmath}
\usepackage{amssymb}
\usepackage{mathtools}
\usepackage{amsthm}
\usepackage{bm}

\newcommand{\bx}{\mathbf{x}}
\newcommand{\bX}{\mathbf{X}}
\newcommand{\x}{\mathrm{x}}
\newcommand{\by}{\mathbf{y}}
\newcommand{\bc}{\mathbf{c}}
\newcommand{\ba}{\mathbf{a}}
\newcommand{\bb}{\mathbf{b}}

\newcommand{\bv}{\mathbf{v}}
\newcommand{\y}{\mathrm{y}}

\newcommand{\bw}{\mathbf{w}}
\newcommand{\bom}{\bm{\omega}}
\newcommand{\bW}{\mathbf{W}}

\newcommand{\D}{\mathcal{D}}
\newcommand{\E}{\mathbb{E}}
\newcommand{\w}{\mathrm{w}}

\newcommand{\Var}{\operatorname{Var}}
\newcommand{\Cov}{\operatorname{Cov}}
\newcommand{\trace}{\operatorname{tr}}

\newcommand{\bI}{\mathbf{I}}

\DeclareMathOperator*{\argmin}{arg\,min}

\usepackage[capitalize,noabbrev]{cleveref}

\definecolor{mydarkblue}{rgb}{0,0.08,0.45}
\hypersetup{
  colorlinks=true,
  linkcolor=mydarkblue,
  citecolor=mydarkblue,
  urlcolor=mydarkblue
}

\theoremstyle{plain}
\newtheorem{theorem}{Theorem}
\newtheorem{proposition}{Proposition}
\newtheorem{lemma}{Lemma}
\newtheorem{corollary}{Corollary}
\theoremstyle{definition}

\newtheorem{assumption}{Assumption}
\theoremstyle{remark}

\newtheorem{example}{Example}

\icmltitlerunning{On the Epistemic Uncertainty of Overparametrized Neural Networks}

\begin{document}

\twocolumn[
  \icmltitle{On the Epistemic Uncertainty of Overparametrized Neural Networks}




  \begin{icmlauthorlist}
    \icmlauthor{David R\"ugamer}{lmu,mcml}
  \end{icmlauthorlist}

  \icmlaffiliation{lmu}{Department of Statistics, LMU Munich, Munich, Germany}
  \icmlaffiliation{mcml}{Munich Center for Machine Learning (MCML), Munich, Germany}
  \icmlcorrespondingauthor{David R\"ugamer}{david.ruegamer@lmu.de}
  
  \icmlkeywords{Epistemic Uncertainty, Overparametrized Models, Bayesian Neural Networks, Sampling}

  \vskip 0.3in
]



\printAffiliationsAndNotice{}  

\begin{abstract}
  Epistemic uncertainty is often viewed as a reducible uncertainty that vanishes with increasing data. This perspective implicitly assumes parameter identifiability and equates epistemic uncertainty with predictive variability. In overparametrized neural networks, however, model parameters are typically non-identifiable due to symmetries and redundant representations. As a consequence, substantial parameter uncertainty can persist even when the underlying function is fully identified. In this work, we analyze epistemic uncertainty through the lens of non-identifiability and characterize both discrete and continuous sources of residual uncertainty. Focusing on one-hidden-layer ReLU networks, we thoroughly analyze the resulting posterior structure and validate our theoretical insights through empirical studies.
\end{abstract}

\section{Introduction}

Uncertainty quantification in modern neural networks is typically grounded in predictive behavior: epistemic uncertainty is identified with variability in model outputs and is expected to diminish as more data become available. This viewpoint underlies many classical uncertainty decompositions and is widely adopted in practice. However, for neural networks trained in parameter space, symmetries and redundant representations are ubiquitous. In such settings, multiple parameter configurations can represent the same function, raising the question of whether predictive variability alone provides a complete picture of epistemic uncertainty in overparametrized models. While neuron permutability and rescaling symmetries have been extensively studied, non-identifiability of excessive weights in overparametrized models has so far been overlooked (cf.~\cref{fig:geometry}). 

\paragraph{Why Common Measures Fail}
Epistemic uncertainty is often described as a \emph{reducible} uncertainty \citep{hullermeier2021aleatoric}. This characterization is typically justified by either function-space arguments or posterior concentration results such as the Bernstein--von Mises theorem, which implies $n^{-1/2}$ contraction around the true parameter as data size $n\to\infty$ \citep{Vaart_1998}. While decompositions in the function-space are, per definition, insensitive to parameter non-identifiability, concentration results rely on local parameter identifiability, manifested in a positive-definite Fisher information matrix. In non-identifiable models, this assumption fails: the Fisher information is singular, and the posterior does not collapse to a point mass even for $n\to\infty$. While the posterior may concentrate in function space, it generally does not do so in parameter space. Consequently, the posterior does not ``forget'' the prior along non-identifiable directions, and residual uncertainty remains.

This mismatch between classical uncertainty decompositions and the structural properties of overparametrized neural networks motivates a re-examination of epistemic uncertainty beyond predictive variance alone.

\begin{figure}
    \centering
    \includegraphics[width=1\columnwidth]{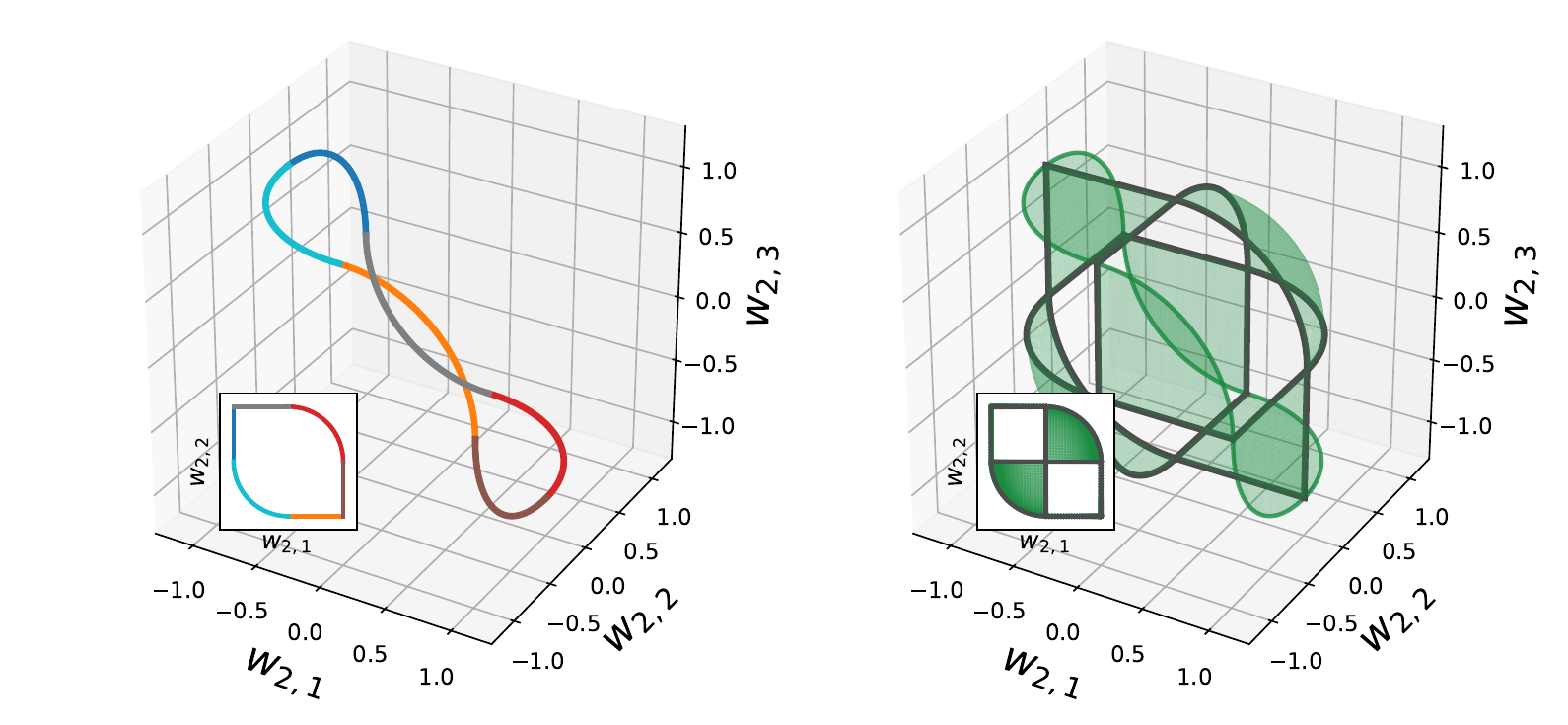}
    \caption{Non-identifiability manifolds of the hidden-to-output weights $\w_{2,m}$ in an overparametrized ReLU network $f(\bx) = \sum_{m=1}^M \w_{2,m} \phi(\bw_{1,m}^\top\bx)$ with one (left) and two (right) redundant neurons. Left: Colors represent different assignments of the $M=3$ weights $\{\w_{2,m}\}_{m\in[M]}$ of $f$ to the true weights $\w_{2,1}^\ast=1,\w_{2,2}^\ast=-1$. Right: Additional manifolds emerging from increasing $M$ to $4$. Green areas represent the manifolds that arise by overparametrizing one true weight $\w_{2,m}^\ast$ by 3 weights of $f$ and gray lines correspond to 2-to-1 assignment between model and true weights. Inset plots visualize the bivariate marginal view from the $(\w_{2,1},\w_{2,2})$-perspective.}
    \label{fig:geometry}
\end{figure}

\paragraph{Our Contributions} In this work, we study epistemic uncertainty for overparametrized models. To this end, we
\vspace{-0.2cm}
\begin{itemize}[leftmargin=1.5em]
    \item motivate the origin of non-identifiability uncertainty;
    \vspace{-0.1cm}
    \item show that variance-based uncertainty decomposition is an appropriate tool to measure these;
    \vspace{-0.1cm}
    \item theoretically and practically demonstrate a new phenomenon of non-identifiability uncertainty induced by overparametrization in 2-layer ReLU networks;
    \vspace{-0.1cm}
    \item derive the model's posterior for various scenarios, make statements about the practical consequences, and validate our theoretical findings using empirical analysis.
\end{itemize}

\section{Related Work and Background}

Uncertainty decomposition has been extensively studied in the context of Bayesian modeling and probabilistic machine learning \citep[see, e.g.,][]{hullermeier2021aleatoric}. The arguably most common way to differentiate between different sources of uncertainty is an information-theoretic decomposition, which defines epistemic uncertainty in terms of mutual information between parameters and predictions \citep[e.g.,][]{depeweg2018decomposition}. While most articles focus on the decomposition of uncertainty into aleatoric and epistemic, several works have further decomposed epistemic uncertainty by, for example, defining it as the sum of structural and parametric epistemic uncertainty \citep{liu2019accurate}. Relatedly, non-identifiability has been recognized as a central issue in certain Bayesian linear models, where posterior uncertainty can persist along unidentified directions even when certain identifiable quantities are well determined \citep[e.g.,][]{gustafson2005model}. Our work studies a neural-network instance of this phenomenon, where non-identifiability arises from symmetries and redundant parametrizations.

In parallel, non-identifiability and symmetry in neural networks have been studied extensively from both theoretical and empirical perspectives \citep[see, e.g.,][for a recent survey]{zhao2026symmetry}. Classic results characterize permutation symmetries and scaling invariances \citep[e.g.,][]{wiese2023towards,grigsby2023hidden,laurent2024a}, while various other aspects in deep learning have been related to symmetries, including learning dynamics \citep{ziyin2024symmetry}, mode connectivity \citep{tatro2020optimizing,pittorino2022deep,lim2024empirical}, optimization \citep{ziyin2025remove}, sampling \citep{sommer2024connecting}, and their connection to topology \citep{nurisso2024topological}. Overparameterization has also been studied from the perspective of prediction and generalization, most prominently in work on benign overfitting in linear models \citep{bartlett2020benign}. In contrast, our focus is not on predictive risk, but on the posterior geometry induced by redundant parametrizations and on the resulting parameter-space uncertainty. The geometric structure induced by permutation symmetries and neuron replication in overparameterized neural networks has been studied by \citet{pmlr-v139-simsek21a}, who characterize the connected manifolds of global minima and symmetry-induced critical subspaces arising in deterministic loss landscapes. Our work is complementary in nature. Rather than analyzing the optimization landscape itself, we study how these non-identifiability structures manifest in the Bayesian posterior and how they give rise to persistent epistemic uncertainty.

Finally, our work is also related to recent discussions of reparameterization invariance in approximate Bayesian inference \citep{roy2024reparameterization}. Whereas this line of work studies how Bayesian approximations should respect geometric structure under reparameterizations, we characterize the posterior geometry directly for a neural network class in which the relevant non-identifiabilities can be described explicitly. 

\subsection{Background and Instructive Example}

In order to isolate the effect of uncertainties caused by non-identifiability, it is helpful to consider settings where no other complicating mechanisms, such as missing features, biases in the data, distribution shifts, etc., can cause uncertainty in the model. 
Let $\y\in\mathcal{Y}$ be the outcome of interest and $\bx\in\mathcal{X}\subseteq\mathbb{R}^p$ features fed into a model $f_\bw:\mathcal{X}\to\mathcal{Y}$, given feature space $\mathcal{X}$ with $\text{int}(\mathcal{X})\neq \varnothing$, and $\bw\in\mathcal{W} \equiv \mathbb{R}^d$ are model weights with posterior ${\mathcal{P}}$ given by the density $p(\bw|\mathcal{D}_n)$. In case the model $f_\bw$ correctly specifies the conditional expectation of $\y|\bx,\D_n$ given data $\D_n = \{\y^{(i)},\bx^{(i)}\}_{i\in[n]}$, i.e.\ $\E (\y| \bx,\D_n) = f_\bw(\bx)$, the posterior predictive variance can be decomposed into
$$
\Var(\y| \bx, \D_n) = \underbrace{\mathbb E_{\mathcal{P}}[\operatorname{Var}(\y| \bx,\D_n,\bw)]}_{\text{aleatoric}}
+
\underbrace{\operatorname{Var}_{\mathcal{P}}(f_\bw(\bx))}_{\text{epistemic}}.
$$
While overparameterization increases the variance of $\bw$, it does not increase $\operatorname{Var}_{\bw}(f_\bw(\bx))$ once $f_\bw$ is identified.  

\begin{proposition}[informal]
Consider a model $f_{\bw}:\mathcal X \to \mathcal Y$ whose parameters are identifiable only up to an equivalence relation, in the sense that multiple parameter values induce the same function. If, as $n \to \infty$, the posterior ${\mathcal{P}}$ concentrates on a single equivalence class corresponding to a unique function $f^\ast$, then, for any fixed $\bx \in \mathcal X$, $\Var_{\mathcal{P}}\big(f_{\bw}(\bx)\big)$ vanishes. This can occur even though the posterior covariance $\Cov_{\mathcal{P}}(\bw | \mathcal D_n)$ remains non-degenerate or may even diverge due to non-identifiability.
\end{proposition}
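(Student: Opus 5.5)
The plan is to formalize the informal statement and then prove its two halves separately: vanishing of the functional variance, and possible non-degeneracy or divergence of the parameter covariance. Fix $\bx\in\mathcal X$ and set $g(\bw) := f_\bw(\bx)$. Write $[\bw^\ast]$ for the equivalence class $\{\bw : f_\bw = f^\ast\}$. I read ``concentrates on a single equivalence class'' as follows: for every $\varepsilon>0$,
\[
{\mathcal{P}}\bigl(\{\bw : |f_\bw(\bx)-f^\ast(\bx)|>\varepsilon\}\bigr)\to 0
\]
in probability (or almost surely) as $n\to\infty$. Concentration on $[\bw^\ast]$ in the topological sense implies this whenever $\bw\mapsto f_\bw(\bx)$ is continuous, which holds for ReLU networks. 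So the first claim reduces to showing that the random variable $g(\bw)$, with $\bw\sim{\mathcal{P}}$, converges in $L^2$ to the constant $f^\ast(\bx)$. This gives
\[
\Var_{\mathcal{P}}(g)\le \E_{\mathcal{P}}\bigl[(g-f^\ast(\bx))^2\bigr]\to 0,
\]
since the variance is the minimal mean squared deviation over all constants.

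Convergence in probability alone does not give $L^2$ convergence. The key step is therefore a uniform integrability condition. I would assume a uniform second-moment bound of the form $\sup_n \E_{\mathcal{P}}[|g|^{2+\delta}]<\infty$ for some $\delta>0$. Alternatively, I would assume $\mathcal{Y}$ (equivalently, the function class evaluated at $\bx$) is bounded; in that case dominated convergence applies directly. With such a bound I split the expectation. On the set $\{|g-f^\ast(\bx)|\le\varepsilon\}$ the contribution is at most $\varepsilon^2$. On the complement, Hölder's inequality bounds the contribution by
\[
\E_{\mathcal{P}}\bigl[|g-f^\ast(\bx)|^{2+\delta}\bigr]^{2/(2+\delta)}\;{\mathcal{P}}\bigl(|g-f^\ast(\bx)|>\varepsilon\bigr)^{\delta/(2+\delta)},
\]
and this tends to zero. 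I expect this to be the main obstacle. Without such a condition the statement is false: heavy posterior tails along identifiable directions could keep the variance bounded away from zero. The informal proposition has to be made precise by including this moment condition as an assumption.

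For the second part it suffices to exhibit one example, and I would use the rescaling symmetry of a single ReLU unit, $f_\bw(\bx)=\w_2\,\phi(\w_1 x)$. Since $\phi(cz)=c\,\phi(z)$ for $c>0$, the function depends on $(\w_1,\w_2)$ only through the product $\w_1\w_2$ (taking $\w_1>0$). With Gaussian noise, the likelihood is therefore a function of $s=\w_1\w_2$ alone. Under the prior, the posterior factorizes into a concentrating factor along $s$ and the prior conditional along the orbit $\{(c,s/c)\}$, which is untouched by the data. As a result, the conditional distribution of $\w_1$ given $s$ stays equal to its prior conditional for all $n$, so $\Var_{\mathcal{P}}(\w_1)$ stays bounded below and $\Cov_{\mathcal{P}}(\bw\mid\D_n)$ does not degenerate. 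Divergence follows from an improper flat prior, or from the redundant-neuron case $f=\w_{2,1}\phi(\cdot)+\w_{2,2}\phi(\cdot)$ with identical input weights. In that case only $\w_{2,1}+\w_{2,2}$ is identified, and under a flat prior the posterior along the direction $\w_{2,1}-\w_{2,2}$ is improper, so its variance is infinite. Meanwhile $f_\bw(\bx)$ is unaffected, so the first part still applies.
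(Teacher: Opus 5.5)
Your proof is correct, but it takes a different route from the paper's. The paper never takes a limit of finite-$n$ variances. It evaluates moments of the idealized limiting posterior, the uniform Dirac mixture $\frac{1}{M!}\sum_{\bm{\pi}\in\Pi}\delta_{\bm{\pi}\bw^\ast}$. Under that measure $f_\bw(\bx)$ is almost surely constant, so $\Var_{\mathcal{P}}(f_\bw(\bx))=0$ exactly. The weight covariance is computed in closed form in the derivation of Lemma~\ref{lemma:eunlimit}, with diagonal blocks $\bm{\Upsilon}$ and off-diagonal blocks $-\bm{\Upsilon}/(M-1)$, and it is nonzero as soon as the neurons differ. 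The ``may diverge'' clause is not argued separately; the closest is Example~\ref{ex:eu}, where the trace equals $d\tau^2$.

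You instead show that the variances of the finite-$n$ posteriors tend to zero. In doing so you identify what the paper's shortcut takes for granted: weak convergence to a measure supported on the equivalence class does not control second moments. Your $(2+\delta)$-moment (uniform integrability) condition is exactly what justifies exchanging the limit with the variance.

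Your non-degeneracy example is also different, and in one respect more informative. Along the rescaling orbit $\{(c,s/c)\}$ the conditional posterior equals the prior conditional for every $n$, which gives a continuous and genuinely Bayesian instance. The paper's permutation mixture is discrete, and it rests on the $L_2$ penalty removing the rescaling symmetry. That holds for the minimizer, but, as your example shows, not for the posterior.

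Two details need tightening.

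First, your remark that topological concentration on $[\bw^\ast]$ plus continuity of $\bw\mapsto f_\bw(\bx)$ gives functional concentration needs an extra assumption. You need tightness of the posteriors, or compactness of the relevant part of the class. The class is unbounded under rescaling, and ReLU networks are only locally Lipschitz: at $(c,s/c)$, perturbing $\w_2$ by $\eta$ changes $f$ by $c\,\eta\,\phi(x)$, which is unbounded in $c$.

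Second, with a flat prior the posterior is improper, so its covariance is undefined rather than infinite. Your first part, which requires a probability measure, then does not apply verbatim. Use instead a proper prior of scale $\tau$ with $\tau\to\infty$, which matches the paper's $d\tau^2$. Alternatively, use a proper prior with sufficiently heavy tails along $\w_{2,1}-\w_{2,2}$.
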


To illustrate this point, consider the degenerate posterior $p(\bw | \mathcal D_n) = \frac{1}{M!} \sum_{\bm{\pi} \in \Pi} \delta_{\bm{\pi} \bw^\ast}$, that is, a uniform mixture of Dirac measures over all block-wise permutations $\bm{\pi}$ of $\bw^\ast$, reflecting the non-identifiability of the model. Although this posterior exhibits nonzero parameter uncertainty, in the sense that $\Cov_{\mathcal{P}}(\bw | \mathcal D_n) \neq \bm 0$ whenever the permuted parameters differ, the random variable $f_{\bw}(\bx)$ is almost surely constant under ${\mathcal{P}}$ since $f_{\bm{\pi} \bw^\ast}(\bx) = f_{\bw^\ast}(\bx)$ for all $\bm{\pi} \in \Pi$, which implies $\Var_{{\mathcal{P}}}\!\big(f_{\bw}(\bx)\big) = 0$.

\subsection{Non-Identifiability-Aware Measures Using a Variance-Based Decomposition}

Above information-theoretic definitions of uncertainty are invariant to non-identifiable parameter variation and therefore cannot capture uncertainty that leaves $f_\bw$ unchanged. This does not hold for uncertainty definitions based on the law of total variance \citep{sale2023second}, as the following example shows.\\

\begin{example} \label{ex:eu}
Take the deep linear model with homoscedastic Gaussian errors and feature $\x\in\mathbb{R}$ as an example, i.e., the model $\mathcal{N}(f_\bw(\x) = \bw_L^\top\bW_{L-1} \cdots \bW_2 \bw_{1}\x,\sigma^2)$. Assume the ground truth is $\y|\x \sim\mathcal{N}(\beta \x, \sigma^2)$. The aleatoric uncertainty is $\text{AU}(\y|\x) = \E_{\mathcal{P}}(\Var_{\y|\x}[\y|\bw,\x]) = \sigma^2$. For $n\!\to\!\infty$, the posterior will contract at $f_\bw(\x)\! =\! \beta \x$ with zero epistemic uncertainty in function space. However, for a variance-based definition of epistemic uncertainty $\text{EU}(\y|\x) :=  \trace(\Cov_{\mathcal{P}}(\bw | \bw_L^\top\bW_{L-1} \cdots \bW_2 \bw_{1} = \beta)) \neq 0$ unless $\bw$ is identifiable. As shown later, neither is the epistemic uncertainty vanishing for common assumptions such as $\bw\sim\mathcal{N}(\bm{0},\tau^2 \bI)$, in which case $\text{EU}(\y|\x) = d\tau^2$.
\end{example}

While the previous example shows how to incorporate uncertainty caused by non-identifiability into epistemic measures, it does not qualify the nature of such uncertainties.

\section{The Non-Identifiability Distribution}


To make non-identifiability-induced uncertainty explicit, we consider overparametrized neural networks from a function class $\mathcal{F}$ with a hierarchical structure $\mathcal{F}_{\mathrm{smaller}} \subseteq \mathcal{F}_{\mathrm{bigger}}$, so that functions represented by smaller networks can also be represented by larger networks. In particular, we assume that the parametrized model class $\mathcal{F}_{\bw} = \{f_{\bw} : \bw \in \mathcal{W}\}$ contains a target function $f^\ast \in \mathcal{F}^\ast \subseteq \mathcal{F}_{\bw}$. Here, $f^\ast$ denotes the function identified in the infinite-data limit. Once $f^\ast$ is identified, the predictive distribution is $p(\y | \bx, \mathcal{D}_n, f^\ast) = p(\y | \bx, f^\ast)$, so additional data do not further reduce uncertainty about $\y | \bx$ conditional on $f^\ast$.

\paragraph{Model Definition}

A model class $\mathcal{F}_\bw$ that admits such a property and allows us to explicitly disentangle epistemic uncertainty are single-hidden layer ReLU neural networks of the form 
\begin{equation} \label{eq:def_f}
f_\bw(\x) = \bw_2^\top \phi(\bW_1\bx)    
\end{equation}
with $M$ units, $\bw_2\in\mathbb{R}^M, \bW_1\in\mathbb{R}^{M\times p}$, stacked weight vector $\bw=(\text{vec}(\bW_1)^\top,\bw_2^\top)^\top \in \mathbb{R}^{d}$ with total dimension ${d:=M(p+1)}$, and $\phi(\cdot) = \max(\,\cdot\,,0)$. We assume a regularized loss objective
\begin{equation} \label{eq:regrisk}
   \mathcal{L}(\bw) := \textstyle\sum_{i=1}^n \ell(\y_i,f_\bw(\x_i)) + \lambda \|\bw\|_2^2
\end{equation}
with loss function $\ell:\mathbb{R}\times\mathbb{R}\to\mathbb{R}^+$ and penalty parameter $\lambda > 0$. While many papers in related literature study non-identifiability without additional regularization, we consider the objective (\ref{eq:regrisk}) as it is closer to practical applications and has an equivalent formulation from a Bayesian perspective. Specifically, when assuming $\bw$ to be a random variable, (\ref{eq:regrisk}) implies a posterior $p(\bw|\D_n) \propto p(\by|\bX,\bw) p(\bw)$ with likelihood defined by $p(\by|\bX,\bw) = \prod_{i=1}^n\exp(-\ell(\y_i,f_\bw(\x_i)))$ using feature matrix $\bX$ and Gaussian prior $\bw \sim \mathcal{N}(\bm{0},(2\lambda)^{-1}\bI_d)$. 

\paragraph{Scope}
In the following, we assume that the likelihood depends on $\bw$ only through $f_\bw$ and we restrict ourselves to the simple network class defined in (\ref{eq:def_f}) due to its nice properties, such as parameter identifiability and feasibility for exact theoretical statements. We note, however, that many other overparametrized examples, including ReLU networks with biases in the hidden layer and all multilayer perceptrons that differ from a smaller reference network by an increased number of neurons, can follow a similar argument. We will discuss these in \cref{sec:discussion}.

\subsection{Correctly Specified Model} \label{sec:correctparam}

We start with an identifiable network by invoking results of
\citet{bona2023parameter}, which implies that for 2-layer ReLU networks
on domain $\mathcal X$ with nonempty interior, functional equality
implies parameter equivalence up to permutations and positive rescalings
of hidden neurons. As we consider a regularized objective (\ref{eq:regrisk}), the following holds:

\begin{proposition} \label{prop:ident}
    Let $f^\ast \equiv f_{\bw^\ast} \in \mathcal{F}_\bw$ be non-degenerate and identifiable on $\mathcal X$ (\cref{ass:degen}). W.l.o.g., assume that $\bw^\ast = \argmin_{\bw \in \mathcal{W}: f_\bw = f^\ast} \|\bw\|_2^2$. Then, if $\hat\bw = \argmin_{\bw \in \mathcal{W}} \mathcal{L}(\bw)$ and $f_{\bw^\ast} = f_{\hat\bw}$, it follows $\bw^\ast = \hat\bw$ up to permutations of the $M$ neurons.
\end{proposition}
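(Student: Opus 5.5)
The plan is to reduce the statement to a norm-minimization problem over the set of parameters realizing $f^\ast$, and then to exploit the positive-rescaling invariance of ReLU.

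\textbf{Step 1 (reduction).} The likelihood depends on $\bw$ only through $f_\bw$. Hence for every $\bw$ with $f_\bw = f^\ast$ the data term $\sum_i \ell(\y_i, f_\bw(\x_i))$ takes the same value. Since $f_{\hat\bw} = f^\ast$ and $\hat\bw$ minimizes $\mathcal{L}$ over all of $\mathcal{W}$, it minimizes $\mathcal{L}$ in particular over the fiber $\{\bw : f_\bw = f^\ast\}$. On that fiber $\mathcal{L}$ differs from $\lambda\|\bw\|_2^2$ only by a constant, so $\hat\bw \in \argmin_{\bw: f_\bw = f^\ast}\|\bw\|_2^2$.

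\textbf{Step 2 (describe the fiber).} I invoke the identifiability result of \citet{bona2023parameter} under \cref{ass:degen}: non-degeneracy, nonempty interior of $\mathcal X$, no zero or duplicated neurons. It gives that every $\bw$ with $f_\bw = f^\ast$ has the form
\[
(\bW_1,\bw_2) = \bigl(D\,\bm{\pi}\bW_1^\ast,\; D^{-1}\bm{\pi}\bw_2^\ast\bigr),
\]
where $\bm{\pi}$ is a permutation of the $M$ neurons and $D=\mathrm{diag}(c_1,\dots,c_M)$ with $c_m>0$. This uses $\phi(c z) = c\,\phi(z)$ for $c>0$.

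\textbf{Step 3 (minimize over rescalings).} The squared norm separates over neurons:
\[
\|\bw\|_2^2 = \sum_m \Bigl(c_m^2\|\bw_{1,m}^\ast\|^2 + c_m^{-2}(\w_{2,m}^\ast)^2\Bigr).
\]
Each summand is strictly convex in $t=c_m^2>0$ when both $\|\bw_{1,m}^\ast\|$ and $\w_{2,m}^\ast$ are nonzero, which non-degeneracy guarantees. So each summand has the unique minimizer
\[
c_m^2 = \frac{|\w_{2,m}^\ast|}{\|\bw_{1,m}^\ast\|},
\]
i.e. the balanced configuration $\|\bw_{1,m}\| = |\w_{2,m}|$. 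Permutations leave the norm unchanged. Therefore the set of minimizers in the fiber is exactly the orbit of one balanced parameter under neuron permutations. By assumption $\bw^\ast$ is such a minimizer, so it is balanced and every $c_m$ must equal $1$. Combining with Step 1, $\hat\bw = \bm{\pi}\bw^\ast$ for some $\bm{\pi}$.

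\textbf{Main obstacle.} The delicate step is Step 2: getting exactly the right form of the identifiability theorem. I need that functional equality on $\mathcal X$ forces the \emph{same} number of active neurons, with no splitting or merging of neurons and no zero neurons. That is precisely what the non-degeneracy part of \cref{ass:degen} must supply; otherwise the fiber contains additional manifolds, as in \cref{fig:geometry}. I would state explicitly which conditions of \citet{bona2023parameter} \cref{ass:degen} encodes, namely nonzero, pairwise non-parallel $\bw_{1,m}^\ast$ and nonzero $\w_{2,m}^\ast$. With those, the rescaling argument in Step 3 goes through cleanly.
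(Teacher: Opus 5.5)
Your route is the paper's: it also takes the permutation-plus-positive-rescaling description of the fiber from \citet{bona2023parameter} and removes the rescaling by AM--GM. Your Step 1 and the per-neuron strict convexity in $c_m^2$ are a cleaner version of that argument, so Step 3 is fine.

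The gap is Step 2. You rightly flag it, but then claim it follows from nonzero, pairwise non-collinear $\bw_{1,m}^\ast$ and nonzero $\w_{2,m}^\ast$; it does not. For bias-free ReLU units $\phi(z)-\phi(-z)=z$. Hence replacing $\bw_{1,m}$ by $-\bw_{1,m}$ for all $m$ in a nonempty $S\subseteq[M]$ changes $f_\bw$ only by $-\sum_{m\in S}\w_{2,m}\bw_{1,m}^\top\bx$. This change is invisible whenever $\sum_{m\in S}\w_{2,m}\bw_{1,m}=\bm{0}$. Concretely, for $p=2$ and $M=M^\ast=3$,
\[
f^\ast(\bx)=\phi(x_1)+\phi(x_2)-\phi(x_1+x_2)=\phi(-x_1)+\phi(-x_2)-\phi(-x_1-x_2)\qquad\text{for all }\bx\in\mathbb{R}^2 .
\]
This $f^\ast$ satisfies \cref{ass:degen}:
\begin{itemize}
\item it has three genuine kink lines, so no bias-free width-2 network represents it;
\item its hidden features are linearly independent;
\item no two first-layer rows are collinear, and all output weights are nonzero.
\end{itemize}
Yet the second representation uses the negatives of the first-layer rows, which is neither a permutation nor a positive rescaling. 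The map $\bw_{1,m}\mapsto-\bw_{1,m}$ leaves every block norm unchanged, so the balanced versions of both representations have the same norm. The minimum-norm set in the fiber therefore contains two distinct permutation orbits, $\hat\bw$ may lie in the flipped one, and the conclusion of the proposition fails. The identity holds on all of $\mathbb{R}^p$, so no choice of $\mathcal X$ helps.

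The paper's derivation imports the same fiber description without addressing this case, so the gap is shared. Its appendix span lemma likewise treats $\phi(-\bw_{1,m'}^{\ast\top}\cdot)$ as if it lay in $V$. Either way, Step 2 cannot be closed from \cref{ass:degen} as stated. You need an extra genericity condition, such as $\sum_{m\in S}\w^\ast_{2,m}\bw^\ast_{1,m}\neq\bm{0}$ for every nonempty $S\subseteq[M^\ast]$. For a general $\mathcal X$ you also need every hyperplane $\{\bx:\bw_{1,m}^{\ast\top}\bx=0\}$ to meet $\mathrm{int}(\mathcal X)$. With these, you can prove Step 2 directly by kink matching:
\begin{itemize}
\item $f^\ast$ has $M^\ast$ distinct genuine kink hyperplanes and $M=M^\ast$, so each hyperplane carries exactly one model neuron and none is left over. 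This counting, not non-degeneracy alone, is what excludes splitting and zero neurons.
\item Hence $\bw_{1,m}=\pm r_m\bw^\ast_{1,\pi(m)}$ with $r_m>0$, and matching gradient jumps gives $r_m\w_{2,m}=\w^\ast_{2,\pi(m)}$.
\item The remaining discrepancy is the linear function $-\sum_{m\in F}\w^\ast_{2,\pi(m)}\bw^{\ast\top}_{1,\pi(m)}\bx$, where $F$ is the set of flipped neurons. It must vanish on $\mathcal X$, and the extra condition then forces $F=\varnothing$.
\end{itemize}
Your Step 3 then finishes the proof unchanged.
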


Note that $\hat\bw$ is a minimizer of (\ref{eq:regrisk}) and hence $L_2$-minimal, removing the positive rescaling symmetry. A derivation is given in
\cref{app:propident}. What \cref{prop:ident} implies is that (up to a set of degenerated functions) we can exactly characterize the equivalence classes of $f_\bw$ by only considering permutations. As a consequence, for $n\to\infty$, the posterior contracts at $\bw^\ast$ and its permutations, and becomes a mixture of Dirac measures, i.e., $p(\bw|\D_n) = \frac{1}{M!} \sum_{\bm{\pi} \in \Pi} \delta_{\bm{\pi} \bw^\ast}$, characterizing exactly the epistemic uncertainty. Interestingly, this coincides in shape with the limiting distribution of a deep ensemble. So while a valid criticism of deep ensembles is that their prescribed distribution is not equivalent to the true posterior \citep{wild2023rigorous}, they can correctly capture the posterior in the specific case of a correctly specified model and the large data limit. Its epistemic uncertainty and posterior moments are given in the following.

\begin{lemma} \label{lemma:eunlimit}
    Under the assumption of \cref{prop:ident}, $\Var_{\mathcal{P}}(\E_{\y|\x}[\y|\bw,\x]) = 0$ and  $\mathrm{EU}(\y,\bw|\bx,\D_n) = \trace(\Cov(\bw|\D_n)) = \sum_{i=1}^d\upsilon_i^2$ with $\upsilon_i^2$ as defined in \cref{app:derivlemma}.
\end{lemma}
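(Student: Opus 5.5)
The proof works directly with the limiting posterior implied by \cref{prop:ident}, $\mathcal P = \frac{1}{M!}\sum_{\bm\pi\in\Pi}\delta_{\bm\pi\bw^\ast}$, a uniform mixture of Dirac masses over the block-wise neuron permutations of $\bw^\ast$. Each block is $(\bw_{1,m}^\top,\w_{2,m})$. The functional part and the parametric part are then treated separately.

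\textbf{Functional part.} Since the likelihood depends on $\bw$ only through $f_\bw$, we have $\E_{\y|\x}[\y|\bw,\x]=f_\bw(\bx)$. For the network in (\ref{eq:def_f}), a block permutation only reorders the summands:
$$f_{\bm\pi\bw^\ast}(\bx)=\textstyle\sum_m \w^\ast_{2,\pi(m)}\phi(\bw_{1,\pi(m)}^{\ast\top}\bx)=f_{\bw^\ast}(\bx).$$
Hence $f_\bw(\bx)$ is $\mathcal P$-almost surely constant, and its variance is $0$. This is exactly the instructive example given earlier.

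\textbf{Parametric part.} Here I compute $\Cov(\bw|\D_n)$ under the mixture explicitly. Consider coordinate $i$, belonging to neuron slot $m$ and to a within-neuron position $k$ (one of the $p$ input weights or the output weight). Under a uniformly random permutation, the neuron placed in slot $m$ is uniform over $\{1,\dots,M\}$. So the marginal of $\bw_i$ is uniform on the $M$ values $\{w^\ast_{k,j}\}_{j\in[M]}$, the $k$-th entries of the $M$ true neurons. Its mean is $\bar w_k=\frac1M\sum_j w^\ast_{k,j}$. This mean is the same for every slot $m$, which reflects permutation invariance of the mean. Its variance is
$$\upsilon_i^2=\frac1M\sum_{j}\bigl(w^\ast_{k,j}-\bar w_k\bigr)^2,$$
which is the quantity I expect \cref{app:derivlemma} to define. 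Summing the diagonal gives $\trace(\Cov(\bw|\D_n))=\sum_{i=1}^d\upsilon_i^2 = M\sum_k \upsilon_k^2$. Off-diagonal entries (negative within-position covariances $-\upsilon_k^2/(M-1)$ across slots, and within-neuron cross covariances) are not needed for the trace, but I would note them for completeness.

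\textbf{Main obstacle.} The delicate step is justifying that the limiting posterior is exactly the uniform permutation mixture, with no rescaling orbit and no degenerate component. I would first invoke \cref{prop:ident} together with \cref{ass:degen}: non-degeneracy ensures the $M$ neurons of $\bw^\ast$ are distinct, so the $M!$ atoms are distinct and the orbit is well defined. The $L_2$-minimality of the mode removes the positive rescaling symmetry. Uniform weights then follow from the permutation invariance of both the likelihood and the isotropic Gaussian prior. Given this, the remaining computations are elementary moment calculations for a uniform discrete distribution.
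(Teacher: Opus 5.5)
Your proposal is correct and follows essentially the same route as the paper: it takes the limiting posterior to be the uniform mixture $\frac{1}{M!}\sum_{\bm{\pi}\in\Pi}\delta_{\bm{\pi}\bw^\ast}$, uses permutation invariance of $f_\bw$ to get zero functional variance, and reads off the coordinate-wise variances (the diagonal of the paper's block matrix $\bm{\Upsilon}$, with cross-slot blocks $-\bm{\Upsilon}/(M-1)$) to obtain $\trace(\Cov(\bw|\D_n)) = M\,\trace(\bm{\Upsilon}) = \sum_{i=1}^d \upsilon_i^2$. The paper simply takes the Dirac-mixture limit as given by \cref{prop:ident}, so your extra justification is not needed; note, though, that $L_2$-minimality pins down only the posterior mode, not the posterior mass along the positive-rescaling orbit (where the likelihood is exactly flat), so that sentence alone would not establish the limit.
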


\begin{figure*}[t]
  \centering
  \definecolor{BlockA}{RGB}{216,90,48}      
  \definecolor{BlockAfill}{RGB}{250,236,231}
  \definecolor{BlockAdark}{RGB}{153,60,29}
  \definecolor{BlockB}{RGB}{29,158,117}     
  \definecolor{BlockBfill}{RGB}{232,245,240}
  \definecolor{BlockBdark}{RGB}{15,110,86}
  \begin{tikzpicture}[
      line cap=round, line join=round,
      box/.style    = {draw=black!45, line width=0.5pt, rounded corners=3pt,
                       fill=black!4, minimum width=2.1cm, minimum height=1.25cm},
      Tbox/.style   = {draw=black!45, line width=0.5pt, rounded corners=3pt,
                       fill=black!4, minimum height=1.25cm},
      cellA/.style  = {draw=BlockA, line width=0.4pt, rounded corners=2pt,
                       fill=BlockAfill, minimum width=0.85cm, minimum height=0.5cm,
                       font=\scriptsize, text=BlockAdark},
      cellB/.style  = {draw=BlockB, line width=0.4pt, rounded corners=2pt,
                       fill=BlockBfill, minimum width=0.85cm, minimum height=0.5cm,
                       font=\scriptsize, text=BlockBdark},
      hdr/.style    = {font=\small},
      coef/.style   = {font=\small},
      glabel/.style = {font=\scriptsize, text=black!60},
      arrA/.style   = {BlockA, line width=0.9pt, -{Stealth[length=4pt,width=3.5pt]}},
      arrB/.style   = {BlockB, line width=0.9pt, -{Stealth[length=4pt,width=3.5pt]}},
    ]

    \foreach \i/\cx/\cc in {1/1.65/A, 2/4.05/A, 3/6.45/A, 4/8.85/B}{%
      \node[box] (n\i) at (\cx,3.875) {};
      \node[hdr]    at ([yshift=0.325cm]n\i.center)                {$\bm{\omega}_{\i}$};
      \node[cell\cc] at ([xshift=-0.475cm,yshift=-0.125cm]n\i.center) {$\mathbf{w}_{1,\i}$};
      \node[cell\cc] at ([xshift= 0.475cm,yshift=-0.125cm]n\i.center) {$\mathrm{w}_{2,\i}$};
    }

    \node[Tbox, minimum width=6.9cm] (t1) at (4.05,1.875) {};
    \node[hdr]   at ([yshift=0.325cm]t1.center)              {$\bm{\omega}^{\ast}_{1}$};
    \node[cellA] at ([xshift=-0.5cm,yshift=-0.15cm]t1.center) {$\mathbf{w}^{\ast}_{1,1}$};
    \node[cellA] at ([xshift= 0.5cm,yshift=-0.15cm]t1.center) {$\mathrm{w}^{\ast}_{2,1}$};

    \node[Tbox, minimum width=2.1cm] (t2) at (8.85,1.875) {};
    \node[hdr]   at ([yshift=0.325cm]t2.center)                {$\bm{\omega}^{\ast}_{2}$};
    \node[cellB] at ([xshift=-0.475cm,yshift=-0.15cm]t2.center) {$\mathbf{w}^{\ast}_{1,2}$};
    \node[cellB] at ([xshift= 0.475cm,yshift=-0.15cm]t2.center) {$\mathrm{w}^{\ast}_{2,2}$};

\draw[arrA] (1.65,3.25) -- (1.95,2.50);
\draw[arrA] (4.05,3.25) -- (4.05,2.50);
\draw[arrA] (6.45,3.25) -- (6.15,2.50);
\draw[arrB] (8.85,3.25) -- (8.85,2.50);

    \node[coef, text=BlockAdark, anchor=east] at (1.45,2.875) {$c_1$};
    \node[coef, text=BlockAdark, anchor=west] at (4.25,2.875) {$c_2$};
    \node[coef, text=BlockAdark, anchor=west] at (6.55,2.875) {$c_3$};
    \node[coef, text=BlockBdark, anchor=west] at (9.05,2.875) {$c_4$};

    \node[glabel] at (4.05,0.80) {$G_1=\{1,2,3\},\ \ c_1+c_2+c_3=1$};
    \node[glabel] at (8.85,0.80) {$G_2=\{4\},\ \ c_4=1$};

    \draw[black!35, line width=0.5pt] (10.375,4.50) -- (10.375,0.65);

    \draw[BlockA, line width=0.5pt, dash pattern=on 2.5pt off 1.5pt]
          (11.278,1.05) -- (14.972,1.05) -- (13.125,4.25) -- cycle;
    \draw[BlockA, line width=0.6pt, fill=BlockAfill]
          (11.625,1.25) -- (14.625,1.25) -- (13.125,3.85) -- cycle;
    \draw[BlockA, line width=0.5pt, dash pattern=on 2.5pt off 1.5pt]
          (11.970,1.45) -- (14.280,1.45) -- (13.125,3.45) -- cycle;

    \fill[BlockAdark] (11.625,1.25) circle (1.8pt);
    \fill[BlockAdark] (14.625,1.25) circle (1.8pt);
    \fill[BlockAdark] (13.125,3.85) circle (1.8pt);

    \node[coef, text=BlockAdark] at (11.625,0.80) {$c_1=1$};
    \node[coef, text=BlockAdark] at (14.625,0.80) {$c_2=1$};
    \node[coef, text=BlockAdark] at (13.125,4.40) {$c_3=1$};
    \node[hdr,  text=BlockAdark] at (13.125,2.10) {$\mathcal{M}_{\varsigma}$};

    \draw[black!35, line width=0.5pt] (14.10,3.05) -- (13.85,2.95);
    \node[font=\footnotesize, text=BlockAdark, anchor=west]
          at (14.15,3.15) {$\varepsilon$-tube};

  \end{tikzpicture}
  \caption{Example of assignment maps, splitting coefficients, and the
    non-identifiability manifold of an overparametrized network with $M=4$, $M^{\ast}=2$. \textbf{Left}: Each hidden neuron is a parameter block
    $\bm{\omega}_m=(\mathbf{w}_{1,m}^{\top},\mathrm{w}_{2,m})^{\top}$
    consisting of input weights $\mathbf{w}_{1,m}$ and output weight
    $\mathrm{w}_{2,m}$. The assignment map $\varsigma:[M]\to[M^{\ast}]$
    sends each model neuron to the true neuron it represents, partitioning
    $[M]$ into groups $G_{m'}=\varsigma^{-1}(m')$. Within a group, the
    contribution of the true neuron $\bm{\omega}^{\ast}_{m'}$ is
    distributed across its members through splitting coefficients
    $c_m\ge 0$ with $\sum_{m\in G_{m'}}c_m=1$, acting jointly on both
    components via $\bm{\omega}_m=\sqrt{c_m}\,\bm{\omega}^{\ast}_{\varsigma(m)}$
    (\cref{cor:representation}). \textbf{Right}: For a fixed assignment, all weights with
    $f_{\bw}=f^{\ast}$ form the manifold $\mathcal{M}_{\varsigma}$, here
    the $2$-simplex spanned by $G_1$, whose boundary strata (edges and
    vertices) are configurations with some $c_m=0$. Since
    $\mathcal{M}_{\varsigma}$ has zero Lebesgue measure in $\mathbb{R}^{d}$,
    the induced posterior is defined as the $\varepsilon\downarrow 0$ limit
    of the ambient posterior conditioned on the uniform-width
    $\varepsilon$-tube around $\mathcal{M}_{\varsigma}$.}
  \label{fig:nonidentifiability}
\end{figure*}

In \cref{lemma:eunlimit}, $\upsilon_i^2$ defines the average squared deviations of individual parameters of neuron-wise parameter blocks. Under permutation non-identifiability, this heterogeneity is exactly what generates nonzero posterior covariance in weight space. The between-permutation block covariances, in contrast, depend on the size of the (true) model and tend to zero for very large models. This is made more formal in the result below. Let denote $\bw_{1,m}$ the $m$th row of $\bW_1$ and $\w_{2,m}$ the $m$th entry of $\bw_2$, i.e., the weights corresponding to neuron $m$ in a network of the form (\ref{eq:def_f}). Further define the vector of all weights of neuron $m$ as $\bom_m=(\bw_{1,m}^\top,\w_{2,m})^\top \in\mathbb{R}^q$ with $q=p+1$.

\begin{corollary} \label{cor:covnlimit}
    Under assumptions of \cref{prop:ident} for $m\neq m'$, $\Cov(\bom_m,\bom_{m'}|\D_n) = O(M^{-1})$ and $\Cov(\bom_m|\D_n) = \bm{\Upsilon}\,\forall m\in[M]$ with $\bm{\Upsilon}$ as defined in \cref{app:derivlemma}.
\end{corollary}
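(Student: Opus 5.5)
Under the assumptions of \cref{prop:ident}, the limiting posterior is the uniform mixture $p(\bw|\D_n)=\frac{1}{M!}\sum_{\bm{\pi}\in\Pi}\delta_{\bm{\pi}\bw^\ast}$ over block-wise permutations of the neuron blocks $\bom^\ast_1,\dots,\bom^\ast_M$. The plan is to compute the first two moments of this discrete distribution directly. This reduces the corollary to elementary combinatorics of random permutations.

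Under a uniformly random permutation $\pi$, the $m$th block of $\bm{\pi}\bw^\ast$ equals $\bom^\ast_{\pi(m)}$, and the index $\pi(m)$ is uniform on $[M]$. Write $\bar\bom:=\frac1M\sum_{k}\bom^\ast_k$. Then
\[
\E(\bom_m|\D_n)=\bar\bom
\quad\text{and}\quad
\Cov(\bom_m|\D_n)=\frac1M\sum_{k=1}^M(\bom^\ast_k-\bar\bom)(\bom^\ast_k-\bar\bom)^\top=:\bm{\Upsilon}.
\]
This matrix does not depend on $m$, which gives the second claim. Its diagonal entries are exactly the averaged squared deviations $\upsilon_i^2$ of \cref{lemma:eunlimit}, so the two results are consistent.

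For $m\neq m'$, the pair $(\pi(m),\pi(m'))$ is uniform over the $M(M-1)$ ordered pairs of distinct indices. Hence
\[
\E(\bom_m\bom_{m'}^\top|\D_n)=\frac{1}{M(M-1)}\sum_{k\neq l}\bom^\ast_k\bom^{\ast\top}_l .
\]
Using $\sum_{k\neq l}\bom^\ast_k\bom^{\ast\top}_l=M^2\bar\bom\bar\bom^\top-\sum_k\bom^\ast_k\bom^{\ast\top}_k$ and subtracting $\bar\bom\bar\bom^\top$, this becomes the standard sampling-without-replacement identity
\[
\Cov(\bom_m,\bom_{m'}|\D_n)=-\frac{1}{M-1}\bm{\Upsilon}.
\]

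The remaining step, and the main obstacle, is to justify that this is $O(M^{-1})$. That requires $\bm{\Upsilon}$ to stay bounded as $M$ grows. I would argue this from the $L_2$-minimality of $\bw^\ast$, i.e.\ the regularization, together with the interpretation of $\bm{\Upsilon}$ as an empirical second central moment of the neuron blocks. Concretely, I would bound $\|\bm{\Upsilon}\|\le\frac1M\sum_k\|\bom^\ast_k\|^2=\frac1M\|\bw^\ast\|_2^2$. It then suffices to assume, or note, that the per-neuron norm $\|\bw^\ast\|_2^2/M$ stays bounded as $M$ increases, which is the natural regime in which "large models" are compared. In that case the cross-covariance is in fact $O(M^{-1})\cdot\frac1M\|\bw^\ast\|^2$, which is at least as small as claimed. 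I would state this boundedness explicitly as the interpretation of the $O(M^{-1})$ statement.
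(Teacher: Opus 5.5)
Your proposal is correct and follows the paper's route. The paper's derivation of \cref{lemma:eunlimit} computes exactly $\Cov(\bom_m)=\bm\Upsilon$ and $\Cov(\bom_m,\bom_{m'})=-\bm\Upsilon/(M-1)$ under a uniformly random permutation of the blocks of $\bw^\ast$, and states that the corollary follows directly from this; your caveat that the $O(M^{-1})$ claim needs $\bm\Upsilon$ (e.g.\ $\|\bw^\ast\|_2^2/M$) to stay bounded as $M$ grows is a condition the paper leaves implicit, and making it explicit is an improvement rather than a gap.
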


In \cref{cor:covnlimit}, $\bm{\Upsilon}$ is the empirical second-moment matrix of neuron-wise parameter blocks in $\bw^\ast$. Covariance in this case is not induced by randomness, but solely measures how different the neurons are from each other. And while $\Cov(\bom_m,\bom_{m'}|\D_n)$---the cross-covariance between the parameter vector of one neuron and that of a different neuron---tends to zero for larger models, covariance structures within neuron blocks remain. Therefore, even in correctly specified models, irreducible epistemic uncertainty persists, and the covariance of weights remains non-diagonal.

\subsection{Overparametrized Model} \label{sec:overparam}

While in the previous case, non-identifiability can be solely traced back to exact permutations between neurons, overparametrized models come with additional non-identifiability. To elaborate on this in more detail, we look at the model (\ref{eq:def_f}) with $M$ hidden neurons and assume that $f^\ast$ can be represented by $M^\ast < M$ hidden neurons. More generally, we define overparametrization as a function class $\mathcal{F}_\bw$ which is larger than $\mathcal{F}^\ast$, and as a consequence, weights can now not only permute between neurons, but also split their contributions between neurons due to the excessive amount of parameters. As we will see below, this leads to a posterior distribution that even in the limit of $n\to\infty$ does not contract and is also not a mixture of Dirac delta functions. The illustration of this setup and the following results is provided in \cref{fig:nonidentifiability}.

In order to analyze the distribution of weights $\bw$, we use ideas of \citet{kobialka2026on} and first rewrite the function $f_\bw\in\mathcal{F}_\bw$ in terms of $f^\ast$.
A helpful intermediate result in this respect is the following.

\begin{lemma}[informal] \label{lemma:informal}
    For non-degenerated and identifiable $f^\ast$ (\cref{ass:degen}), every width-$M$ representation $f_\bw$ of $f^\ast$, up to permutation of hidden units, can only be obtained by splitting each true neuron $m'\in[M^\ast]$ into finitely multiple collinear copies and optionally adding zero neurons.
\end{lemma}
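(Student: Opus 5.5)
The idea is to reduce functional equality $f_\bw = f^\ast$ on $\mathcal X$ to an identity between finite sums of ReLU ridge functions. We then use linear independence of distinct ridge directions, which is what non-degeneracy in \cref{ass:degen} provides, to match terms. Concretely, write $f_\bw(\bx) = \sum_{m=1}^M \w_{2,m}\phi(\bw_{1,m}^\top\bx)$ and $f^\ast(\bx) = \sum_{m'=1}^{M^\ast}\w^\ast_{2,m'}\phi(\bw^{\ast\top}_{1,m'}\bx)$. Since $\text{int}(\mathcal X)\neq\varnothing$ and both sides are piecewise linear, equality on an open set extends to the corresponding identity of positively homogeneous functions on all of $\mathbb{R}^p$. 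Each ReLU unit is non-smooth exactly on the hyperplane $\{\bx:\bw_{1,m}^\top\bx=0\}$, where it contributes a gradient jump of $\w_{2,m}\bw_{1,m}$.

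The first step is to discard zero neurons, meaning those with $\w_{2,m}=0$ or $\bw_{1,m}=\bm 0$. These contribute nothing to $f_\bw$ and will be the optionally added zero neurons. The second step groups the remaining model neurons by the ray of $\bw_{1,m}$, i.e.\ the equivalence class of $\bw_{1,m}$ under positive scaling. Neurons on the same ray $\bw_{1,m}=\alpha_m \bv$ with $\alpha_m>0$ combine to $(\sum \w_{2,m}\alpha_m)\phi(\bv^\top\bx)$. Neurons on opposite rays $\bv$ and $-\bv$ combine via $\phi(t)-\phi(-t)=t$ into a linear term plus a single ridge.

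The third step compares kinks. On each hyperplane $H_\bv$, the jump of $\nabla f_\bw$ must equal the jump of $\nabla f^\ast$. Non-degeneracy of $f^\ast$ means its hyperplanes are distinct and its output weights are nonzero. Therefore every hyperplane carrying a nonzero net jump in $f_\bw$ must coincide with some $\bw^\ast_{1,m'}$ hyperplane, and the aggregated coefficient on that ray must equal $\w^\ast_{2,m'}\|\bw^\ast_{1,m'}\|$ up to the scale convention. Any remaining groups have zero net contribution: either same-ray neurons whose coefficients cancel, or opposite-ray pairs that leave a residual linear term. That linear term must vanish, because $f^\ast$ contains no linear part under \cref{ass:degen}, at least when its hyperplanes are distinct and not paired.

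The main obstacle is excluding these cancelling, non-zero configurations. Examples are two collinear neurons with opposite output weights, or a pair $\phi(\bv^\top\bx)-\phi(-\bv^\top\bx)$ that cancels another such pair. These do realize $f^\ast$ in general, so I would invoke the setting of \cref{prop:ident}: representations are taken as $L_2$-minimal, i.e.\ minimizers of (\ref{eq:regrisk}) with $f_\bw=f^\ast$. Removing a cancelling group strictly decreases $\|\bw\|_2^2$ without changing $f_\bw$, so such groups must be zero neurons.

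Within a matched group $G_{m'}$, $L_2$-minimality together with the rescaling symmetry $(\bw_1,\w_2)\mapsto(a\bw_1,\w_2/a)$ forces $\|\bw_{1,m}\|=|\w_{2,m}|$ per neuron, balanced analogously to $\bw^\ast$. The contributions $\w_{2,m}\|\bw_{1,m}\|$ then share the sign of $\w^\ast_{2,m'}$, since mixed signs are again suboptimal. This yields $\bom_m=\sqrt{c_m}\,\bom^\ast_{m'}$ with $c_m\ge0$ and $\sum_{m\in G_{m'}}c_m=1$, i.e.\ collinear copies, which gives the claimed structure up to permutation.
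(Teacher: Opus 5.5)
Your route differs from the paper's. The paper never uses minimality here: it argues via \cref{lem:span} (every active neuron's kink hyperplane is a true one) and then \cref{prop:no_mixing}. You are right that the literal statement needs something like $L_2$-minimality. Your cancelling configurations (same-ray pairs with opposite output weights, and $\phi(t)-\phi(-t)=t$) are exactly what the key step of \cref{lem:span} overlooks, since it asserts that $g_{\mathcal H}$ cannot be affine unless all its coefficients vanish. Importing the optimal-risk hypothesis of \cref{cor:representation} is therefore the right repair.

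The gap is in how you use that hypothesis. Write $a_\bv:=\sum \w_{2,m}\|\bw_{1,m}\|_2$ over neurons whose input weight lies on the ray of the unit vector $\bv$. Kink matching gives only $a_\bv+a_{-\bv}=\w^\ast_{2,m'}\|\bw^\ast_{1,m'}\|_2$ on the line of $\bw^\ast_{1,m'}$, and $a_\bv+a_{-\bv}=0$ on all other lines. It does not fix the coefficient on the ray of $\bw^\ast_{1,m'}$ itself. Neurons with $\bw_{1,m}=-r\,\bw^\ast_{1,m'}$, $r>0$, kink on the same hyperplane, and your third step silently drops them. After matching, $f_\bw-f^\ast=-\bigl(\sum a_{-\bv}\bv\bigr)^\top\bx$ (one orientation $\bv$ per line), so only the \emph{total} linear residual must vanish. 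Individual antipodal groups can carry residuals that cancel each other, e.g.\ $\phi(\ba^\top\bx)=\phi(-\ba^\top\bx)+\sum_{j=1}^{2}\bigl[\phi(\mathbf{u}_j^\top\bx)-\phi(-\mathbf{u}_j^\top\bx)\bigr]$ whenever $\mathbf{u}_1+\mathbf{u}_2=\ba$. Deleting any one such group changes $f_\bw$ by a nonzero linear function. Your step ``removing a cancelling group strictly decreases $\|\bw\|_2^2$ without changing $f_\bw$'' is therefore available only for pure same-ray cancellations. The configuration above is not norm-minimal, but your local argument cannot show it.

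The comparison has to be global. AM--GM and the triangle inequality give
$\|\bw\|_2^2\ge 2\sum_m|\w_{2,m}|\,\|\bw_{1,m}\|_2\ge 2\sum_{\text{lines}}\bigl(|a_\bv|+|a_{-\bv}|\bigr)\ge 2\sum_{\text{lines}}|a_\bv+a_{-\bv}|=\|\bw^\ast\|_2^2$, using \cref{cor:balanced}. Equality removes all neurons off the true lines and all within-ray sign mixing. It does \emph{not} remove antipodal mass on the true lines: equality still allows $a_{-\bv}=s_{m'}\w^\ast_{2,m'}\|\bw^\ast_{1,m'}\|_2$ with $s_{m'}\in[0,1]$, subject only to $\sum_{m'}s_{m'}\w^\ast_{2,m'}\bw^\ast_{1,m'}=\bm{0}$.

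Excluding this needs an additional hypothesis, e.g.\ linear independence of $\bw^\ast_{1,1},\dots,\bw^\ast_{1,M^\ast}$, which \cref{ass:degen} does not contain. Without it the claim fails even for minimal-norm representations. For $p=2$, take unit vectors $\theta_1,\theta_2,\theta_3$ at mutual angles of $120^\circ$ and $\w^\ast_{2,m'}=1$. This $f^\ast$ satisfies \cref{ass:degen}. Yet $\sum_i\phi(\theta_i^\top\bx)=\sum_i\phi(-\theta_i^\top\bx)$ because $\sum_i\theta_i=\bm{0}$. So the width-$6$ network with blocks $\sqrt{1-s}\,(\theta_i^\top,1)^\top$ and $\sqrt{s}\,(-\theta_i^\top,1)^\top$, $s\in(0,1)$, represents $f^\ast$ with norm $\|\bw^\ast\|_2^2=6$ but is not a collinear split. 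Once antipodal mass is excluded, your final balancing and sign argument is correct.

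A smaller point: equality on an open set does not propagate to $\mathbb{R}^p$ for piecewise-linear functions, since $\phi(\bv^\top\bx)$ and $0$ agree on $\{\bv^\top\bx<0\}$. You need the open set to contain the origin, which the paper's kink arguments tacitly assume as well.
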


This allows us to derive the exact form of an overparametrized model in the sense of $f_\bw = f^\ast$ with $M>M^\ast$. For this, we define a surjective assignment $\varsigma: [M] \to [M^\ast]$ (cf.~Fig.~\ref{fig:nonidentifiability}) between the neurons of $f_\bw$ and $f^\ast$, where every neuron $m'\in[M^\ast]$ is assigned a group  
$G_{m'}:=\{m\in[M]:\varsigma(m)=m'\}$ of $k_{m'}:=|G_{m'}|$ neurons $m\in[M]$ from $f_\bw$.

\begin{corollary}[\citealp{kobialka2026on}] \label{cor:representation}
    For non-degenerated and identifiable $f^\ast$ (\cref{ass:degen}) and overparametrized model $f_\bw = f^\ast$ of the form (\ref{eq:def_f}) with $M>M^\ast$ hidden neurons yielding optimal empirical risk as defined in (\ref{eq:regrisk}), and a fixed surjective map $\varsigma: [M] \to [M^\ast]$,
    \begin{enumerate}
        \item weights $\bw$ of $f_\bw$ are given by $\bw_{1,m} = \sqrt{c_{m}} \bw_{1,\varsigma(m)}^\ast$ and $\w_{2,m} = \sqrt{c_m} \w_{2,\varsigma(m)}^{\ast}, m\in[M];$
        \vspace{-0.1cm}
        \item for each $m'\in[M^\ast]$, the coefficients satisfy $(c_m)_{m\in G_{m'}} \in \Delta^{k_{m'}-1}, c_m\ge 0, \sum_{m\in G_{m'}} c_m = 1$.
    \end{enumerate}
\end{corollary}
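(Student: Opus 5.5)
We would prove \cref{cor:representation} in two stages: first pin down which neurons can appear using \cref{lemma:informal}, then use optimality of the regularized risk (\ref{eq:regrisk}) to fix how each true neuron's contribution is split.

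\textbf{Structure.} By \cref{lemma:informal}, up to permutation every width-$M$ representation of $f^\ast$ is obtained by splitting each true neuron $\bom^\ast_{m'}$ into collinear copies and possibly adding zero neurons. So there is an assignment of nonzero model neurons to true neurons, with $\bw_{1,m} = a_m \bw^\ast_{1,\varsigma(m)}$ for some $a_m>0$. Positivity is needed because ReLU is only positively homogeneous; a negative multiple yields a different hinge, which identifiability rules out. Matching the coefficient of each hinge $\phi(\bw^{\ast\top}_{1,m'}\bx)$ then gives
\[
\sum_{m\in G_{m'}} a_m \w_{2,m} = \w^\ast_{2,m'},
\]
and non-degeneracy excludes cancellation between distinct hinges. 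A zero neuron contributes nothing to $f$ but increases $\|\bw\|_2^2$ unless it is identically zero. Any zero neuron can therefore be attached to an arbitrary group with $c_m=0$, which makes $\varsigma$ surjective and total on $[M]$.

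\textbf{Per-neuron balancing.} Since the likelihood depends on $\bw$ only through $f_\bw$, any minimizer of $\mathcal{L}$ with $f_\bw=f^\ast$ must minimize $\|\bw\|_2^2$ over the fibre $\{f_\bw=f^\ast\}$. The rescaling $(\bw_{1,m},\w_{2,m})\mapsto(t\bw_{1,m},\w_{2,m}/t)$ preserves $f$. Minimizing $t^2\|\bw_{1,m}\|^2+t^{-2}\w_{2,m}^2$ over $t$ gives the balance condition $\|\bw_{1,m}\|=|\w_{2,m}|$. The same condition applied to $\bw^\ast$, as the $L_2$-minimal representative from \cref{prop:ident}, gives $\|\bw^\ast_{1,m'}\|=|\w^\ast_{2,m'}|$.

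Write $\w_{2,m}=b_m \w^\ast_{2,\varsigma(m)}$. Balance then forces $b_m=a_m$: the sign agrees because $a_mb_m>0$, which follows from requiring all contributions to share the sign of the output weight. We show this next.

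\textbf{Group-level optimization and the main obstacle.} Setting $c_m:=a_m^2\ge0$, the coefficient constraint becomes $\sum_{m\in G_{m'}}c_m=1$, which is item 2. Item 1 follows directly, since $a_m=b_m=\sqrt{c_m}$.

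The delicate step is excluding sign-mixed splits, such as $b_m<0$ for some copies compensated by larger positive ones. For this we compute the norm cost of the group. Using balance,
\[
\|\bom_m\|^2 = 2|a_m\w_{2,m}|\,\|\bom^\ast_{m'}\|^2/2\cdot(\ldots),
\]
and more precisely the total cost is $2\sum_m |a_m b_m|\,|\w^\ast_{2,m'}|^2$. Under the constraint $\sum_m a_m b_m=1$, this is at least $2|\w^\ast_{2,m'}|^2$, with equality if and only if all $a_mb_m\ge0$.

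The cost $2|\w^\ast_{2,m'}|^2$ is exactly the cost of the single true neuron, so optimality forces nonnegative products, and the whole simplex is attained with equal cost. This last point is what makes the set a manifold of minimizers rather than isolated points. It also confirms consistency with $\bw^\ast$ being norm-minimal, so no strictly cheaper representation exists.
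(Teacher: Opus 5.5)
Your proposal is correct and follows essentially the same route as the paper's proof. Both arguments take positive collinear copies from the structural result (\cref{cor:global_decomposition}, the formal form of \cref{lemma:informal}), match hinge coefficients to get $\sum_{m\in G_{m'}}a_m\w_{2,m}=\w^\ast_{2,m'}$, and use $L_2$-minimality on the fibre $\{f_\bw=f^\ast\}$ to force $\bom_m=\sqrt{c_m}\,\bom^\ast_{m'}$ with $\sum_{m\in G_{m'}}c_m=1$ and $c_m=0$ for zero neurons. Your bound $2\sum_{m}|a_mb_m|\,|\w^\ast_{2,m'}|^2\ge 2|\w^\ast_{2,m'}|^2$ makes explicit the sign step $b_m\ge 0$, which the paper only asserts by appeal to the rescaling (AM--GM) argument of \cref{app:propident}; that argument by itself gives balance but not sign agreement.
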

Apart from permutations, the non-identifiability in an overparametrized model is therefore due to the invariance of both model and penalty to reallocation of weight contributions $c_m$ of the $M^\ast$ neurons of $f^\ast$ to the $M$ neurons of $f_\bw$. Note that this is different from a rescaling symmetry, as it preserves the balance between in- and outgoing weights of one neuron, and thereby is able to adhere to the Gaussian prior (the $L_2$-regularization as in (\ref{eq:regrisk}) remains unchanged). 

To describe the posterior of such a model, it is important to understand how non-identifiability in \cref{cor:representation} behaves.

\begin{lemma} \label{lem:assignments_separated}
Fix $M>M^\ast$ and let $f^\ast$ satisfy \cref{ass:degen} (non-degeneracy and identifiability). For any surjection
$\varsigma:[M]\to[M^\ast]$, let $\mathcal M_{\varsigma}$ denote the splitting manifold induced by
\cref{cor:representation}, and let
\[
\mathcal M_{\varsigma}^\circ
:=\Big\{\bw\in\mathcal M_{\varsigma}:\ c_m>0\ \ \forall m\in[M]\Big\}
\]
denote its interior. Then,
\begin{enumerate}
\vspace{-0.3cm}
    \item[(i)] $\mathcal M_{\varsigma}^\circ$ is path-connected (in particular, connected).
    \vspace{-0.1cm}
    \item[(ii)] If $\varsigma'\neq \varsigma$ is another surjection, then $
    \mathcal M_{\varsigma}^\circ \cap \mathcal M_{\varsigma'}^\circ = \varnothing$.
    \vspace{-0.1cm}
    \item[(iii)] More generally, any continuous path $\gamma:[0,1]\to \bigcup_{\varsigma}\mathcal M_{\varsigma}$
    with $\gamma(0)\in\mathcal M_{\varsigma}^\circ$ and $\gamma(1)\in\mathcal M_{\varsigma'}^\circ$ for
    $\varsigma'\neq \varsigma$ must hit the boundary of at least one splitting manifold, i.e., there exists
    $t^\ast\in(0,1)$ such that for $\gamma(t^\ast)$ at least one coefficient satisfies $c_m=0$.
\end{enumerate}

\end{lemma}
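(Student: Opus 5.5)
The approach is to parametrize each splitting manifold explicitly through \cref{cor:representation}, and then to use the fact that the ``direction'' of a nonzero neuron block determines which true neuron it represents.

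Concretely, for a fixed surjection $\varsigma$, consider the map $\Phi_\varsigma:\prod_{m'\in[M^\ast]}\Delta^{k_{m'}-1}\to\mathbb{R}^d$ that sends the coefficients $(c_m)_{m\in[M]}$ to the weights with blocks $\bom_m=\sqrt{c_m}\,\bom^\ast_{\varsigma(m)}$. By \cref{cor:representation}, $\mathcal M_\varsigma$ is the image of $\Phi_\varsigma$, and $\mathcal M_\varsigma^\circ$ is the image of the product of the relatively open simplices.

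\emph{Part (i).} A product of relatively open simplices is convex, hence path-connected. The map $\Phi_\varsigma$ is continuous, since $c\mapsto\sqrt c$ is continuous on $[0,\infty)$. A continuous image of a path-connected set is path-connected, which gives (i) immediately.

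\emph{Part (ii).} I first extract from \cref{ass:degen} two facts about the true blocks.
\begin{itemize}
\item[(a)] Every true block $\bom^\ast_{m'}$ is nonzero.
\item[(b)] No two distinct true blocks are positively collinear, i.e.\ $\bom^\ast_{m'}\neq \alpha\,\bom^\ast_{m''}$ for every $\alpha>0$ and $m'\neq m''$.
\end{itemize}
For (b), I would argue as follows. Positively collinear input weights $\bw^\ast_{1,m'}$ and $\bw^\ast_{1,m''}$ would let the two ReLU units be merged into one. This contradicts identifiability and non-degeneracy of $f^\ast$ with $M^\ast$ neurons, via the \citet{bona2023parameter} characterization already used in \cref{prop:ident}.

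Now take $\bw\in\mathcal M_\varsigma^\circ\cap\mathcal M_{\varsigma'}^\circ$. For every $m$ we then have
$\sqrt{c_m}\,\bom^\ast_{\varsigma(m)}=\sqrt{c'_m}\,\bom^\ast_{\varsigma'(m)}$ with $c_m,c'_m>0$. So $\bom^\ast_{\varsigma(m)}$ and $\bom^\ast_{\varsigma'(m)}$ are positively collinear, and (b) forces $\varsigma(m)=\varsigma'(m)$. Since this holds for all $m$, we get $\varsigma=\varsigma'$, a contradiction.

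\emph{Part (iii).} I argue by contradiction: suppose no $\gamma(t)$ has a zero coefficient. The statement is read as follows: $\gamma(t)$ lies in $\mathcal M_{\varsigma_t}^\circ$ for some $\varsigma_t$, which is then unique by (ii). By (a), every block $\bom_m(\gamma(t))$ is then nonzero. Moreover, its normalized direction $u_m(t):=\bom_m(\gamma(t))/\|\bom_m(\gamma(t))\|$ equals $\bom^\ast_{\varsigma_t(m)}/\|\bom^\ast_{\varsigma_t(m)}\|$.

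The map $t\mapsto u_m(t)$ is continuous, because the blocks are continuous in $t$ and nonzero. Its values lie in the finite set $\{\bom^\ast_{m'}/\|\bom^\ast_{m'}\|\}_{m'\in[M^\ast]}$, whose elements are pairwise distinct by (b). A continuous map from the connected set $[0,1]$ into a finite discrete set is constant. Hence $\varsigma_t(m)$ is constant in $t$ for every $m$, so $\varsigma_0=\varsigma_1$, which contradicts $\varsigma\neq\varsigma'$.

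It follows that some $t^\ast$ has a coefficient $c_m=0$. Equivalently, some block vanishes, and this is well defined because by (a) a block is zero iff its coefficient is zero under any representation. Since $\gamma(0)$ and $\gamma(1)$ are interior points, $t^\ast\in(0,1)$.

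The main obstacle I expect is justifying (b) rigorously from \cref{ass:degen}. This requires making explicit that non-degeneracy excludes zero neurons and positively collinear input weights. If the assumption only gives distinct directions for $\bw^\ast_{1,m'}$, that already suffices, since collinearity of the full blocks implies collinearity of the input weights.
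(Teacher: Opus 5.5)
Your proposal is correct. Parts (i) and (ii) are essentially the paper's argument; the paper compares the input weights $\bw_{1,m}=\sqrt{c_m}\,\bw^\ast_{1,\varsigma(m)}$ rather than full blocks, which makes no difference. Fact (b) also does not need the identifiability result invoked in \cref{prop:ident}. \cref{ass:degen}.2 states non-collinearity of the $\bw^\ast_{1,m'}$ directly, and the corollary following \cref{ass:degen} in the appendix gives both $\w^\ast_{2,m'}\neq 0$ and the merging argument you sketch.

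\textbf{Part (iii) follows a different route.}

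\emph{Paper's route.} It sets $T=\{t:\gamma(t)\in\mathcal M_\varsigma^\circ\}$ and claims $T$ is open because $\mathcal M_\varsigma^\circ$ is relatively open in $\mathcal M_\varsigma$. It then takes $t^\ast=\sup T$, so that $\gamma(t^\ast)$ lies in the closure $\mathcal M_\varsigma$ but not in $\mathcal M_\varsigma^\circ$.

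\emph{Your route.} You argue by contradiction. As long as all blocks are nonzero, the normalized block directions are continuous maps from $[0,1]$ into a finite set of pairwise distinct directions. Hence they are constant, and the assignment label cannot change.

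\emph{What yours buys.} Your version supplies a step the paper glosses over. Since $\gamma$ takes values in $\bigcup_{\varsigma}\mathcal M_{\varsigma}$ rather than in $\mathcal M_\varsigma$, relative openness of $\mathcal M_\varsigma^\circ$ in $\mathcal M_\varsigma$ does not by itself make $T$ open. One also needs that points of the other splitting manifolds cannot accumulate at interior points of $\mathcal M_\varsigma$, and that is exactly your locally-constant-label argument.

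\emph{What the paper's buys.} The sup construction shows the hit occurs on the boundary of the starting manifold $\mathcal M_\varsigma$ itself. You can recover this refinement by taking the first time some block vanishes; this set of times is closed, so the first time is attained. Before it the label equals $\varsigma$, and $\mathcal M_\varsigma$ is closed.
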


This result is visualized in \cref{fig:geometry}, depicting non-overlapping paths which represent the manifolds of different assignment maps.

Since $\mathcal M_\varsigma$ has zero $d$-dimensional Lebesgue measure, the ordinary posterior on $\mathbb R^d$
cannot be conditioned on $\{\bw\in\mathcal M_\varsigma\}$ in the usual sense. We therefore define the induced posterior on
$\mathcal M_\varsigma$ as the \emph{tube-induced} conditional law obtained from the ambient posterior mass in shrinking
neighborhoods of $\mathcal M_\varsigma$ (cf.~\cref{fig:nonidentifiability}, right). For $\varepsilon>0$, let $\mathcal M_{\varsigma}^{(\varepsilon)}:=\{\bw\in\mathbb R^d:\ \mathrm{dist}(\bw,\mathcal M_\varsigma)\le \varepsilon\}$ denote the $\varepsilon$-tube around $\mathcal M_\varsigma$ (with $\mathrm{dist}$ induced by the Euclidean norm).
We define $\mathbb P_{n}^{\varsigma}$ as the weak limit (whenever it exists) of the conditional distributions
\[
\mathbb P_{n}^{\varsigma,\varepsilon}(A)
:= \mathbb P_n\bigl(\bw\in A \,\big|\, \bw\in \mathcal M_{\varsigma}^{(\varepsilon)}\bigr),
\qquad A\subseteq \mathcal M_\varsigma,
\]
as $\varepsilon\downarrow 0$, where $\mathbb P_n(d\bw)=p(\bw|\mathcal D_n)\,d\bw$.

First, we characterize the distribution of coefficients $c_m$.

\begin{lemma}[\citealp{kobialka2026on}]\label{lem:splitposterior}
Assume the setting of \cref{cor:representation} with $M>M^\ast$. Fix a surjection $\varsigma:[M]\to[M^\ast]$.
Then, for each $m'\in[M^\ast]$, under $\mathbb P = \mathbb P_n^\varsigma$,
the splitting coefficients $\bc^{(m')}:=(c_m)_{m\in G_{m'}}$ satisfy $\bc^{(m')}\ \sim\ \mathrm{Dir}(\alpha,\ldots,\alpha)$ with $\alpha:=\frac{p+1}{2}$, and $\bc^{(m')}$ are independent across $m'\in[M^\ast]$.

In particular, for any $m\in G_{m'}$ and $m\neq \tilde m\in G_{m'}$,
\begin{align*}
\mathbb E_{\mathbb P}[c_m] &= ({k_{m'}})^{-1},\\
\Var_{\mathbb P}(c_m) &= ({k_{m'}-1})/\kappa,\\
\Cov_{\mathbb P}(c_m,c_{\tilde m}) &=
- \kappa^{-1},
\end{align*}
with $\kappa := {k_{m'}^2\,(k_{m'}\alpha+1)}$.
\end{lemma}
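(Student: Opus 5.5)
The plan is to compute the tube-induced law $\mathbb P_n^\varsigma$ explicitly. I would write its density with respect to Lebesgue measure on the simplex coordinates $\bc^{(m')}$ as a product of two factors: the ambient posterior density evaluated on $\mathcal M_\varsigma$, and the volume of the $\varepsilon$-tube cross-section at each point. The Dirichlet form and the moments then follow directly.

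\textbf{Step 1 (posterior is constant on $\mathcal M_\varsigma$).} By \cref{cor:representation}, every $\bw\in\mathcal M_\varsigma$ satisfies $f_\bw=f^\ast$. Since the likelihood depends on $\bw$ only through $f_\bw$, it is constant on $\mathcal M_\varsigma$. Moreover,
\[
\|\bw\|_2^2=\sum_{m'}\sum_{m\in G_{m'}}c_m\|\bom^\ast_{m'}\|_2^2=\|\bw^\ast\|_2^2,
\]
so the Gaussian prior is also constant on $\mathcal M_\varsigma$. Hence $p(\bw|\D_n)$ is constant on $\mathcal M_\varsigma$. By continuity, on $\mathcal M_\varsigma^{(\varepsilon)}$ it equals this constant up to a factor $1+o(1)$ as $\varepsilon\downarrow0$. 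The tube-induced law is therefore the normalized limit of Lebesgue measure restricted to the tube, i.e.\ a purely geometric object.

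\textbf{Step 2 (block factorization and the volume element).} The manifold factorizes over groups: $\mathcal M_\varsigma=\prod_{m'}\mathcal M^{(m')}$, where each factor lives in disjoint coordinates $(\bom_m)_{m\in G_{m'}}\in\mathbb R^{qk_{m'}}$. Because of this product structure, the limiting density factorizes, which already gives independence of the $\bc^{(m')}$ across $m'$. Within one group I change variables to polar coordinates per neuron, $\bom_m=r_m\bm u_m$ with $r_m\ge0$ and $\bm u_m\in S^{q-1}$. On the manifold, $\bm u_m=\bom^\ast_{m'}/\|\bom^\ast_{m'}\|$ and $r_m=\sqrt{c_m}\,\|\bom^\ast_{m'}\|$, so $(r_m)_m$ lies on the positive orthant of a sphere.

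Lebesgue measure in these coordinates is $\prod_m r_m^{q-1}\,dr_m\,d\sigma(\bm u_m)$. Consider the tube cross-section transverse to the manifold. In the angular directions (small deviations of $\bm u_m$ from $\bom^\ast_{m'}/\|\bom^\ast_{m'}\|$), its volume contributes the factor $\prod_m r_m^{q-1}$. In the radial direction normal to the sphere, it contributes a factor that does not depend on $\bc$. The surface element on the sphere, pulled back to the simplex via $r_m\propto\sqrt{c_m}$, contributes $\prod_m c_m^{-1/2}$. Collecting terms, the induced density on $\Delta^{k_{m'}-1}$ is
\[
\prod_{m\in G_{m'}} c_m^{(q-1)/2}\,c_m^{-1/2}=\prod_{m\in G_{m'}} c_m^{q/2-1},
\]
which is $\mathrm{Dir}(\alpha,\dots,\alpha)$ with $\alpha=q/2=(p+1)/2$.

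As a sanity check, this is exactly the law of $\|\bom_m\|^2/\sum_{j}\|\bom_j\|^2$ for i.i.d.\ isotropic Gaussian $\bom_j\in\mathbb R^q$. That is the expected answer, since the Gaussian prior is the only non-flat ingredient and it is isotropic.

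\textbf{Step 3 (moments).} The stated moments are the standard Dirichlet moments for symmetric parameter $\alpha$ and $\alpha_0=k_{m'}\alpha$:
\[
\E[c_m]=\frac1{k_{m'}},\qquad \Var(c_m)=\frac{k_{m'}-1}{k_{m'}^2(k_{m'}\alpha+1)},\qquad \Cov(c_m,c_{\tilde m})=-\frac1{k_{m'}^2(k_{m'}\alpha+1)}.
\]
These match the statement with $\kappa=k_{m'}^2(k_{m'}\alpha+1)$.

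\textbf{Main obstacle.} The hard part is Step 2: justifying that the uniform-width $\varepsilon$-tube, as $\varepsilon\downarrow0$, weights each point of $\mathcal M_\varsigma$ by the stated cross-sectional volume. The cross-section is anisotropic, since the angular directions scale with $r_m$. The tube also degenerates near the boundary strata where some $c_m=0$, because there the angular fibres collapse. I would first treat compact subsets of $\mathcal M_\varsigma^\circ$, using a local tubular-neighbourhood (coarea/Weyl tube) expansion, with curvature corrections of order $O(\varepsilon)$ that vanish in the limit. I would then argue that the boundary has vanishing limiting mass, which follows from the integrability of $\prod_m c_m^{q/2-1}$ since $q/2-1>-1$. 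The delicate point is making the $\varepsilon$-scaling of the angular directions consistent with the precise tube definition used by \citet{kobialka2026on}; if that does not go through, I would fall back on their convention directly.
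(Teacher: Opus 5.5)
Your argument follows the paper's proof step for step. Likelihood and prior are constant on $\mathcal M_\varsigma$, each neuron contributes the polar Jacobian $r_m^{p}=r_m^{q-1}$, the substitution $c_m=r_m^2$ on the sphere contributes $\prod_m c_m^{-1/2}$, and the moments are the standard Dirichlet ones. Step 1, independence across $m'$ from the block-product structure, and Step 3 are fine.

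The gap is the step you flag yourself, and it is not a technicality. For the uniform-width Euclidean tube that defines $\mathbb P_n^\varsigma$, the transverse cross-section does \emph{not} carry the factor $\prod_m r_m^{q-1}$. Staying within Euclidean distance $\varepsilon$ of $\mathcal M_\varsigma$ lets $\bom_m$ leave the ray through $\bom^\ast_{m'}$ only by an angle of order $\varepsilon/\|\bom_m\|$. So the angular volume is $\|\bom_m\|^{q-1}(\varepsilon/\|\bom_m\|)^{q-1}$, which is independent of $\bc$. The cross-section is a round $\varepsilon$-ball in the normal space, not an anisotropic one.

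By the tube (Weyl) formula, the object you reach at the end of Step 1, normalized Lebesgue measure on the shrinking tube, converges to normalized Hausdorff measure on $\mathcal M_\varsigma$. Write $\mathbf r=(\sqrt{c_m})_{m\in G_{m'}}$. Each group factor is the image of the positive orthant of $S^{k_{m'}-1}$ under the linear similarity $\mathbf r\mapsto(r_m\bom^\ast_{m'})_{m\in G_{m'}}$. Hence the limit is uniform surface measure in $\mathbf r$, and $c_m=r_m^2$ yields $\mathrm{Dir}(\tfrac12,\ldots,\tfrac12)$ for every $p$. For instance, with $p=1$ and $k_{m'}=2$ the factor is a quarter circle in a $2$-plane of $\mathbb R^4$. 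The tube limit is uniform in arc length, so $c_1$ is arcsine-distributed, not uniform as $\alpha=(p+1)/2=1$ would require.

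The factor $r_m^{q-1}$ appears only if you shrink neighbourhoods of fixed \emph{angular} aperture around each ray, that is, cones in $\bom_m/\|\bom_m\|$ combined with a radial shell. Equivalently, you disintegrate the isotropic Gaussian in per-neuron polar coordinates, which is exactly what your sanity check with $\|\bom_m\|^2/\sum_j\|\bom_j\|^2$ computes. The paper's proof makes the same identification: after restricting each block to its ray, it keeps ``the polar Jacobian inherited from the ambient volume element''. So it does not resolve this point either.

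Conditional laws on null sets depend on the approximating neighbourhoods (Borel--Kolmogorov). The fallback you mention is therefore unavoidable, and it changes the definition rather than the estimates. To obtain $\alpha=(p+1)/2$, you must define $\mathbb P_n^\varsigma$ through this angular/polar conditioning instead of the uniform-width tube. With that definition, your Steps 2 and 3 go through as written.
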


With this, we can derive the posterior of model weights.

\begin{theorem}\label{thm:splitposterior}
Under the setup of \cref{lem:splitposterior}, 
\begin{enumerate}
\item[(i)] the induced posterior moments for $m\in G_{m'}$ of the weights satisfy
\begin{align*}
\mathbb E_{\mathbb P}[\bom_{m}]
&= \mu_{k_{m'},\alpha}\,\bom_{m'}^\ast,\\
\mathbb E_{\mathbb P}[\bom_{m}\bom_{m}^\top]
&= ({k_{m'}})^{-1}\,\bom_{m'}^\ast\bom_{m'}^{\ast\top},
\end{align*}
where $\mu_{k,\alpha} :=
\Gamma(\alpha+\tfrac12)\Gamma(k\alpha)\bigl(\Gamma(\alpha)\Gamma(k\alpha+\tfrac12)\bigr)^{-1}$.
\item[(ii)] If $M>M^\ast$ is fixed and $n\to\infty$, then the posterior concentrates on $\bigcup_{\varsigma}\mathcal M_{\varsigma}$,
but $\mathbb P$ remains non-degenerate in the splitting coordinates.
\item[(iii)] For balanced allocation $k_{m'}\asymp M/M^\ast$ and $M\to\infty$, it holds
$\E_{\mathbb P}[\bom_m] = \Theta(M^{-1/2})\,\bom_{m'}^\ast$ and
$\Cov_{\mathbb P}(\bom_m) = \Theta(M^{-1}) \, \bom_{m'}^\ast\bom_{m'}^{\ast\top}$.
\end{enumerate}
\end{theorem}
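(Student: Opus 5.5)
Since \cref{cor:representation} gives $\bom_m=\sqrt{c_m}\,\bom^\ast_{\varsigma(m)}$ on $\mathcal M_\varsigma$, all moments of $\bom_m$ under $\mathbb P=\mathbb P_n^\varsigma$ reduce to moments of $\sqrt{c_m}$. I take \cref{lem:splitposterior} as given: $\bc^{(m')}\sim\mathrm{Dir}(\alpha,\dots,\alpha)$ with $\alpha=(p+1)/2$. The proof is then a short moment computation, plus separate arguments for concentration and for asymptotics.

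\textbf{Part (i).} By aggregation of the Dirichlet, the marginal of $c_m$ for $m\in G_{m'}$ is $\mathrm{Beta}(\alpha,(k_{m'}-1)\alpha)$. For $X\sim\mathrm{Beta}(a,b)$ the standard formula $\E[X^{s}]=\Gamma(a+s)\Gamma(a+b)/(\Gamma(a)\Gamma(a+b+s))$ with $s=\tfrac12$ gives
\[
\E[\sqrt{c_m}]=\Gamma(\alpha+\tfrac12)\Gamma(k_{m'}\alpha)\big(\Gamma(\alpha)\Gamma(k_{m'}\alpha+\tfrac12)\big)^{-1}=\mu_{k_{m'},\alpha}.
\]
Multiplying by the deterministic vector $\bom^\ast_{m'}$ yields the first moment. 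For the second moment, $\bom_m\bom_m^\top=c_m\bom^\ast_{m'}\bom^{\ast\top}_{m'}$, and $\E[c_m]=k_{m'}^{-1}$ by \cref{lem:splitposterior}.

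\textbf{Part (ii).} As $n\to\infty$, the likelihood term $\sum_i\ell$ dominates, so posterior mass concentrates on the set where the population risk is minimized, namely $\{f_\bw=f^\ast\}$. The prior/penalty then selects the $L_2$-minimal representations. By \cref{lemma:informal} and \cref{cor:representation}, these are exactly $\bigcup_\varsigma\mathcal M_\varsigma$. I would carry this out by a standard Laplace/large-deviation argument: for any open neighbourhood $U$ of the union, show $\mathbb P_n(U^c)\to0$, using identifiability (\cref{ass:degen}) to get a positive risk gap outside $U$ on compacts, and prior tails to handle the region outside compacts. Non-degeneracy follows because the tube-induced law on each $\mathcal M_\varsigma$ is the Dirichlet of \cref{lem:splitposterior}, which does not depend on $n$ and has nonzero variance $(k_{m'}-1)/\kappa$ whenever some $k_{m'}\ge2$; such an $m'$ exists since $M>M^\ast$. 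I expect this to be the main obstacle. Concentration in a singular model is delicate: the Hessian is degenerate along $\mathcal M_\varsigma$, and the weak limit $\varepsilon\downarrow0$ must be shown to commute with $n\to\infty$. I would first argue that, conditional on the tube, the normal-direction Gaussian factor has a width that is constant along $\mathcal M_\varsigma^\circ$ up to the Jacobian already absorbed into the Dirichlet weights. This reduces the claim to \cref{lem:splitposterior}, and boundary strata carry zero mass.

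\textbf{Part (iii).} Set $k=k_{m'}\asymp M/M^\ast\to\infty$. The Gamma ratio asymptotic $\Gamma(x)/\Gamma(x+\tfrac12)\sim x^{-1/2}$ gives
\[
\mu_{k,\alpha}\sim\frac{\Gamma(\alpha+\tfrac12)}{\Gamma(\alpha)}(k\alpha)^{-1/2}=\Theta(M^{-1/2}),
\]
since $\alpha$ and $M^\ast$ are fixed. For the covariance, $\Cov(\bom_m)=(k^{-1}-\mu_{k,\alpha}^2)\,\bom^\ast_{m'}\bom^{\ast\top}_{m'}$. Write
\[
\mu_{k,\alpha}^2=\frac{1}{k}\cdot\frac{\Gamma(\alpha+\tfrac12)^2}{\alpha\,\Gamma(\alpha)^2}(1+o(1)).
\]
It remains to check that the constant $\Gamma(\alpha+\tfrac12)^2/(\alpha\Gamma(\alpha)^2)$ lies strictly in $(0,1)$. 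Equivalently, $\E[\sqrt Y]^2<\E[Y]$ for $Y\sim\mathrm{Gamma}(\alpha,1)$, which holds by strict Jensen because $Y$ is nondegenerate. Hence $k^{-1}-\mu^2_{k,\alpha}=\Theta(k^{-1})=\Theta(M^{-1})$.
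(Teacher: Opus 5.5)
\textbf{Parts (i) and (iii).} These follow the paper's argument exactly. You compute moments of $\sqrt{c_m}$ with $c_m\sim\mathrm{Beta}(\alpha,(k_{m'}-1)\alpha)$ and then apply the Gamma-ratio expansion. Your strict-Jensen step, $\E[\sqrt Y]^2<\E[Y]$ for $Y\sim\mathrm{Gamma}(\alpha,1)$, proves that $1-\Gamma(\alpha+\tfrac12)^2/(\alpha\Gamma(\alpha)^2)$ is positive, which the paper only asserts.

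\textbf{Non-degeneracy in (ii).} This half also matches the paper. The $n$-free Dirichlet has variance $(k_{m'}-1)/\kappa>0$ on some group with $k_{m'}\ge 2$. Your pigeonhole remark makes this more precise than the paper's claim of ``strictly positive diagonal entries'', which fails for singleton groups.

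\textbf{The gap: concentration in (ii).} Two of your steps fail for the ordinary posterior under the fixed prior $\mathcal N(\bm 0,(2\lambda)^{-1}\bI_d)$: ``the prior/penalty then selects the $L_2$-minimal representations'' and ``a positive risk gap outside $U$''.
\begin{itemize}
\item The zero-risk set $\{\bw: f_\bw=f^\ast\}$ strictly contains $\bigcup_\varsigma\mathcal M_\varsigma$. It contains every rescaling $(a\bw_{1,m},a^{-1}\w_{2,m})$, $a>0$, of a neuron. It also contains configurations where some neurons have $\w_{2,m}=0$ but arbitrary $\bw_{1,m}$. So $U^c$ contains zero-risk points, and there is no risk gap there.
\item The likelihood is exactly invariant under these rescalings for every $n$. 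Hence the posterior's conditional law along a rescaling orbit is the prior's restriction times a Jacobian, and it does not depend on $n$. For example, with $f(x)=b\,\phi(ax)$ and $a>0$, the posterior law of $a$ given $ab=1$ is $\propto a^{-1}\exp(-\lambda(a^2+a^{-2}))$ for all $n$.
\item Every point of $\bigcup_\varsigma\mathcal M_\varsigma$ is balanced, i.e.\ $\|\bw_{1,m}\|_2=|\w_{2,m}|$. Therefore $\mathbb P_n(U^c)\not\to 0$ for a small neighbourhood $U$.
\end{itemize}
A fixed $O(1)$ penalty only reweights flat directions of the likelihood; it never selects the norm-minimizers. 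Getting that would need something like $\lambda=\lambda_n\to\infty$ with $\lambda_n=o(n)$, or a statement about MAP estimates instead of the posterior.

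The paper does not prove this half either. Its argument is explicitly conditional (``even if the ordinary posterior contracts onto $\bigcup_\varsigma\mathcal M_\varsigma$'') and only checks non-degeneracy, which is all that can legitimately be established here.

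\textbf{An unnecessary step.} Your commutation-of-limits step is not needed. $\mathbb P_n^\varsigma$ is defined at fixed $n$, and the Dirichlet lemma already gives an $n$-free law. Moreover, a genuinely constant normal width would yield normalized surface measure on $\mathcal M_\varsigma$, i.e.\ $\mathrm{Dir}(\tfrac12,\ldots,\tfrac12)$. So that heuristic could not recover $\alpha=(p+1)/2$ anyway; just invoke the lemma, as the paper does.
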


In other words, overparametrization introduces continuous degrees of freedom along which the contribution of each true neuron can be redistributed across multiple redundant neurons. Conditional on a fixed assignment $\varsigma$, these redistributions form smooth manifolds $\mathcal M_{\varsigma}$ (cf.~\cref{fig:geometry}) on which the induced network function and the regularization cost remain unchanged. The ordinary posterior on $\mathbb R^d$ concentrates near $\cup_\varsigma \mathcal{M}_\varsigma$ as $n\to\infty$, while $\mathbb{P}_n^\varsigma$ remains non-degenerate along splitting coordinates. Under balanced splitting and increasing width $M$, the contribution of each individual redundant neuron shrinks in magnitude, with posterior mean of order $M^{-1/2}$ and covariance of order $M^{-1}$, while the total contribution of each group remains fixed. Note that this vanishing covariance is a marginal statement for individual redundant neurons. The joint posterior over the splitting coefficients $(c_m)_{m\in G_{m'}}$ remains non-degenerate on a simplex of growing dimension, so uncertainty is redistributed rather than eliminated.

\subsection{Permutations vs.\ Manifold Assignment} \label{sec:perm_vs_assignment}

To clarify the geometric origin of posterior non-identifiability in overparameterized models, it is instructive to distinguish between permutation symmetries and assignment-induced manifold structure. In the case $M=M^\ast$, every assignment $\varsigma:[M]\to[M^\ast]$ is bijective, and manifold assignments coincide exactly with permutations. A single excess neuron changes the geometry, giving rise to a nontrivial assignment $\varsigma$ in which the contribution of the corresponding true neuron can be continuously redistributed across the two model neurons. Consequently, the posterior support contains a one-dimensional manifold rather than isolated points (cf.~\cref{fig:geometry}, left). For general overparameterization, each assignment $\varsigma:[M]\to[M^\ast]$ induces a decomposition of the model neurons into groups $G_{m'}=\varsigma^{-1}(m')$ of size $k_{m'}\ge1$. For a fixed assignment, the set of parameters representing the same function forms a manifold $\mathcal M_\varsigma \cong \prod_{m'=1}^{M^\ast} \Delta_{k_{m'}-1}$  (cf.~\cref{fig:geometry}, right). As $M-M^\ast$ increases, both the dimension of each manifold and the number of boundary strata grow, resulting in a posterior geometry that is substantially richer than a finite permutation mixture. 

\section{Practical Consequences}

From a practical perspective, it is still unclear to what extent epistemic uncertainty due to non-identifiability plays a role in the application of neural networks. Although a general statement remains challenging due to the numerous types of non-identifiability beyond what we have covered in the previous section, we can still make more concrete statements for the class of models studied in this paper.

\subsection{Correctly Specified Model}

In the first analyzed case of a correctly specified model, irreducible uncertainty can be solely traced back to the permutation invariance of the 2-layer ReLU network. And while the posterior contracts to a mixture of Dirac measures with zero variance around each mode in the large data limit, the covariance matrix of $\bw$ is non-zero on its diagonal. The question is now to what extent this influences or biases the estimation of the posterior. To answer this question, we consider sampling-based inference, i.e., using MCMC methods to obtain an approximation of the Bayesian neural network's posterior. This allows us to make statements without any particular approximation assumption.

Fortunately, sampling can be partly immune to permutation invariance, as we state in the next result. For this, we characterize sampling as a homeomorphic learning process akin to \citet{yang2025topological} with the following assumption:

\begin{assumption} \label{ass:sampler}
Assume we are using an MCMC procedure generating new samples $\bw^{(t+1)}$ based on the current position $\bm{w}^{(t)}$ using the discretized update equation $\bw^{(t+1)} = \bw^{(t)} + \eta\, \Omega^{(t)}(\bw^{(t)})$ with step size $\eta>0$ and generic update procedure $\Omega$. Further assume:
\vspace{-0.4cm}
\begin{enumerate}[leftmargin=1.5em]
\item $\Omega$ is permutation equivariant, i.e., for a permutation matrix $\bm{\pi}\in \Pi$, it holds $\bm{\pi}\Omega^{(t)}(\bw) = \Omega^{(t)}(\bm{\pi}\bw)$.
\vspace{-0.2cm}
\item For $K>0$, for any $t\in\mathbb{N}$ and $\bw,\bw' \in \mathcal{W}$, $\|\Omega^{(t)}(\bw)-\Omega^{(t)}(\bw')\|\leq K \|\bw-\bw'\|$.
\end{enumerate}
\vspace{-0.2cm}
\end{assumption}

\cref{ass:sampler}.1 is not particularly exotic and holds for many of the classical MCMC procedures. Special care needs to be taken if the sampler uses mechanisms like momentum resets, but this can potentially be accounted for by reducing the critical learning rate $\eta$. \cref{ass:sampler}.2 ensures that the discrete-time update defines a continuous (Lipschitz) dynamical system, preventing trajectories from crossing measure-zero boundaries between permutation chambers unless they are initialized exactly on such boundaries. Given these properties, we have the following.

\begin{corollary}[informal] \label{cor:chamber}
    Under \cref{ass:sampler}, a single-chain MCMC procedure will almost surely cover only one mode of the posterior $p(\bw|\D_n)$. 
\end{corollary}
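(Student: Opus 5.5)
The plan is to work with the \emph{permutation chambers} of parameter space, i.e., a fundamental domain of the action of $\Pi$. For a fixed ordering functional $h(\bw)$, for instance the lexicographic order of the blocks $\bom_1,\dots,\bom_M$, the open chamber is
$$C_{\mathrm{id}} := \{\bw : \bom_1 \succ \bom_2 \succ \dots \succ \bom_M\}.$$
The other chambers are $C_{\bm\pi} = \bm\pi C_{\mathrm{id}}$. The union of the boundaries $B := \bigcup_{m\neq m'}\{\bw: h_m(\bw)=h_{m'}(\bw)\}$ has Lebesgue measure zero. By \cref{prop:ident}, in the large-data limit each mode $\bm\pi\bw^\ast$ of $p(\bw|\D_n)$ lies in exactly one chamber, provided the neurons of $\bw^\ast$ are distinct. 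This is guaranteed by non-degeneracy (\cref{ass:degen}). So "covering only one mode" reduces to showing that the chain stays in the chamber in which it starts.

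\textbf{Step 1.} Use \cref{ass:sampler}.1 to show that $B$ is invariant under the update map $T(\bw) := \bw + \eta\,\Omega^{(t)}(\bw)$. If $\bom_m = \bom_{m'}$, let $\bm\pi$ be the transposition swapping these two blocks, so that $\bm\pi\bw = \bw$. Equivariance then gives $\bm\pi\Omega(\bw) = \Omega(\bm\pi\bw) = \Omega(\bw)$, so $\Omega(\bw)$ has equal blocks $m,m'$, and hence $T(\bw)\in B$. More generally, $T$ commutes with every $\bm\pi$.

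\textbf{Step 2.} Use \cref{ass:sampler}.2 to make $T$ a homeomorphism, in the spirit of \citet{yang2025topological}. For $\eta K<1$, $T$ is bi-Lipschitz: $\|T\bw - T\bw'\| \ge (1-\eta K)\|\bw-\bw'\|$, so $T$ is injective. By Banach's fixed point theorem applied to $\bw \mapsto \bv - \eta\Omega(\bw)$, $T$ is also surjective. Hence $T$ is a homeomorphism of $\mathbb R^d$ commuting with $\Pi$.

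The main obstacle is Step 3: showing that a point \emph{off} the diagonal set $\{\bom_m=\bom_{m'}\}$ can never be mapped onto it. Here the plan is to use injectivity together with equivariance. Suppose $T\bw$ had $\bom_m = \bom_{m'}$, and let $\bm\pi$ be the corresponding transposition. Then
$$T(\bm\pi\bw) = \bm\pi T\bw = T\bw,$$
so injectivity forces $\bm\pi\bw = \bw$, that is, $\bw$ itself is on the diagonal. Consequently $T$ maps the complement of the diagonal set into itself. For stochastic samplers, $\Omega^{(t)}$ is treated as a random but equivariant Lipschitz map, and the argument is applied pathwise.

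If the chamber walls are taken as the diagonals, this already shows that the complement is preserved. Because $T$ is a homeomorphism, it maps each connected component of that complement into a single connected component. These components correspond to the orderings, at least on the relevant one-dimensional projection used to define $h$. I would define chambers via a generic linear functional $\bm a^\top\bom_m$ so that ties are exactly diagonal-like hyperplanes and the argument goes through. I expect the main difficulty to be this choice of chamber walls so that Step 3 applies; the discrete-step "jumping over" issue is handled by the small-$\eta$ homotopy $T_s = \mathrm{id} + s\eta\Omega$, $s\in[0,1]$, each of which is injective.

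\textbf{Step 4.} Finally, since the initialization $\bw^{(0)}$ is drawn from a continuous distribution, $\bw^{(0)}\notin B$ almost surely. By induction, every $\bw^{(t)}$ then stays in the chamber of $\bw^{(0)}$. That chamber contains exactly one permutation mode $\bm\pi\bw^\ast$, so the single chain covers only one mode almost surely.
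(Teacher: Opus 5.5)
Your Steps 1 and 2 and the first half of Step 3 are sound. For $\eta K<1$ the update $T$ is an equivariant homeomorphism, and equivariance plus injectivity show that the diagonal set $D:=\bigcup_{m\neq m'}\{\bw:\bom_m=\bom_{m'}\}$ and its complement are both invariant. The gap is the next inference, from ``the chain never enters $D$'' to ``the chain stays in one chamber''.

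Each piece of $D$ is a linear subspace of codimension $q=p+1\ge 2$ in $\mathbb{R}^{Mq}$, so $\mathbb{R}^{Mq}\setminus D$ is path-connected. It splits into $M!$ ordered components only when $q=1$. Hence ``$T$ maps components into components'' carries no information, and your claim that the components correspond to orderings is false.

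Your proposed repair does not help. Walls of the form $\{\bm a^\top\bom_m=\bm a^\top\bom_{m'}\}$ are codimension-one hyperplanes that are not fixed-point sets of any $\bm\pi\in\Pi$, so neither equivariance nor injectivity constrains crossings of them. The same holds for your lexicographic chambers, whose topological boundaries are the hyperplanes where leading coordinates tie, not $B$. The homotopy $T_s$ is beside the point: the problem is not jumping over a wall but the wall not being invariant.

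Concretely, take $M=2$, $\bm\delta:=\bom_1-\bom_2$, and $T(\bw)=\bigl(\bom_1+\tfrac12(R_\theta-\bI)\bm\delta,\ \bom_2-\tfrac12(R_\theta-\bI)\bm\delta\bigr)$. Here $R_\theta$ rotates by a small angle $\theta=\pi/N$ in a $2$-plane of $\mathbb{R}^q$ containing $\bom_1^\ast-\bom_2^\ast$. The induced $\Omega$ is linear, swap-equivariant, and has $\eta K\le\theta<1$. $T$ fixes $\bom_1+\bom_2$ and rotates $\bm\delta$, so after $N$ steps it carries $\bw^\ast$ exactly to $\bm\pi\bw^\ast$ while $\|\bm\delta\|$ stays constant, i.e.\ without ever touching $D$. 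So \cref{ass:sampler} alone cannot give confinement to one mode.

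The paper's justification is the same topological argument. It delegates the invariance to Theorem~1 of \citet{yang2025topological} and then simply asserts that never reaching exact exchangeability implies confinement. Your route therefore coincides with the paper's, and both leave this step open.

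Closing it needs an ingredient beyond equivariance and Lipschitz continuity. Options include:
\begin{itemize}
\item restricting to $q=1$;
\item using that a deterministic gradient-type $\Omega$ with small step leaves the basin of attraction of an isolated mode invariant;
\item a metastability estimate for the concentrating posterior, which yields confinement only with high probability over finite horizons (an ergodic Langevin chain eventually visits every mode).
\end{itemize}
Note also that ``applied pathwise'' is problematic for SGLD-type updates. For a fixed noise draw $\xi$, pathwise equivariance would require $\bm\pi\xi=\xi$, which fails almost surely.
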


This is a direct consequence of Theorem 1 in \citet{yang2025topological} and states that the posterior modes corresponding to different neuron permutations lie in distinct permutation chambers separated by boundaries of Lebesgue measure zero. Under \cref{ass:sampler}, the Markov chain defines a continuous, permutation-equivariant evolution in parameter space. As a consequence, if the chain is initialized in the interior of a permutation chamber, it almost surely never reaches a boundary at which two neurons become exactly exchangeable. Therefore, the chain remains confined to a single representative of the permutation orbit and explores only one posterior mode.

\subsection{Overparametrized Model}

As the previous result also applies to overparametrized models, a single-chain MCMC sampler that evolves continuously and is permutation-equivariant will still almost surely remain confined to one permutation chamber. 
And while this might seem different for $\cup_\varsigma \mathcal{M}_\varsigma$ as the closures of different assignment manifolds intersect at degenerate configurations (cf.~\cref{fig:geometry}), boundaries of assignments have Lebesgue measure zero. More specifically, $\mathcal M_\varsigma^\circ$ is an open, full-dimensional subset of $\mathcal M_\varsigma$ and $\mathcal M_\varsigma \setminus \mathcal M_\varsigma^\circ$ consists of boundary strata where at least one splitting coefficient vanishes. Since the set $\mathcal M_\varsigma \setminus \mathcal M_\varsigma^\circ$ is contained in a finite union of lower-dimensional embedded submanifolds of $\mathbb R^d$, it has Lebesgue measure zero.
Under a continuous, Lipschitz Markov evolution, they are therefore almost surely not reached from an interior initialization. Consequently, a single-chain MCMC sampler can explore directions within $\mathcal M_\varsigma^\circ$, but does not transition between different assignments under the stated assumptions.

\subsection{Prior Influence} \label{sec:prior}

Readers at this point might wonder where the prior comes into play, since even in the fixed-$n$ regime the Gaussian prior variance $(2\lambda)^{-1}$ does not appear explicitly in the results. The reason is that, for any assignment group with $k_{m'}$ elements, it holds $\sum_{m\in G_{m'}} \|\bom_m\|_2^2 = \sum_{m\in G_{m'}} c_m \|\bom_{m'}^\ast\|_2^2 = \|\bom_{m'}^\ast\|_2^2$, so that the quadratic prior is constant {along} $\mathcal M_\varsigma$. Moreover, the likelihood provides no curvature in directions tangent to $\mathcal M_\varsigma$. As a consequence, the distribution induced on $\mathcal M_\varsigma$ is not shaped by likelihood or prior values \emph{on} the manifold, but by how posterior mass accumulates in an infinitesimal neighborhood {around} it. Complementary to this tangential effect, the likelihood can remain nearly constant in a small neighborhood of $\mathcal M_\varsigma$, in which case the Gaussian prior shapes approximately quadratic fluctuations in directions {normal} to $\mathcal M_\varsigma$.
This provides a plausible explanation for why many one-dimensional weight marginals appear close to Gaussian, despite global non-identifiability.

\section{Empirical Validation}
\label{sec:experiments}

We empirically validate the theoretical results of the previous sections. 

Across all experiments, data are generated from a correctly specified or overparametrized one-hidden-layer ReLU network with $M^\ast=5$ hidden units and input dimension $p=5$. Inputs $x\in\mathbb{R}^p$ are drawn i.i.d.\ from a standard Gaussian distribution, and outputs are generated according to $y = f^\ast(\x) + \varrho$, $\varrho \sim \mathcal N(0,\sigma^2)$, with fixed $\sigma=1$. The true parameters satisfy 
\cref{ass:degen}.
Inference is performed using a Bayesian neural network with Gaussian weight prior and Gaussian likelihood. For posterior approximation, we follow a two-stage procedure \citep{sommer2024connecting,sommer2025microcanonical}. Each chain is first initialized by an independent Adam optimization (yielding a deep ensemble of MAP solutions), followed by posterior sampling using Stochastic Gradient Langevin Dynamics \citep[SGLD;][]{welling2011bayesian} or the No U-Turn Sampler \citep[NUTS;][]{hoffman2014no}. While the latter does not strictly satisfy \cref{ass:sampler}, we empirically observe that the resulting trajectories after adaptation usually remain confined to a single permutation chamber. All predictive quantities are evaluated on an independent test set. Unless stated otherwise, we investigate varying sample sizes $n\in\{2^6,\dots,2^{14}\}$ and the network widths $M \in \{M^\ast,2M^\ast,4M^\ast,8M^\ast\}$.

\subsection{Convergence and Predictive Performance}
\label{sec:exp-convergence}

We first run sanity checks for posterior convergence and predictive performance by comparing the root mean squared error (RMSE) and the log pointwise predictive density (LPPD) between MAP-based predictions of the deep ensemble (DE) and the BDE. We further expect DEs and BDEs to yield similar predictive uncertainty in correctly specified models, with BDE being superior for small-$n$ problems where additional data-related epistemic uncertainty is still present. In overparametrized models, posterior sampling via BDE should capture additional uncertainty due to continuous non-identifiability that is not fully represented by the DE. Results in \cref{app:de_vs_bde} confirm these hypotheses.

\subsection{Explained Fraction of Uncertainty from Non-Identifiability}
\label{sec:exp-fraction}

\begin{figure}[!ht]
  \centering
    \includegraphics[width=\linewidth]{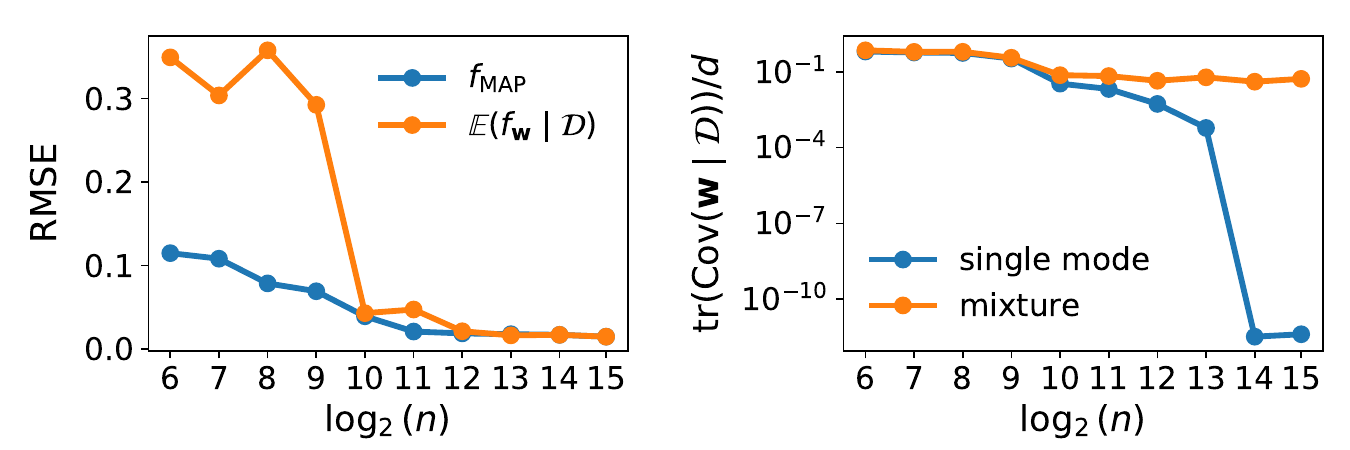}
    \caption{Left: Prediction performance of sampling for increasing sample size $n$, showing convergence to the true model $f^\ast$ in both MAP and mean function. Right: Comparison of the posterior variance of the mixture over permutation modes with the variance computed within a single permutation mode, showing that the remaining uncertainty is solely due to permutation invariance.}
    \label{fig:uncertainty_n}
\end{figure}

Our next analysis investigates how much of the total epistemic uncertainty is attributable to non-identifiability in correctly specified models. For $M=M^\ast$, we compute both the classical epistemic uncertainty $\Var(f_\bw(\x)|\D_n)$ and the extended uncertainty with added covariance trace term $\trace(\Var(\bw|\D_n))$ from posterior samples, normalized by $d$. 

We expect the predictive uncertainty $\Var(f_\bw(\x|\D_n))$ to vanish as $n$ increases, while the trace of the weight covariance remains non-zero due to permutation symmetry. At the same time, when increasing $n$, $\E[f_\bw(\x|\D_n)]$ and the maximum a posteriori (MAP) estimate $f_{\text{MAP}}$ should converge to $f^\ast$, yielding perfect (mean) prediction despite remaining uncertainty.

\paragraph{Results}
\cref{fig:uncertainty_n} confirms our hypothesis: performance improves and converges as $n$ increases, while the trace of the posterior covariance $\bw|\D_n$ remains constant. In contrast, when restricting to a single mode, the variance vanishes.

\subsection{Permutation Likelihood-Identifiability}
\label{sec:exp-permutation}

\begin{figure}[!ht]
    \centering
    \includegraphics[width=\columnwidth]{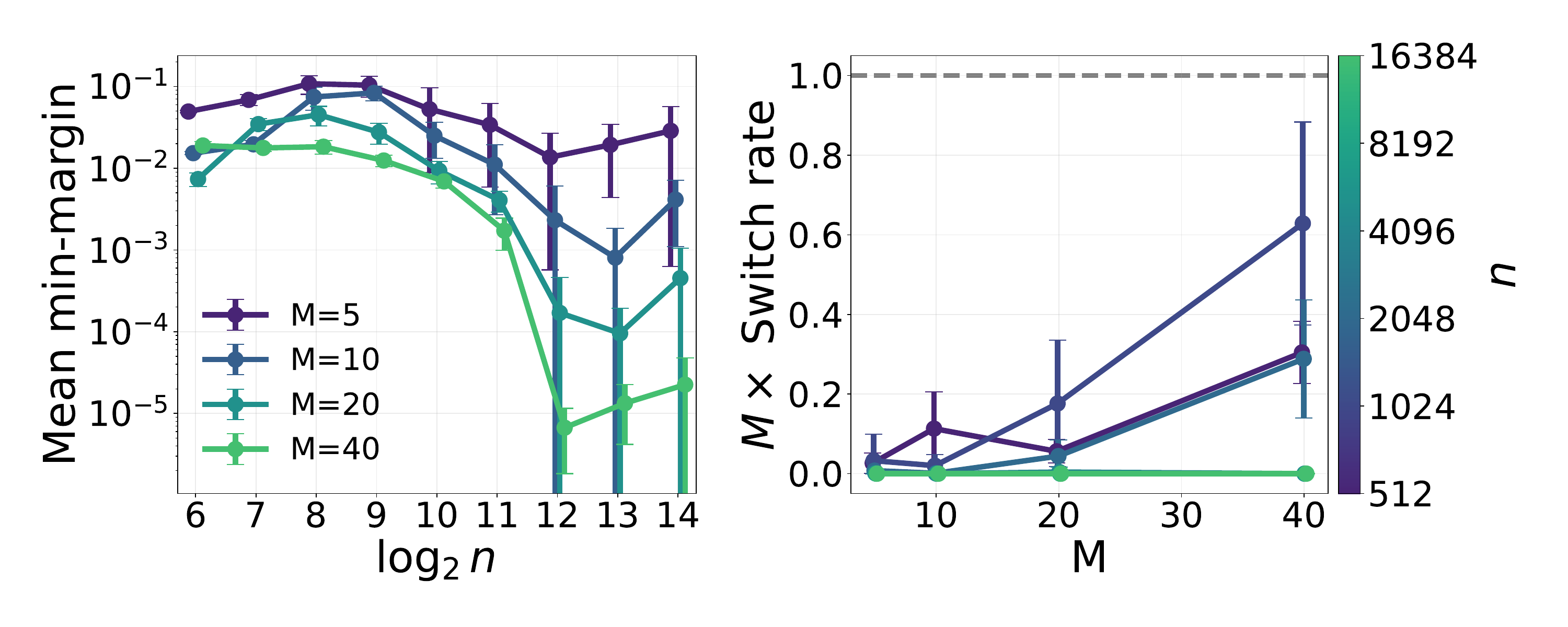}
    \caption{Left: Mean minimum margin quantifying the depth of chains, with values below $10^{-1}$ indicating a chain residing deep in the permutation chamber. Right: Switch rate into different permutation chambers, showing that permutations increase with $M$ and $n$, but the number of expected permutations remains smaller than 1 (gray dashed line) for all chains.}
    \label{fig:perm}
\end{figure}

Our next analysis investigates whether sampling dynamics remain confined to a single permutation chamber or traverse multiple symmetry-related regions in parameter space. We analyze switching behavior using two complementary diagnostics computed from posterior samples: (i) the mean minimum margin, defined as the average signed distance of parameters to the nearest permutation boundary, and (ii) the within-chain switch rate, measuring how frequently samples cross permutation boundaries over the course of a chain. Details of their computation can be found in \cref{app:permutation_diagnostics}. 

The mean minimum margin quantifies how deeply a chain resides within a permutation chamber, while the switch rate captures actual transitions between permutation-equivalent modes. Under the theoretical assumptions of smooth, small-step sampling dynamics, chains initialized away from these boundaries should almost surely remain in a single assignment and hence permutation chamber. 

\paragraph{Results}
The empirical results for NUTS confirm our expectations. With increasing $n$ and $M$, the mean minimum margin decreases, implying stronger concentration within each chamber (\cref{fig:perm}, left). At the same time, experiments indicate that expected switches between permutations are (notably below) a single permutation chamber switch, with decreasing probability for small $M$ and large $n$ (\cref{fig:perm}, right). Although NUTS does not satisfy \cref{ass:sampler} due to momentum resampling and discrete trajectory updates, the empirical observation that even NUTS rarely switches permutation chambers suggests that the phenomenon may be practically robust beyond the formal scope of the Corollary. Similar results hold for SGLD (\cref{app:sgld_results}).

\subsection{Continuous Non-Identifiability via Neuron Splitting}
\label{sec:exp-splitting}

\begin{figure}
    \centering
    \includegraphics[width=\columnwidth]{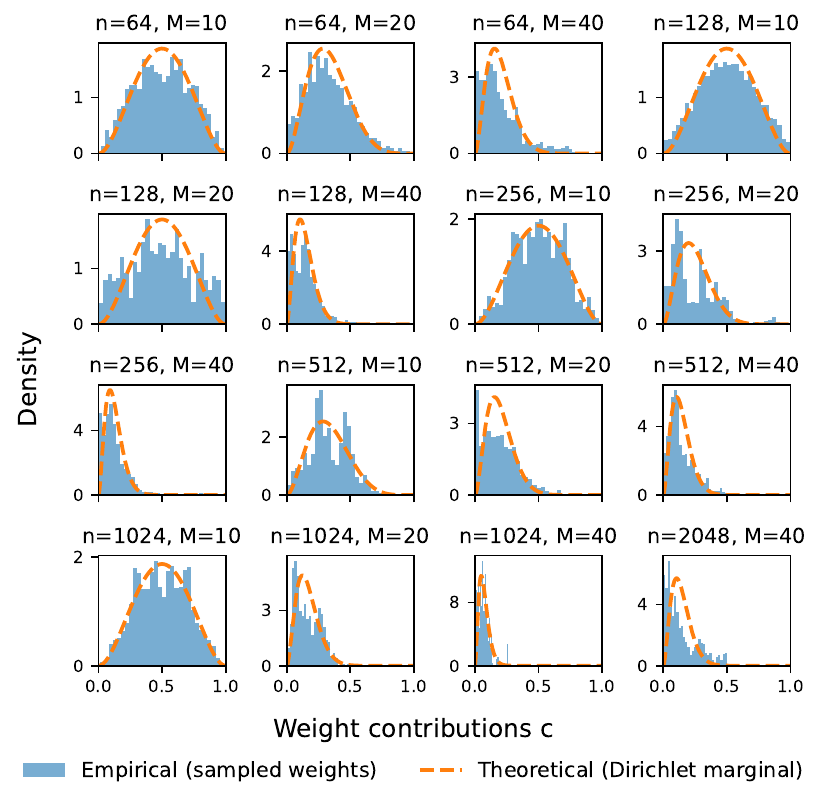}
    \caption{Comparison of empirical and theoretical distribution of weight contributions $c_m$ for different $(n,M)$-combinations.}
    \label{fig:beta}
\end{figure}

\begin{figure*}[!t]
    \centering
    \includegraphics[width=0.75\linewidth]{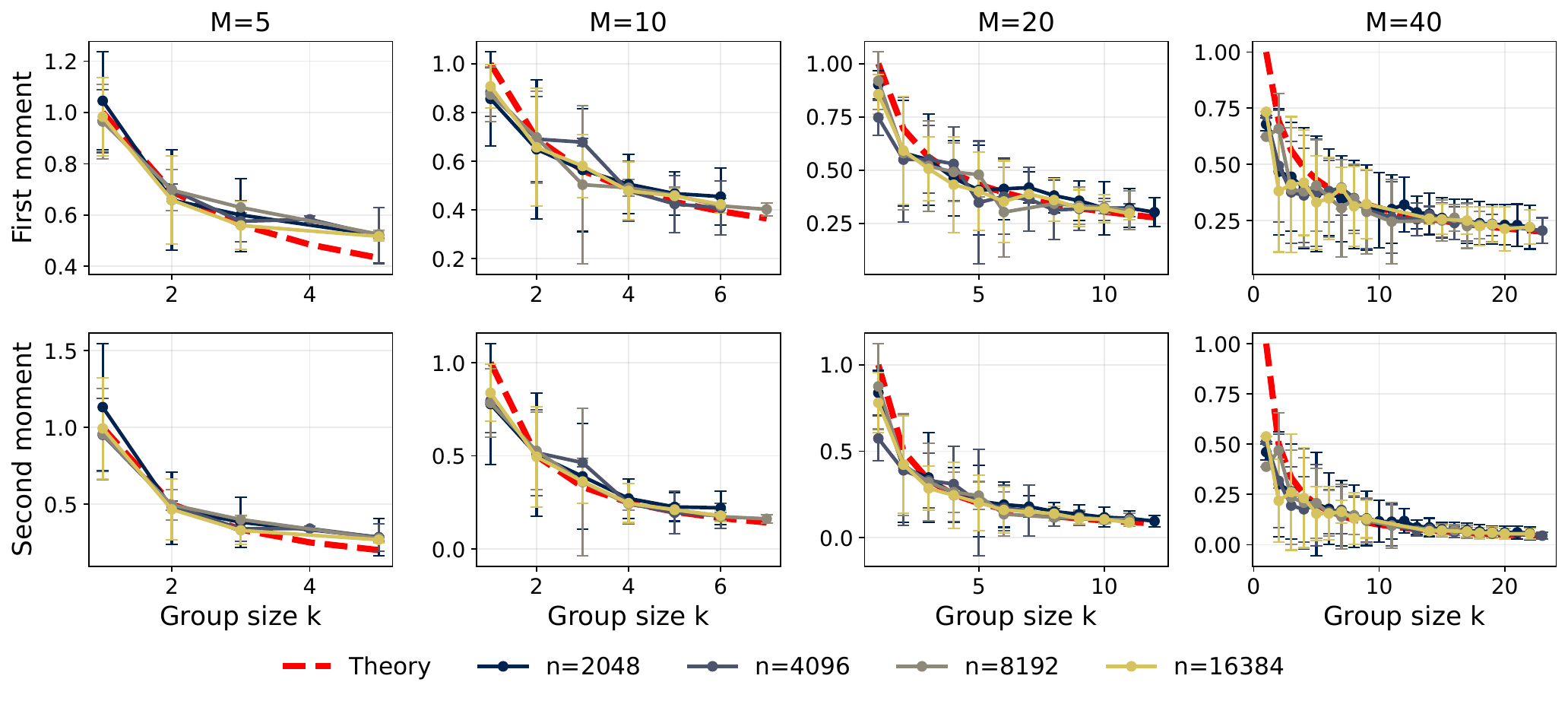}
    \caption{Empirical validation of \cref{thm:splitposterior} by comparing empirical posterior moments of $s=\langle \bom_m,\bom^\ast_{\varsigma(m)}\rangle/\|\bom^\ast_{\varsigma(m)}\|_2^2$ to their theoretical predictions. First moment $\E[s]$ (top row) and second moment $\E[s^2]$ for different $M$ (columns) and $n$ setting (colors) are plotted as a function of the assignment group size $k$ (std.dev.\ as error bars). The red dashed curves show the theoretical moments.} 
    \label{fig:theorem1_moments}
\end{figure*}

As the next experiment, we study non-identifiability induced by overparameterization ($M>M^\ast$), where surplus neurons can split the contribution of true neurons continuously. We consider networks with increasing width $M>M^\ast$ and analyze posterior samples by inferring splitting assignments $\varsigma$ and corresponding splitting coefficients. 

Following our theory, we expect that each chain remains confined to a single assignment $\varsigma$ but explores a continuous manifold of equivalent parameterizations and splitting coefficients $c_m$ follow a symmetric Dirichlet distribution whose concentration depends on the dimension. To validate this, we analyze the marginal distribution of single coordinates $c_m$ for all $(n,M)$-combinations. A single coordinate of a symmetric Dirichlet has the marginal distribution 
\begin{equation}\label{eq:beta}   
c_m \sim \operatorname{Beta}(\alpha, (k-1)\alpha)
\end{equation}
with $\alpha = (p+1)/2$ and a group size of $k$. To determine groups, we assign the weights of each posterior draw to one of the true $M^\ast$ neurons using cosine similarity. We then examine plots for all $(n,M)$-combinations and check for different group sizes. 

\paragraph{Results} \cref{fig:beta} visualizes the marginal distributions with group sizes ranging from 2 to 14 for different sizes of $M$. As can be seen, there is a close alignment between the theoretical distribution (\ref{eq:beta}) and the empirical distribution of the determined weight contributions $c_m$. 

\subsection{Empirical vs.\ Theoretical Moments}\label{sec:empirical_theoretical_moments}

Finally, we validate the moment statements of \cref{thm:splitposterior} by comparing posterior samples against the corresponding theoretical moments induced by the assignment manifolds. For each posterior draw and neuron $m$, we compute the scalar projection $s_m = \langle \bom_m, \bom^\ast_{\varsigma(m)} \rangle / (\|\bom^\ast_{\varsigma(m)}\|_2^2)$, where $\varsigma$ denotes the inferred assignment obtained via cosine similarity to the ground-truth first-layer weights.
Conditioned on a fixed assignment group size $k$, the theory predicts $\E_{\mathbb P}[s_m]=\mu_{k,\alpha}$ and $\E_{\mathbb P}[s_m^2]=1/k$, with $\alpha=(p+1)/2$.

\paragraph{Results}
Figure~\ref{fig:theorem1_moments} compares these predictions to empirical estimates across different widths $M$ and increasing sample sizes $n$. Across all settings, the empirical first and second moments closely match their theoretical counterparts.
In particular, increasing $n$ concentrates the estimates around the theory curves, while the dependence on the group size $k$ is accurately captured even for moderate widths.

\subsection{Practical Implications}

Non-identifiability can induce persistent uncertainty in parameter space even when the represented function is essentially identified. This distinction has practical consequences whenever downstream procedures operate directly on weights or weight-space uncertainty. We illustrate this in \cref{app:practical_implications} through two additional experiments. In a continual learning setting, non-identifiability-induced variance leads to distorted estimates of parameter importance, resulting in overly restrictive updates, and correcting for this improves adaptation. Second, we study convergence diagnostics for sampling-based inference, where accounting for the variance attributable to non-identifiable split directions reflects convergence more accurately. 

\section{Discussion} \label{sec:discussion}

This work analyzed epistemic uncertainty in overparametrized neural networks through the lens of non-identifiability. By characterizing both permutation-based and continuous sources of non-identifiability, we theoretically demonstrated that parameter uncertainty can persist even when the underlying function is fully identified. After characterizing the exact posterior for a two-layer ReLU network, we empirically validated our theoretical findings. 
From a practical perspective, non-identifiability uncertainty is largely mitigated by working in function space. However, we also show that for sampling-based inference, permutations and neuron re-assignments in overparametrized models are unlikely to happen given the right choice of sampler and learning rate. Yet, samplers are shown to traverse the non-identifiability manifolds created by excessive neurons. 
 
\paragraph{Implications} Inference in parameter space and deriving posterior distributions are important whenever downstream tasks depend on parameters themselves, as, e.g., in sampling-based inference, interpretability, model compression, or continual learning. In these cases, non-identifiability uncertainty becomes relevant and potentially misleading if ignored. Our results suggest that while some effects of non-identifiability are already ``averaged out'' by practical heuristics, they are not completely eliminated and can meaningfully influence uncertainty estimates. Several open questions remain. 
Permutation-induced uncertainty, in particular, is tightly coupled to the initialization and optimization. While symmetric initializations around zero should not distort the equal probability of permutation chambers, and sufficiently small learning rates can prevent trajectories from crossing permutation boundaries, stochastic optimization methods and adaptive learning or reset strategies are likely to induce transitions between symmetry-related regions. This blurs the distinction between non-identifiability uncertainty and uncertainty induced by the learning dynamics. 

\paragraph{Extensions and Future Work} 
Our formal results are derived for one-hidden-layer ReLU networks with Gaussian priors, but the mechanisms they expose are likely not limited to this exact setting. Hidden-layer biases can be incorporated by treating the bias as an additional constant input feature, leading to the same splitting argument with a modified dimension. Moreover, the permutation and splitting symmetries studied here can be understood as elementary non-identifiability mechanisms that may also occur inside larger architectures. Representation results for ReLU networks \citep[e.g.,][]{fan2023quasi} suggest that functions represented by deeper networks can be embedded into sufficiently wide shallow networks, indicating that these mechanisms are not merely artifacts of the shallow parametrization. 
However, deeper networks also introduce additional layer-wise and compositional non-identifiabilities that our theory does not characterize. Likewise, the exact Dirichlet splitting law depends on the Gaussian prior, or $L_2$-regularization, and alternative priors may induce different behavior. A full treatment remains an important future direction.

\section*{Acknowledgements}

We thank all four anonymous reviewers for a very constructive and helpful discussion, and members of the muniq.ai group, in particular Julius Kobialka, for a valuable exchange and helpful feedback on an earlier version of this work. We further thank one anonymous reviewer of the EIML@ICML workshop for pointing out minor errors in an earlier version of this work.

\section*{Impact Statement}

This paper presents work whose goal is to advance the field of Machine
Learning. There are many potential societal consequences of our work, none of
which we feel must be specifically highlighted here.

\bibliography{example_paper}
\bibliographystyle{icml2026}

\newpage
\appendix
\onecolumn


\section{Derivations and Proofs}

Throughout the appendix, all expectations, variances, and covariances taken on
$\mathcal M_\varsigma$ are understood with respect to the tube-induced manifold posterior
$\mathbb P_n^\varsigma$ defined in the main text, unless stated otherwise.

The derivations and proofs in this section require the following assumption.

\begin{assumption} \label{ass:degen}
    Let $\phi(t)=\max\{0,t\}$ and $f^\ast(\bx)=\sum_{{m'}=1}^{M^\ast} \w_{2,{m'}}^\ast\,\phi(\bw_{1,{m'}}^{\ast\top} \bx)$. 
\begin{enumerate}
\item $\nexists \tilde{M}<M^\ast \text{ and weights } (\bw_{1,\tilde{m}}, \w_{2,\tilde m})_{\tilde m \in [\tilde M]}: \sum_{{\tilde m}=1}^{\tilde M} \tilde \w_{2,{\tilde m}}\,\phi(\bw_{1,{\tilde m}}^{\top} \bx) = \sum_{{m'}=1}^{M^\ast} \w_{2,{m'}}^\ast\,\phi(\bw_{1,{m'}}^{\ast\top} \bx).$
\item  On $\mathcal X\subset\mathbb{R}^p$,
hidden features $\{\phi(\bw_{1,{m'}}^{\ast\top}x)\}_{{m'}=1}^{M^\ast}$ are linearly independent in $L^2(\mathcal X)$ and no two vectors $\bw_{1,m}^\ast$, $\bw_{1,m'}^\ast$ are collinear (i.e.\ $\bw_{1,m}^\ast \neq a\bw_{1,m'}^\ast\,\forall a\in\mathbb{R}\backslash \{0\})$.
\end{enumerate}
\end{assumption}

\cref{ass:degen}.1 mainly ensures that the true model is not an edge case or reducible, while \cref{ass:degen}.2 is an identifiability assumption, excluding ill-posed situations. 

\subsection{Consequences of \cref{ass:degen}}

\begin{corollary}
Under \cref{ass:degen}, 
\begin{enumerate}
    \item $\w_{2,{m'}}^\ast\neq 0$ for all ${m'}\in[M^\ast]$.
\item for all $m,m'\in[M^\ast]$ with ${m}\neq {m'}$,
$\bw_{1,{m}}^\ast$ is not a positive scalar multiple of $\bw_{1,{m'}}^\ast$.
\end{enumerate}
\end{corollary}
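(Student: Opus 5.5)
Both claims follow by contradiction from \cref{ass:degen}: each would otherwise let us rewrite $f^\ast$ with fewer than $M^\ast$ hidden neurons, which \cref{ass:degen}.1 rules out. \cref{ass:degen}.2 gives a second route for each claim.

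\emph{Part 1.} Suppose $\w_{2,m_0}^\ast=0$ for some $m_0\in[M^\ast]$. Then neuron $m_0$ contributes nothing to $f^\ast$, and
\[
f^\ast(\bx)=\sum_{m'\neq m_0}\w_{2,m'}^\ast\,\phi(\bw_{1,m'}^{\ast\top}\bx).
\]
This is a representation with $\tilde M=M^\ast-1<M^\ast$ neurons, using the weights $(\bw_{1,m'}^\ast,\w_{2,m'}^\ast)_{m'\neq m_0}$. That contradicts \cref{ass:degen}.1. If $M^\ast=1$, the remaining sum is empty, so $f^\ast\equiv 0$, which equals a width-$0$ network and gives the same contradiction.

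\emph{Part 2.} Suppose $\bw_{1,m}^\ast=a\,\bw_{1,m'}^\ast$ for some $m\neq m'$ and some $a>0$. Such an $a$ is in particular a nonzero scalar, so this is excluded directly by the non-collinearity clause of \cref{ass:degen}.2.

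For a self-contained argument that uses \cref{ass:degen}.1 instead, I use positive homogeneity of ReLU: for $a>0$,
\[
\phi(a t)=a\,\phi(t).
\]
Hence
\[
\w_{2,m}^\ast\phi(\bw_{1,m}^{\ast\top}\bx)+\w_{2,m'}^\ast\phi(\bw_{1,m'}^{\ast\top}\bx)=(a\w_{2,m}^\ast+\w_{2,m'}^\ast)\,\phi(\bw_{1,m'}^{\ast\top}\bx)
\]
for every $\bx$. The two neurons therefore merge into one, giving a representation of $f^\ast$ of width $M^\ast-1$ and contradicting \cref{ass:degen}.1.

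The same identity also contradicts the linear-independence clause of \cref{ass:degen}.2. It gives $\phi(\bw_{1,m}^{\ast\top}\bx)=a\,\phi(\bw_{1,m'}^{\ast\top}\bx)$ pointwise, which is a nontrivial linear relation between the two hidden features.

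\emph{Main obstacle.} The only step needing care is the positive-homogeneity identity, and it holds only for $a>0$. For $a<0$, the features $\phi(t)$ and $\phi(-t)$ are genuinely different functions, which is exactly why the statement is restricted to positive multiples. The edge case $M^\ast=1$ in Part 1 is handled by the width-$0$ convention above.
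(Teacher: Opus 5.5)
Your proof is correct and follows the paper's argument. Both parts are contradictions with \cref{ass:degen}.1, obtained by dropping the neuron with zero output weight and by merging two positively collinear neurons through the positive homogeneity $\phi(at)=a\phi(t)$. Your extra remarks are also valid: Part 2 is in fact immediate from the non-collinearity clause of \cref{ass:degen}.2, and the width-$0$ convention handles $M^\ast=1$. However, your opening sentence overstates things slightly, since \cref{ass:degen}.2 constrains only the first-layer directions and gives no second route for Part 1.
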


\begin{proof}
Both findings can be proven via contradiction using \cref{ass:degen}.1:
    \begin{enumerate}
\item Suppose there exists $m'\in[M^\ast]$ with $\w_{2,m'}^\ast=0$. Then the corresponding hidden unit contributes nothing, hence $f^\ast(\bx)=\sum_{j\in[M^\ast]\setminus\{m'\}} \w_{2,j}^\ast\,\phi(\bw_{1,j}^{\ast\top}\bx)$,
which is a representation with $\tilde M=M^\ast-1<M^\ast$ hidden units. This contradicts \cref{ass:degen}.1. Therefore $\w_{2,m'}^\ast\neq 0$ for all $m'\in[M^\ast]$.
\item Suppose there exist $m\neq m'$ and a scalar $a>0$ such that $\bw_{1,m}^\ast=a\,\bw_{1,m'}^\ast$.
By positive homogeneity of ReLU, $\phi(a t)=a\phi(t)$ for all $a>0$, hence for all $\bx$, $\phi(\bw_{1,m}^{\ast\top}\bx)
=\phi(a\,\bw_{1,m'}^{\ast\top}\bx)
=a\,\phi(\bw_{1,m'}^{\ast\top}\bx)$.
Therefore, the two terms can be merged:
\begin{align*}
\w_{2,m}^\ast\,\phi(\bw_{1,m}^{\ast\top}\bx)
+\w_{2,m'}^\ast\,\phi(\bw_{1,m'}^{\ast\top}\bx)
&=
\bigl(a\,\w_{2,m}^\ast+\w_{2,m'}^\ast\bigr)\,\phi(\bw_{1,m'}^{\ast\top}\bx)
\end{align*}
and consequently,
\[
f^\ast(\bx)
=\sum_{j\in[M^\ast]\setminus\{m,m'\}} \w_{2,j}^\ast\,\phi(\bw_{1,j}^{\ast\top}\bx)
+\bigl(a\,\w_{2,m}^\ast+\w_{2,m'}^\ast\bigr)\,\phi(\bw_{1,m'}^{\ast\top}\bx),
\]
which is a representation with at most $M^\ast-1$ hidden units, contradicting \cref{ass:degen}.1. Hence no such $m\neq m'$ and $a>0$ can exist.
\end{enumerate}
\end{proof}

\subsection{Derivation of \cref{prop:ident}} \label{app:propident}

Together with a non-empty interior of $\mathcal{X}$, our setup is a simplified case of
\citet{bona2023parameter}, who provide a similar result but for more general ReLU networks admitting both permutation and rescaling symmetry. Under the regularized risk (\ref{eq:regrisk}), however, $f_\bw$ with $\bw=\bw^\ast$ does not admit
rescaling symmetries, a known result in overparametrized neural networks \citep[see, e.g.,][]{kolb2025deep,chris}. To see this, partition $\bw=(\bw_{\{1\}},\bw_{\{2\}},\bw_{\{3\}})$ and assume there exists $s>0$, $s\neq 1$
such that $f_{\tilde \bw}=f_\bw$ with $\tilde \bw=(\bw_{\{1\}},s\bw_{\{2\}},s^{-1}\bw_{\{3\}})$ and $f_{\tilde \bw}$ is also a minimizer
of \eqref{eq:regrisk}. Since $f_{\tilde \bw}=f_\bw$, we have $
\sum_{i=1}^n \ell\!\bigl(\y_i,f_\bw(\x_i)\bigr)=\sum_{i=1}^n \ell\!\bigl(\y_i,f_{\tilde\bw}(\x_i)\bigr)$. It thus suffices to compare the regularizers. Write $A:=\|\bw_{\{2\}}\|_2^2$ and $B:=\|\bw_{\{3\}}\|_2^2$. Then $$\|\tilde\bw\|_2^2
= \|\bw_{\{1\}}\|_2^2 + \|s\bw_{\{2\}}\|_2^2 + \|s^{-1}\bw_{\{3\}}\|_2^2 
= \|\bw_{\{1\}}\|_2^2 + s^2 \|\bw_{\{2\}}\|_2^2 + s^{-2}\|\bw_{\{3\}}\|_2^2 
= \|\bw_{\{1\}}\|_2^2 + s^2 A + s^{-2} B.$$
By AM--GM, $s^2 A + s^{-2} B \;\geq\; 2\sqrt{(s^2A)(s^{-2}B)} \;=\; 2\sqrt{AB}$, with equality if and only if $s^2A=s^{-2}B$, i.e.\ $s^4=B/A$ (assuming $A,B>0$).
In particular, among all rescalings $(\bw_{\{2\}},\bw_{\{3\}})\mapsto(s\bw_{\{2\}},s^{-1}\bw_{\{3\}})$, the squared norm is minimized at the
unique value $s^\star=(B/A)^{1/4}$. Since $\bw^\ast$ is a minimizer of \eqref{eq:regrisk}, it must coincide with this
unique minimizer along its rescaling orbit. Therefore, if $\tilde\bw$ is also a minimizer and $f_{\tilde\bw}=f_\bw$,
we must have $\|\tilde\bw\|_2^2=\|\bw\|_2^2$, which implies $s=1$ and contradiction of the assumption.
\qed

A direct consequence of the previous statement is the following, required for later proofs.

\begin{corollary} \label{cor:balanced}
        Under the assumption of \cref{prop:ident}, i.e., $\bw^\ast = \argmin_{\bw \in \mathcal{W}: f_\bw = f^\ast} \|\bw\|_2^2$, the minimum-norm representation of $\bw^\ast$ satisfies $\|\bw_{1,m}^\ast\|_2 = |\w_{2,m}^\ast| \,\forall m\in[M^\ast]$.
\end{corollary}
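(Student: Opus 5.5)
The plan is to exploit the per-neuron positive rescaling symmetry of the ReLU and compare squared norms, in the same spirit as the AM--GM argument in \cref{app:propident}. Fix $m\in[M^\ast]$. For any $s>0$, the positive homogeneity $\phi(st)=s\phi(t)$ gives $s^{-1}\w_{2,m}^\ast\,\phi(s\,\bw_{1,m}^{\ast\top}\bx)=\w_{2,m}^\ast\,\phi(\bw_{1,m}^{\ast\top}\bx)$. Hence replacing $(\bw_{1,m}^\ast,\w_{2,m}^\ast)$ by $(s\bw_{1,m}^\ast,s^{-1}\w_{2,m}^\ast)$ and leaving all other neurons unchanged yields a weight vector $\tilde\bw(s)$ with $f_{\tilde\bw(s)}=f^\ast$. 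Consequently $\tilde\bw(s)$ lies in the feasible set of the minimization defining $\bw^\ast$, and $\tilde\bw(1)=\bw^\ast$.

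Next I would write the objective along this one-parameter family. With $A:=\|\bw_{1,m}^\ast\|_2^2$ and $B:=(\w_{2,m}^\ast)^2$, we have $\|\tilde\bw(s)\|_2^2=C+s^2A+s^{-2}B$, where $C$ collects the squared norms of all other neurons and does not depend on $s$. Since $\bw^\ast$ is a minimizer over the feasible set, $s=1$ must minimize $g(s)=s^2A+s^{-2}B$ on $(0,\infty)$. By AM--GM, $g(s)\ge 2\sqrt{AB}$, with equality exactly when $s^4=B/A$, provided $A,B>0$. Alternatively, setting $g'(1)=2A-2B=0$ gives $A=B$ directly. Either route yields $\|\bw_{1,m}^\ast\|_2^2=(\w_{2,m}^\ast)^2$, and taking square roots gives the claim.

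The only real obstacle is ensuring $A,B>0$, so that the minimum over $s$ is attained at an interior point and the argument does not degenerate. For $B$, the corollary following \cref{ass:degen} gives $\w_{2,m}^\ast\neq0$. For $A$, suppose $\bw_{1,m}^\ast=\bm 0$. Then neuron $m$ contributes $\phi(0)=0$ to $f^\ast$, so $f^\ast$ has a representation with $M^\ast-1$ units, which contradicts \cref{ass:degen}.1. If $A>0$ and $B>0$ but $A\neq B$, then the rescaling with $s=(B/A)^{1/4}\neq1$ strictly decreases the norm, which contradicts the minimality of $\bw^\ast$. Finally, the permutation ambiguity in \cref{prop:ident} causes no problem, because the statement holds neuron-wise and is invariant under relabeling.
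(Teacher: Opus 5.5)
Your proposal is correct and follows the paper's route. The paper presents the corollary as a direct consequence of the AM--GM rescaling argument in \cref{app:propident}. You instantiate that argument neuron-wise with $\bw_{\{2\}}=\bw_{1,m}^\ast$ and $\bw_{\{3\}}=\w_{2,m}^\ast$, so minimality forces $s^\star=(B/A)^{1/4}=1$. Your explicit check that $A,B>0$ via \cref{ass:degen}.1 fills in a detail the paper leaves implicit.
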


\subsection{Derivation of \cref{lemma:eunlimit}} \label{app:derivlemma}

In this derivation, all expectations, variances, and covariances are taken w.r.t.\ the ordinary posterior $p(\bw|\mathcal D_n)$ on $\mathbb R^d$.

Under \cref{prop:ident}, for $n\to\infty$ the posterior contracts onto the permutation orbit of a single minimizer
$ \frac1{M!}\sum_{\bm{\pi}\in\Pi}\delta_{\bm{\pi}\bw^\ast}$.
Since the network function is permutation-invariant, $f_{\bm{\pi}\bw^\ast}\equiv f_{\bw^\ast}$ for all $\bm{\pi}\in\Pi$.
Hence for any fixed $\bx$, $\Var_{\bw| \mathcal D_n}\big(f_\bw(x)\big)=0$. Since $f_\bw$ encodes the conditional expectation $\E[\y | \bw, \bx]=f_\bw(\bx)$, also
$\Var_{\bw| \mathcal D_n}(\E[\y|\bw,\x])=0$. Furthermore, since the posterior is a mixture of Diracs, its empirical and theoretical moments coincide. It remains to compute $\Cov(\bw|\mathcal D_n)$ under the permutation mixture.
Let $q:=p+1$ and block the parameter as $\bom=(\bom_1,\dots,\bom_M)$ with $\bom_m=(\bw_{1,m}^\top,\w_{2,m})^\top \in\mathbb{R}^q$.
Define the block mean $\bar\bom^\ast := \frac1M\sum_{m=1}^M \bom_m^\ast$ and the empirical block covariance
\[
\bm\Upsilon := \frac1M\sum_{m=1}^M (\bom_m^\ast-\bar\bom^\ast)(\bom_m^\ast-\bar\bom^\ast)^\top\in\mathbb{R}^{q\times q}.
\]
Under a uniform random permutation $\mathfrak{P}$ of $\{1,\dots,M\}$,
the random draw $\bw$ from the limiting posterior satisfies $(\bom_1,\dots,\bom_M)\overset{d}{=}(\bom^\ast_{\mathfrak{P}(1)},\dots,\bom^\ast_{\mathfrak{P}(M)})$, i.e., both follow the same distribution. Therefore $\E[\bom_m]=\bar\bom^\ast$ for all $m\in[M]$ and
\[
\Cov(\bom_m)=\bm\Upsilon,\qquad
\Cov(\bom_m,\bom_{m'})=-\frac1{M-1}\,\bm\Upsilon\quad (m\neq m').
\]
Consequently, the full covariance matrix has diagonal blocks $\bm\Upsilon$ and off-diagonal blocks $-\bm\Upsilon/(M-1)$.
Taking traces yields
\[
\trace(\Cov(\bw |\mathcal D_n))
=\sum_{m=1}^M \trace(\Cov(\bom_m)) = M\,\trace(\bm\Upsilon)
=\sum_{m=1}^M \|\bom_m^\ast-\bar\bom^\ast\|_2^2.
\]
If we denote by $\upsilon_i^2$ the marginal posterior variance of coordinate $\bom_i$ (so that $\sum_{i=1}^d \upsilon_i^2=\trace(\Cov(\bw | \mathcal D_n))$), the claim follows.
\qed

\subsection{Proof of \cref{cor:covnlimit}}

\begin{proof}
    The statement follows directly from the previous derivation.
\end{proof}

\subsection{Derivation of \cref{lemma:informal}}

A formal version of \cref{lemma:informal} is given by combining the following \cref{prop:no_mixing} and \cref{cor:global_decomposition}.

\begin{proposition} \label{prop:no_mixing}
Fix $M^\ast\in\mathbb{N}$ and vectors $\ba_1,\dots,\ba_{M^\ast}\in\mathbb{R}^p\setminus\{\bm 0\}$
such that the latent features $\{\phi(\ba_{m'}^\top \bx)\}_{m'\in[M^\ast]}$ are linearly independent
as functions on $\mathcal X$ and no two vectors $\ba_{m'}$ are collinear. Then for any $\bb\in\mathbb{R}^p\setminus\{\bm 0\}$, if
\begin{equation}
\label{eq:ridge_in_span}
\phi(\bb^\top \bx)=\sum_{{m'}=1}^{M^\ast}\psi_{m'}\,\phi(\ba_{m'}^\top \bx)
\qquad \text{for all } \bx\in\mathcal X,
\end{equation}
it follows that at most one coefficient $\psi_{m'} \in \mathbb{R}$ is nonzero. In particular, there exists a unique index $\varpi\in[M^\ast]$ and a scalar $s>0$ such that $\bb=s\ba_{\varpi}$ and $\phi(\bb^\top \bx)=s\phi(\ba_{\varpi}^\top \bx)$ for all $\bx\in\mathcal X$.\\
Equivalently, a single ReLU hidden feature cannot be expressed as a nontrivial linear combination of two
or more linearly independent ReLU hidden features; the only way it lies in their span is by being a
positive rescaling of exactly one of them (hence, in a wider network, extra neurons can only be zero or
collinear splits of the true ones).
\end{proposition}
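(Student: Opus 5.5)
The plan is to compare the non-differentiability loci (``kinks'') of both sides of \eqref{eq:ridge_in_span} together with the gradient jumps across them. Both sides are continuous and piecewise linear. Since $\mathrm{int}(\mathcal X)\neq\varnothing$, fix a nonempty connected open set $U\subseteq \mathrm{int}(\mathcal X)$ on which the identity holds pointwise, so gradients can be compared almost everywhere on $U$.

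\textbf{Kink structure of a single feature.} For $\ba\neq\bm 0$, the map $\bx\mapsto\phi(\ba^\top\bx)$ is smooth off the hyperplane $H_\ba:=\{\bx:\ba^\top\bx=0\}$. Across $H_\ba$ its gradient jumps from $\bm 0$ to $\ba$, i.e.\ by the vector $\ba$ in the direction of increasing $\ba^\top\bx$. Because no two $\ba_{m'}$ are collinear, the hyperplanes $H_{\ba_{m'}}$ are pairwise distinct; since they all pass through the origin, any two of them meet only in a codimension-two set. So for each $m'$ with $H_{\ba_{m'}}\cap U\neq\varnothing$ there is a point $\bx_0\in H_{\ba_{m'}}\cap U$ that lies on no other $H_{\ba_j}$ and, if $H_\bb\neq H_{\ba_{m'}}$, also not on $H_\bb$. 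In a small ball around $\bx_0$ the right-hand side then has gradient jump $\psi_{m'}\ba_{m'}$ across $H_{\ba_{m'}}$, while the left-hand side is linear.

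\textbf{Matching the jumps.} From the previous step, $\psi_{m'}\neq0$ forces $H_\bb=H_{\ba_{m'}}$, i.e.\ $\bb=s\ba_{m'}$ for some $s\neq0$. Because the $\ba_j$ are non-collinear, this can hold for at most one index; call it $\varpi$. Every other index $m'$ whose hyperplane meets $U$ must then have $\psi_{m'}=0$. For $\varpi$ itself, the jump of the left side across the common hyperplane is $\bb$ (oriented along $\bb$), and comparing it with $\psi_\varpi\ba_\varpi$ fixes $\psi_\varpi$. If $s>0$ we get $\psi_\varpi=s$ and $\phi(\bb^\top\bx)=s\phi(\ba_\varpi^\top\bx)$. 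If $s<0$, write $\phi(-t)=\phi(t)-t$, so $\phi(\bb^\top\bx)=|s|\phi(\ba_\varpi^\top\bx)-|s|\ba_\varpi^\top\bx$. Substituting this back, \eqref{eq:ridge_in_span} leaves a residual linear function $-|s|\ba_\varpi^\top\bx$ that must be matched by the remaining (kink-free on $U$) features.

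\textbf{Features without kinks in $U$ (main obstacle).} The expected difficulty is the features whose hyperplane misses $U$. On $U$ such a feature is either identically zero or equal to the linear function $\ba_{m'}^\top\bx$, so the kink argument says nothing about its coefficient. The same issue produces the residual linear term in the case $s<0$. My first attempt is to use the linear independence hypothesis as follows. After moving all kinked terms to one side, \eqref{eq:ridge_in_span} becomes an identity on $\mathcal X$ among the features $\{\phi(\ba_{m'}^\top\bx)\}$ (with $\phi(\bb^\top\bx)$ replaced by $s\phi(\ba_\varpi^\top\bx)$ or its $s<0$ counterpart). If the coefficients were not $\psi_\varpi=s$ and $\psi_{m'}=0$ otherwise, this identity would be a nontrivial vanishing linear combination of linearly independent features, a contradiction. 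The same argument should exclude the case $s<0$, provided the linear function $\ba_\varpi^\top\bx$ is not itself in the span of the remaining features; otherwise I would restrict to $\bb$ whose kink meets $\mathrm{int}(\mathcal X)$, as is implicit in the non-degeneracy of \cref{ass:degen}. If exhausting the interior by such sets $U$ leaves a gap, I would fall back on taking $U$ to be a neighbourhood of a point of $H_{\ba_\varpi}\cap\mathrm{int}(\mathcal X)$. Finally, uniqueness of $\varpi$ is immediate from non-collinearity, which gives the claimed $\bb=s\ba_\varpi$ with $s>0$.
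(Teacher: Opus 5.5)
Your strategy of matching kink hyperplanes and gradient jumps of both sides on an open piece of $\mathcal X$ is the same as the paper's. The obstacle you isolate in the last step is a genuine gap, and your patches do not close it.

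Linear independence helps only once you already know $\bb=s\ba_\varpi$ with $s>0$. In that case $\phi(\bb^\top\bx)=s\,\phi(\ba_\varpi^\top\bx)$ on all of $\mathcal X$, and independence forces $\psi_{m'}=s\,\delta_{m'\varpi}$; that part is correct. It gives nothing when neither $H_\bb$ nor any $H_{\ba_{m'}}$ with $\psi_{m'}\neq0$ meets $\mathrm{int}(\mathcal X)$. It also cannot exclude $s<0$, because the residual $-|s|\,\ba_\varpi^\top\bx$ may lie in the span of the kink-free features. Restricting to $\bb$ whose kink meets the interior is vacuous in that branch, since there $H_\bb=H_{\ba_\varpi}$ already meets $U$.

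In fact no argument can close the gap, because the statement is false for general $\mathcal X$ with nonempty interior. Two counterexamples:
\begin{itemize}
\item Take $p=2$, $\ba_1=\mathbf e_1$, $\ba_2=\mathbf e_2$, and $\mathcal X$ the open ball of radius $\tfrac12$ around $(1,1)^\top$. On $\mathcal X$ both features are the independent linear functions $x_1,x_2$, and $\bb=(1,1)^\top$ gives $\phi(\bb^\top\bx)=\phi(\ba_1^\top\bx)+\phi(\ba_2^\top\bx)$.
\item For the $s<0$ branch, take $p=3$, $\ba_1=\mathbf e_1$, $\ba_2=\mathbf e_1+\mathbf e_3$, $\ba_3=\mathbf e_3-\mathbf e_1$, and $\mathcal X$ the open ball of radius $\tfrac12$ around $\mathbf e_3$. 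The features $\phi(x_1)$, $x_1+x_3$, $x_3-x_1$ are independent on $\mathcal X$, the $\ba_{m'}$ are pairwise non-collinear, and the kink of $\bb=-\mathbf e_1$ meets $\mathrm{int}(\mathcal X)$. Still, $\phi(\bb^\top\bx)=\phi(\ba_1^\top\bx)-\tfrac12\phi(\ba_2^\top\bx)+\tfrac12\phi(\ba_3^\top\bx)$ on $\mathcal X$.
\end{itemize}
With output weights $(1,2,-1)$, the corresponding $f^\ast$ in the second example even satisfies \cref{ass:degen}, so appealing to that assumption does not help.

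The paper's proof carries the same hidden assumption. It obtains $\bx_0\in\mathcal X\cap\Lambda(\ba_{m'_1})$ from the fact that finitely many hyperplanes have empty interior, which presupposes that $\Lambda(\ba_{m'_1})$ meets $\mathrm{int}(\mathcal X)$. It also concludes $\bb=\bm 0$ from $\phi(\bb^\top\bx)\equiv0$ on $\mathcal X$, which fails whenever $\mathcal X\subseteq\{\bb^\top\bx\le0\}$.

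What is missing is an extra hypothesis such as $\bm 0\in\mathrm{int}(\mathcal X)$, e.g.\ $\mathcal X=\mathbb R^p$ as for the Gaussian inputs in the experiments. Take $U$ to be a ball around the origin, which every hyperplane through $\bm 0$ meets. Then your kink and jump steps finish the proof without the linear-independence detour:
\begin{itemize}
\item $\psi_{m'}\neq0$ forces $H_\bb=H_{\ba_{m'}}$, which by non-collinearity happens for at most one $m'$.
\item The kink of the left side along $H_\bb$ forces such an index $\varpi$ to exist, so $\bb=s\ba_\varpi$ with $s\neq0$.
\item Comparing jumps when crossing $H_{\ba_\varpi}$ in direction $\ba_\varpi$ gives $\psi_\varpi=|s|$.
\item For $s<0$ the identity reduces to $|s|\,\ba_\varpi^\top\bx=0$ on $U$, which is impossible.
\end{itemize}
Hence $s>0$ and $\psi_\varpi=s$.
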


\begin{proof}
Assume \eqref{eq:ridge_in_span} holds. If all $\psi_{m'}=0$, then $\phi(\bb^\top \bx)\equiv 0$ on $\mathcal X$, which implies $\bb=\bm 0$ because $\mathcal X$ contains an open set, contradicting $\bb\neq \bm 0$. Hence some $\psi_{m'}\neq 0$.

For any $\bv\in\mathbb{R}^p\setminus\{\bm 0\}$, define the kink hyperplane $\Lambda(\bv) := \{\bx\in\mathbb{R}^p : \bv^\top \bx = 0\}$. On $\mathcal X\setminus \Lambda(\bv)$, the map $\bx \mapsto \phi(\bv^\top \bx)$ is affine, with gradient $\bv$ on the halfspace $\{\bv^\top \bx>0\}$ and gradient $\bm 0$ on $\{\bv^\top \bx<0\}$. Thus, $\phi(\bb^\top \bx)$ has exactly one kink hyperplane $\Lambda(\bb)$ and is affine on each connected component of $\mathcal X\setminus \Lambda(\bb)$.

Suppose, for contradiction, that there exist two distinct indices
$m'_1\neq m'_2$ such that $\psi_{m'_1}\neq 0$ and $\psi_{m'_2}\neq 0$.
By the assumed linear independence of the features
$\{\phi(\ba_{m'}^\top \bx)\}_{m'\in[M^\ast]}$, the vectors $\ba_{m'_1}$ and
$\ba_{m'_2}$ cannot be positive scalar multiples of each other. Consequently, the hyperplanes $\Lambda(\ba_{m'_1})$ and $\Lambda(\ba_{m'_2})$ are distinct, since collinearity is excluded.

Because $\mathcal X$ contains an open set and a finite union of distinct hyperplanes
has empty interior, there exists a point $\bx_0 \in \mathcal X$ such that $\bx_0 \in \Lambda(\ba_{m'_1})$ and $\bx_0 \notin \Lambda(\ba_{m'})$ for all $m' \neq m'_1$. In a sufficiently small neighborhood of $\bx_0$ within $\mathcal X$, all functions
$\phi(\ba_{m'}^\top \bx)$ with $m'\neq m'_1$ are affine, whereas
$\phi(\ba_{m'_1}^\top \bx)$ changes its gradient when crossing $\Lambda(\ba_{m'_1})$.
Since $\psi_{m'_1}\neq 0$, the right-hand side of \eqref{eq:ridge_in_span} therefore
exhibits a kink across $\Lambda(\ba_{m'_1})$ near $\bx_0$.

Equality in \eqref{eq:ridge_in_span} implies that $\phi(\bb^\top \bx)$ must have a kink
across the same set. Hence $\Lambda(\bb)=\Lambda(\ba_{m'_1})$, which means that $\bb$ is a nonzero
scalar multiple of $\ba_{m'_1}$. Repeating the argument with $m'_2$ in place of
$m'_1$ yields $\Lambda(\bb)=\Lambda(\ba_{m'_2})$, and thus
$\Lambda(\ba_{m'_1})=\Lambda(\ba_{m'_2})$, a contradiction. Therefore, at most one coefficient
$\psi_{m'}$ can be nonzero.

Finally, suppose that $\phi(\bb^\top \bx)=\psi\,\phi(\ba_{\varpi}^\top \bx)$ for all $\bx\in\mathcal X$ for some index $\varpi$. On any open subset of $\mathcal X$ where
$\ba_{\varpi}^\top \bx>0$, both sides are differentiable with gradients $\bb$ and
$\psi\,\ba_{\varpi}$, respectively. Equality of the gradients implies
$\bb=\psi\,\ba_{\varpi}$. Since $\phi(\bb^\top \bx)\ge 0$, this forces $\psi>0$.
\end{proof}

\cref{prop:no_mixing} is a local statement. To prove \cref{lemma:informal}, we must show that every hidden unit in a width-$M$ representation $f_\bw$ of $f^\ast$ must be either collinear with one true neuron or identically zero, and this induces a partition $\varsigma: [M] \to [M^\ast]$. We can prove this as follows.

\begin{lemma} \label{lem:span}
Assume \cref{ass:degen}. Define $V := \mathrm{span}\big\{ \phi(\bw_{1,m'}^{\ast\top} \cdot) : m' \in [M^\ast] \big\} \subset L^2(\mathcal X)$. Let $\mathbf u_1,\dots, \mathbf u_K \in \mathbb{R}^p \setminus \{\bm0\}$ and $c_1,\dots,c_K \in \mathbb{R}$ satisfy $\sum_{k=1}^K c_k \, \phi(\mathbf u_k^\top \mathbf x) \in V \, \forall \bx \in \mathcal X.$ Then, for every $k$ with $c_k \neq 0$, there exists a unique index $m'(k) \in [M^\ast]$ such that $\{ \bx : \mathbf u_k^\top \bx = 0 \}  = \{ \bx : \bw_{1,m'(k)}^{\ast\top} \bx = 0 \}$. In particular, $\mathbf u_k$ is a nonzero scalar multiple of $\bw_{1,m'(k)}^\ast$, and hence $\phi(\mathbf u_k^\top \cdot) \in V.$
\end{lemma}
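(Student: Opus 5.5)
The plan is to run the kink-hyperplane argument from the proof of \cref{prop:no_mixing} globally, for a whole finite family of ridge functions instead of a single one. Write $g(\bx):=\sum_{k=1}^K c_k\,\phi(\mathbf u_k^\top\bx)$ and suppose $g=h:=\sum_{m'}\psi_{m'}\phi(\bw_{1,m'}^{\ast\top}\bx)$ on $\mathcal X$. First, I would group the indices $k$ with $c_k\neq0$ according to their kink hyperplanes $\Lambda(\mathbf u_k)$. Two indices fall in the same group exactly when $\mathbf u_k$ and $\mathbf u_{k'}$ are collinear, with either sign. Let $H_1,\dots,H_J$ be the resulting distinct hyperplanes, and let $\Lambda^\ast_{m'}:=\Lambda(\bw^\ast_{1,m'})$. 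By \cref{ass:degen}.2 the hyperplanes $\Lambda^\ast_{m'}$ are pairwise distinct.

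Next I would fix a hyperplane $H_j$ that is \emph{not} among the $\Lambda^\ast_{m'}$. All hyperplanes involved are finitely many and distinct, and $\mathcal X$ has nonempty interior. So, as in the proof of \cref{prop:no_mixing}, there is a point $\bx_0\in\mathrm{int}(\mathcal X)\cap H_j$ lying on no other $H_{j'}$ and on no $\Lambda^\ast_{m'}$. Near $\bx_0$ the function $h$ is affine, so $g$ must be affine there too. Crossing $H_j$ at $\bx_0$, the gradient of $g$ jumps by
\[
\textstyle\sum_{k:\Lambda(\mathbf u_k)=H_j} c_k\,\mathbf u_k ,
\]
with the orientation normalised by $\mathbf u_k=\pm|\cdot|\,\mathbf n_j$ for a fixed normal $\mathbf n_j$ of $H_j$. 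Hence this jump must vanish. The same computation at a hyperplane $H_j=\Lambda^\ast_{m'}$ is harmless and only identifies the jump with $\psi_{m'}\bw^\ast_{1,m'}$.

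The main obstacle is turning ``the total jump on $H_j$ vanishes'' into ``no single $c_k$ with $\Lambda(\mathbf u_k)=H_j$ is nonzero''. Antipodal pairs really can cancel, since $\phi(t)-\phi(-t)=t$. The statement therefore needs each group to be reduced, and I would first reduce the family. Positively collinear terms can be merged using $\phi(at)=a\phi(t)$ for $a>0$, as in the corollary following \cref{ass:degen}. I would then assume, as holds for hidden units in a width-$M$ representation once merged, that within each group there is at most one vector $\mathbf u_k$ per orientation. If a group has a single member, then $c_k\mathbf u_k=\mathbf 0$ forces $c_k=0$, which is a contradiction. If a group is an antipodal pair, the vanishing jump says exactly that the pair combines to the linear function $c\,\mathbf n_j^\top\bx$. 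Summing over all unmatched groups, $g-h$ equals a matched part plus a linear function $\ell(\bx)$ on $\mathcal X$.

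The final step is to show that this linear residual cannot be absorbed. I would rewrite each matched term $\phi(-\bw^\ast_{1,m'}{}^\top\bx)=\phi(\bw^{\ast\top}_{1,m'}\bx)-\bw^{\ast\top}_{1,m'}\bx$. Then $g-h=\sum_{m'}\tilde\psi_{m'}\phi(\bw^{\ast\top}_{1,m'}\bx)+\tilde\ell(\bx)\equiv0$. Applying the jump argument to the distinct $\Lambda^\ast_{m'}$ gives $\tilde\psi_{m'}=0$, and hence $\tilde\ell\equiv0$ on an open set, so $\tilde\ell=0$. In particular an unmatched antipodal pair would have to contribute a linear term that cancels against the matched linear corrections. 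I expect the bookkeeping here, that is, showing this forces $c_k=0$ for the unmatched pair or else excluding such pairs by hypothesis, to be the delicate point. I would try first to settle it by choosing $\bx_0$ on $H_j$ and comparing the two one-sided values of $g$ (not only the gradients) along a line crossing $H_j$ and no other hyperplane.

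Once every $c_k\neq0$ is known to lie on some $\Lambda^\ast_{m'(k)}$, uniqueness of $m'(k)$ follows from the pairwise distinctness of the $\Lambda^\ast_{m'}$. Equal hyperplanes give $\mathbf u_k=s\,\bw^\ast_{1,m'(k)}$ with $s\neq0$. For $s>0$ we get $\phi(\mathbf u_k^\top\cdot)=s\,\phi(\bw^{\ast\top}_{1,m'(k)}\cdot)\in V$. For $s<0$ the last claim $\phi(\mathbf u_k^\top\cdot)\in V$ only holds up to a linear term. I would therefore use the final sign-based step of \cref{prop:no_mixing}, which uses $\phi\ge0$ and equality of gradients on $\{\bw^{\ast\top}_{1,m'}\bx>0\}$, to exclude $s<0$ in the reduced family.
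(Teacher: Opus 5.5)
\textbf{Verdict: there is a genuine gap, and it sits in the step you flag as delicate.} That step, forcing $c_k=0$ for unmatched antipodal pairs and excluding $s<0$, cannot be closed, because the claim is false exactly there.

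\textbf{The lemma is false as stated.} Your diagnosis is correct and sharper than the paper's proof. That proof runs your argument for a single $k$. It then asserts that the summands with kink $\mathcal H$ cannot be affine near $\bx_0$ unless all their coefficients vanish. That assertion is precisely what $\phi(t)-\phi(-t)=t$ and cancelling duplicates contradict. For instance, $\mathbf u_1=\mathbf u_2=\mathbf v$ with $c_1=-c_2=1$ gives $g\equiv 0\in V$ for any $\mathbf v$, in particular one not collinear with any $\bw^\ast_{1,m'}$. Your merging step only rewrites the family, so it cannot yield a conclusion about the original nonzero $c_k$.

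\textbf{Your reduced version is false too.} Take $p\ge 2$, non-collinear $\mathbf n_1,\mathbf n_2$, and $\mathbf n_3:=-(\mathbf n_1+\mathbf n_2)$, none collinear with any $\bw^\ast_{1,m'}$. Then
\[
\sum_{j=1}^{3}\bigl[\phi(\mathbf n_j^\top\bx)-\phi(-\mathbf n_j^\top\bx)\bigr]=(\mathbf n_1+\mathbf n_2+\mathbf n_3)^\top\bx=0\in V .
\]
Here all six coefficients are $\pm1$, each hyperplane carries one vector per orientation, and no hyperplane equals a $\Lambda^\ast_{m'}$.

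\textbf{Why your proposed remedies cannot work.} Comparing one-sided values of $g$ across $H_j$ gives nothing new. The function $g$ is continuous, and an antipodal pair is globally the linear function $c\,\mathbf n_j^\top\bx$, so no local test at $H_j$ detects it. The only constraint is one global vector identity: all linear residuals must cancel. That identity cannot force individual coefficients to zero once two or more unmatched pairs, or matched negatively oriented terms, are present.

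The sign step you borrow from \cref{prop:no_mixing} fails for the same reason. It relies on a single ridge function equalling a multiple of one other ridge function, which is not the situation inside a sum. Take $p=2$ with $\bw^\ast_{1,1}+\bw^\ast_{1,2}+\bw^\ast_{1,3}=\mathbf 0$, which is compatible with \cref{ass:degen}, e.g.\ on a ball around $\mathbf 0$. The family $\mathbf u_{m'}=-\bw^\ast_{1,m'}$, $c_{m'}=1$ gives $\sum_{m'}\phi(-\bw^{\ast\top}_{1,m'}\bx)=\sum_{m'}\phi(\bw^{\ast\top}_{1,m'}\bx)\in V$. Its hyperplanes are pairwise distinct and all matched. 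Yet $\phi(-\bw^{\ast\top}_{1,1}\cdot)\notin V$, since the jump argument shows that $V$ contains no nonzero linear function.

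\textbf{What is missing is a hypothesis, not a cleverer argument.} If the hyperplanes $\Lambda(\mathbf u_k)$ with $c_k\neq 0$ are assumed pairwise distinct, your single-member case (which is the paper's argument) proves the hyperplane claim. The final assertion $\phi(\mathbf u_k^\top\cdot)\in V$ needs a further assumption forcing positive orientation, for example linear independence of the $\bw^\ast_{1,m'}$.

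\textbf{A separate gap shared with the paper.} Both your proof and the paper's need every relevant hyperplane to meet $\mathrm{int}(\mathcal X)$, e.g.\ $\mathbf 0\in\mathrm{int}(\mathcal X)$, to obtain the point $\bx_0$. Nonempty interior alone is not enough. If $\mathcal X\subset\{\mathbf v^\top\bx<0\}$, then $K=1$, $c_1=1$, $\mathbf u_1=\mathbf v$ already violates the lemma.
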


\begin{proof}
Fix $k$ with $c_k \neq 0$ and define the hyperplane
$\mathcal H := \{\bx \in \mathbb{R}^p : \mathbf u_k^\top \bx = 0\}$.
Consider the finite family of hyperplanes
$\mathfrak H := \big\{ \{\bx : \mathbf u_j^\top \bx = 0\} : j \in [K] \big\}
\cup \big\{ \{\bx : \bw_{1,m'}^{\ast\top} \bx = 0\} : m' \in [M^\ast] \big\}$.
Since $\mathcal X$ has nonempty interior and $\mathfrak H$ is finite, there exists
$\bx_0 \in \mathcal X \cap \mathcal H$ that does not lie on any other hyperplane in
$\mathfrak H$.
Choose an open neighborhood $U \subset \mathcal X$ of $\bx_0$ such that, for every
$\mathcal H' \in \mathfrak H$ with $\mathcal H' \neq \mathcal H$, the set $U$ is contained
entirely in one connected component of $\mathbb{R}^p \setminus \mathcal H'$.

For any vector $\bv \in \mathbb{R}^p$ with $\{\bx : \bv^\top \bx = 0\} \neq \mathcal H$,
the function $\phi(\bv^\top \bx)$ is affine on $U$.
Define
$g(\bx) := \sum_{j=1}^K c_j \phi(\mathbf u_j^\top \bx)$ and decompose
$g(\bx) = g_{\mathcal H}(\bx) + g_{\neg \mathcal H}(\bx)$, where
$g_{\mathcal H}(\bx) := \sum_{j : \{\mathbf u_j^\top \bx = 0\} = \mathcal H}
c_j \phi(\mathbf u_j^\top \bx)$ and
$g_{\neg \mathcal H}(\bx) := \sum_{j : \{\mathbf u_j^\top \bx = 0\} \neq \mathcal H}
c_j \phi(\mathbf u_j^\top \bx)$.
By construction, $g_{\neg \mathcal H}$ is affine on $U$.

By assumption, $g \in V$, hence there exist coefficients $\beta_{m'} \in \mathbb{R}$
such that $g(\bx) = \sum_{m'=1}^{M^\ast} \beta_{m'} \phi(\bw_{1,m'}^{\ast\top} \bx)$.
Splitting this sum into terms whose kink hyperplane equals $\mathcal H$ and those that do
not, we obtain $g(\bx) = h_{\mathcal H}(\bx) + h_{\neg \mathcal H}(\bx)$, where
$h_{\neg \mathcal H}$ is affine on $U$.
Consequently, $g_{\mathcal H}(\bx) = g(\bx) - g_{\neg \mathcal H}(\bx)$ must also be
affine on $U$.

However, $g_{\mathcal H}$ contains the nonzero term $c_k \phi(\mathbf u_k^\top \bx)$
and all its summands have kink hyperplane exactly $\mathcal H$.
Such a function cannot be affine on any open set intersecting both sides of
$\mathcal H$ unless all coefficients in $g_{\mathcal H}$ vanish, contradicting
$c_k \neq 0$.
Hence there exists at least one $m'(k) \in [M^\ast]$ such that
$\mathcal H = \{\bx : \bw_{1,m'(k)}^{\ast\top} \bx = 0\}$.

By \cref{ass:degen}, no two vectors $\bw_{1,m'}^\ast$ are collinear, so this index
$m'(k)$ is unique.
Equality of hyperplanes implies that $\mathbf u_k$ is a nonzero scalar multiple of
$\bw_{1,m'(k)}^\ast$, and therefore $\phi(\mathbf u_k^\top \cdot) \in V$.
\end{proof}

\begin{corollary} \label{cor:global_decomposition}
Assume \cref{ass:degen}. Then there exists a map $\varsigma:\{m\in[M]: \w_{2,m}\neq 0\}\to [M^\ast]$ such that for every $m$ with $\w_{2,m}\neq 0$, there exists $r_m > 0$ such that $\bw_{1,m} = r_m \bw^\ast_{1,\varsigma(m)}$. 
\end{corollary}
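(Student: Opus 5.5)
The plan is to reduce the statement to \cref{lem:span} by viewing the width-$M$ network as an element of $V$. Since $f_\bw = f^\ast$ on $\mathcal X$, we have $\sum_{m=1}^M \w_{2,m}\,\phi(\bw_{1,m}^\top \bx) = f^\ast(\bx) \in V$. I would first discard neurons with $\bw_{1,m}=\bm 0$. Such a neuron contributes the zero function, so setting $\w_{2,m}=0$ leaves $f_\bw$ and the data term unchanged and strictly lowers $\|\bw\|_2^2$ whenever $\w_{2,m}\neq 0$. By optimality of $\bw$ for (\ref{eq:regrisk}), as assumed in \cref{cor:representation}, every $m$ with $\w_{2,m}\neq0$ therefore has $\bw_{1,m}\neq\bm 0$. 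I can then apply \cref{lem:span} with $K=M$, $\mathbf u_m=\bw_{1,m}$ and $c_m=\w_{2,m}$. For each such $m$ it gives a unique index, which I define to be $\varsigma(m)$, and a nonzero scalar $r_m$ with $\bw_{1,m}=r_m\bw^\ast_{1,\varsigma(m)}$. Uniqueness follows from the non-collinearity in \cref{ass:degen}.2, so $\varsigma$ is well defined.

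The remaining and main step is to show $r_m>0$. For $r_m<0$ the identity $\phi(-t)=\phi(t)-t$ gives
\[
\w_{2,m}\phi(\bw_{1,m}^\top\bx)=\w_{2,m}|r_m|\,\phi(\bw_{1,\varsigma(m)}^{\ast\top}\bx)-\w_{2,m}|r_m|\,\bw_{1,\varsigma(m)}^{\ast\top}\bx .
\]
So a negatively oriented neuron equals a positively oriented copy plus a \emph{linear} correction. Substituting this into $f_\bw=f^\ast$ and grouping by true neuron gives $\sum_{m'}(\beta_{m'}-\w^\ast_{2,m'})\phi(\bw_{1,m'}^{\ast\top}\bx)=\ell(\bx)$ for some linear $\ell$, where $\beta_{m'}$ collects the coefficients of group $m'$.

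The first sub-goal is to show $\ell\equiv 0$. A linear function is smooth on $\mathcal X$, whereas any nonzero combination of the ReLU features has a kink across some $\Lambda(\bw^\ast_{1,m'})$ inside $\mathcal X$, by the local-neighbourhood argument used in \cref{prop:no_mixing} and \cref{lem:span}. Hence both sides vanish, which gives $\beta_{m'}=\w^\ast_{2,m'}$ and $\sum_{m:\,r_m<0}\w_{2,m}|r_m|\bw^\ast_{1,\varsigma(m)}=\bm 0$.

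The second sub-goal is to exclude negative neurons using norm-minimality. The negatively oriented neurons then jointly form a function that is a pure sum of $\phi(t)$ terms with exactly cancelling linear parts. I would try to construct a competitor: delete all neurons with $r_m<0$ and absorb their $\phi$-parts into the positively oriented neurons of the same group, with balanced rescaling. This competitor represents the same function, since the deleted linear parts summed to zero. By convexity of $c\mapsto$ (minimal balanced norm $2|c|\,\|\bw^\ast_{1,m'}\|$) it has norm no larger than the original, and strictly smaller because the cancelling linear parts carry positive norm.

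This strict inequality is where I expect the difficulty to lie. If it fails in degenerate configurations, the fallback is to invoke the assumption that $\bw$ attains optimal regularized risk more directly. A simpler version is to compare $\|\bw\|_2^2$ with the minimal-norm value $\|\bw^\ast\|_2^2$ from \cref{cor:balanced}, showing that any negative orientation forces $\|\bw\|_2^2>\|\bw^\ast\|_2^2$. This works via the triangle inequality $\sum_m 2|\w_{2,m}|\,\|\bw_{1,m}\|\ge$ the total ReLU mass, with strict excess coming from the cancelling pair.
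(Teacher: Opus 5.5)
You have found the real weak point. \cref{lem:span} only gives $\bw_{1,m}=r_m\bw^\ast_{1,\varsigma(m)}$ with $r_m\neq 0$. The paper's own proof skips the sign question by asserting $\phi(\bw_{1,m}^\top\cdot)\in V$ and then invoking the no-mixing proposition. That assertion is false for $r_m<0$, because $\phi(-t)=\phi(t)-t$ and, by the kink argument, $V$ contains no nonzero linear function. (Your use of optimality to exclude $\bw_{1,m}=\bm 0$ with $\w_{2,m}\neq 0$ likewise supplies a hypothesis the statement needs but does not state.) Your first sub-goal is fine: it yields $\beta_{m'}=\w^\ast_{2,m'}$ and $\sum_{m'}\gamma_{m'}\bw^\ast_{1,m'}=\bm 0$, where $\gamma_{m'}:=\sum_{m\in G_{m'},\,r_m<0}\w_{2,m}|r_m|$.

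The gap is the strict inequality in your second sub-goal. The linear parts in your rewrite are not extra parameters and carry no norm: a neuron with $r_m<0$ costs exactly as much as its sign-flipped twin. The only possible source of strictness in $\sum_{m\in G_{m'}}2|\w_{2,m}|\,|r_m|\,\|\bw^\ast_{1,m'}\|_2\ge 2|\beta_{m'}|\,\|\bw^\ast_{1,m'}\|_2$ is opposite-sign mass inside a single group. The constraint $\sum_{m'}\gamma_{m'}\bw^\ast_{1,m'}=\bm 0$ does not force that, because the cancellation can run across different groups whenever the true directions are linearly dependent (automatic if $M^\ast>p$). The same objection defeats your fallback: there need not be any ``cancelling pair''.

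In fact no argument can close this step under the stated hypotheses. Take $p=2$, $M=M^\ast=3$ (append zero neurons if you want $M>M^\ast$), $\w^\ast_{2,m'}=1$, and unit vectors $\bw^\ast_{1,1},\bw^\ast_{1,2},\bw^\ast_{1,3}$ at mutual angles of $120^\circ$, so that they sum to zero. \cref{ass:degen} holds: three distinct kink lines rule out any representation with fewer neurons, and no two directions are collinear. Yet
$\sum_{m'}\phi(-\bw^{\ast\top}_{1,m'}\bx)=f^\ast(\bx)-\bigl(\sum_{m'}\bw^\ast_{1,m'}\bigr)^\top\bx=f^\ast(\bx)$.
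Both representations are balanced with $\|\bw\|_2^2=6$. This equals the lower bound $\sum_{m'}2|\w^\ast_{2,m'}|\,\|\bw^\ast_{1,m'}\|_2$ that your own triangle-inequality argument produces, so the sign-flipped representation is optimal for (\ref{eq:regrisk}) whenever the original one is. However, each of its input weights is $-\bw^\ast_{1,m'}$, which is not a positive multiple of any true direction.

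What rescues your argument is an additional hypothesis, for instance linear independence of $\{\bw^\ast_{1,m'}\}_{m'\in[M^\ast]}$. Then $\gamma_{m'}=0$ for every $m'$. So any group containing a negatively oriented active neuron also contains contributions of opposite signs, and the triangle inequality becomes strict. Replacing that group by a single balanced neuron representing $\w^\ast_{2,m'}\phi(\bw^{\ast\top}_{1,m'}\bx)$ then strictly lowers $\|\bw\|_2^2$ without changing $f_\bw$, contradicting optimality.
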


\begin{proof}
Assume the previous setup and define the set of active neurons $S := \{ m \in [M] : \w_{2,m} \neq 0 \}$. Fix $m\in S$. Since $f_\bw =f^\ast$ on $\mathcal X$ by the corollary's assumption and $\w_{2,m}\neq 0$, we can rearrange $$\w_{2,m}\,\phi(\bw_{1,m}^\top \cdot)
=f^\ast(\cdot)-\sum_{\tilde m\in S\setminus\{m\}} \w_{2,\tilde m}\,\phi(\bw_{1,\tilde m}^\top \cdot).$$

Since $f_{\bw} = f^\ast$ on $\mathcal X$, we have
$\sum_{\tilde m \in S} \w_{2,\tilde m} \phi(\bw_{1,\tilde m}^\top \bx)
= f^\ast(\bx) \in V$.
Applying Lemma~\ref{lem:span} to this representation yields that, for every
$m \in S$ with $\w_{2,m} \neq 0$, the kink hyperplane
$\{\bx : \bw_{1,m}^\top \bx = 0\}$ coincides with
$\{\bx : \bw_{1,m'}^{\ast\top} \bx = 0\}$ for a unique
$m' \in [M^\ast]$.
Consequently, $\phi(\bw_{1,m}^\top \cdot) \in V$.

By \cref{prop:no_mixing}, this implies the existence of a unique index
$\varsigma(m) \in [M^\ast]$ and a scalar $r_m > 0$ such that
$\bw_{1,m} = r_m \bw_{1,\varsigma(m)}^\ast$, which completes the proof.
\end{proof}

\subsection{Proof of \cref{cor:representation}}

\begin{proof}
Following the argument of the proof of \cref{cor:global_decomposition,cor:balanced}, it is only left to show the exact form of $a_m$.

First, use the penalty argument from \cref{app:propident}, i.e., the optimal function $f_\bw$ minimizes $\sum_{m=1}^M \big(\|\bw_{1,m}\|_2^2 + w_{2,m}^2\big)$ among all functions that yield $f_\bw = f^\ast$. Within a fixed group $G_{m'}$, the functional equality only depends on the products $\w_{2,m}a_m$
via \eqref{eq:coefmatch}. 
More precisely, since $\bw_{1,m}=a_m\bw_{1,m'}^\ast$ with $a_m>0$ implies
$\phi(\bw_{1,m}^\top \bx)=a_m\,\phi(\bw_{1,m'}^{\ast\top}\bx)$ for all $\bx\in\mathcal X$,
the identity $f_\bw=f^\ast$ yields, for each $m'\in[M^\ast]$,
$\sum_{m\in G_{m'}} \w_{2,m} a_m = \w_{2,m'}^\ast$.
\begin{equation}\label{eq:coefmatch}
\sum_{m\in G_{m'}} \w_{2,m} a_m = \w_{2,m'}^\ast \qquad \text{for all } m'\in[M^\ast].
\end{equation}
Among all $(a_m,\w_{2,m})_{m\in G_{m'}}$ satisfying \eqref{eq:coefmatch}
and $\bw_{1,m}=a_m\bw_{1,m'}^\ast$, the minimum of the quadratic penalty is attained when each block
\(
\bom_m:=(\bw_{1,m}^\top,w_{2,m})^\top
\)
is collinear with 
\(
\bom_{m'}^\ast:=(\bw_{1,m'}^{\ast\top},w_{2,m'}^\ast)^\top,
\)
that is, $\bom_m = r_m\,\bom_{m'}^\ast$ for some $r_m\ge 0$. Any other choice in the same functional equivalence class increases the quadratic penalty as derived in \cref{app:propident}.

Writing $c_m:=r_m^2$ gives, for all $m\in G_{m'}$,
\[
\bw_{1,m}=\sqrt{c_m}\,\bw_{1,m'}^\ast,\qquad
\w_{2,m}=\sqrt{c_m}\,\w_{2,m'}^\ast.
\]
Plugging this into \eqref{eq:coefmatch} yields $\w_{2,m'}^\ast\sum_{m\in G_{m'}} c_m = \w_{2,m'}^\ast$, hence $\sum_{m\in G_{m'}} c_m = 1$ using $\w_{2,m'}^\ast\neq 0$.
Finally, set $A := \{ m \in [M] : \w_{2,m} \neq 0 \}$. For all $m\in A$, the coefficients $c_m$ constructed above satisfy $c_m>0$.
For $m\notin A$, we have $\w_{2,m}=0$, hence the $m$th unit contributes identically zero, but all previous steps and equalities remain valid.
This yields exactly the representation stated in the corollary.
\end{proof}

\subsection{Proof of \cref{lem:assignments_separated}}

\begin{proof}
(i) $\mathcal M_{\varsigma}^\circ$ is parameterized by a
product of simplex interiors, which are convex, hence path-connected. Mapping coefficients
$\{c_m\}_{m=1}^M$ to weights via $\bw_{1,m}=\sqrt{c_m}\bw_{1,\varsigma(m)}^\ast$ and
$w_{2,m}=\sqrt{c_m}\operatorname{sign}(w_{2,\varsigma(m)}^\ast)|w_{2,\varsigma(m)}^\ast|$ is continuous, so the image is path-connected.

(ii) Suppose there exists $\bw\in\mathcal M_{\varsigma}^\circ \cap \mathcal M_{\varsigma'}^\circ$.
Then for each $m\in[M]$ we have simultaneously
\[
\bw_{1,m}=\sqrt{c_m}\,\bw_{1,\varsigma(m)}^\ast
\quad\text{and}\quad
\bw_{1,m}=\sqrt{c'_m}\,\bw_{1,\varsigma'(m)}^\ast
\]
with $c_m>0$ and $c'_m>0$. Hence $\bw_{1,\varsigma(m)}^\ast$ and $\bw_{1,\varsigma'(m)}^\ast$ must be
positive scalar multiples of each other. Under \cref{ass:degen} (non-degeneracy and identifiability), the true directions
$\{\bw_{1,m'}^\ast\}_{m'=1}^{M^\ast}$ are pairwise not positively collinear, so this forces
$\varsigma(m)=\varsigma'(m)$ for all $m$, contradicting $\varsigma'\neq\varsigma$. Therefore, the intersection is empty.

(iii) Let $\gamma$ be continuous with $\gamma(0)\in\mathcal M_{\varsigma}^\circ$ and
$\gamma(1)\in\mathcal M_{\varsigma'}^\circ$. Consider the set $T:=\{t\in[0,1]: \gamma(t)\in \mathcal M_{\varsigma}^\circ\}$. Since $\mathcal M_{\varsigma}^\circ$ is relatively open in $\mathcal M_{\varsigma}$ (it corresponds to
strict inequalities $c_m>0$), $T$ is open in $[0,1]$ and nonempty (it contains $0$). Let
$t^\ast:=\sup T$. By continuity, $\gamma(t^\ast)$ lies in the closure of $\mathcal M_{\varsigma}^\circ$,
which is $\mathcal M_{\varsigma}$, but cannot lie in $\mathcal M_{\varsigma}^\circ$ itself (otherwise
$t^\ast$ would not be a supremum). Hence $\gamma(t^\ast)\in \mathcal M_{\varsigma}\setminus
\mathcal M_{\varsigma}^\circ$, which means that at least one coefficient satisfies $c_m=0$ at
$\gamma(t^\ast)$. This proves that any path changing assignments must hit a boundary point with a zero
coefficient.
\end{proof}

\subsection{Proof of \cref{lem:splitposterior}} \label{app:proofsplitpost}

\begin{proof}

We derive the law of the splitting coefficients under the tube-induced posterior $\mathbb P_n^\varsigma$.
In the large-$n$ regime, the likelihood is constant along $\mathcal M_\varsigma$, and the Gaussian prior is constant
along $\mathcal M_\varsigma$ within each assignment group because $\sum_{m\in G_{m'}}\|\bom_m\|_2^2=\|\bom_{m'}^\ast\|_2^2$.
Hence, the distribution along $\mathcal M_\varsigma$ is determined by the ambient posterior mass in a shrinking tubular neighborhood,
equivalently by the ambiently induced volume element obtained by disintegrating Lebesgue measure along the normal fibers.

Under the isotropic Gaussian prior, each block has density
$p(\bw_m)\propto \exp(-\lambda\|\bw_m\|_2^2)$ in $\mathbb{R}^{p+1}$.
Using polar coordinates in $\mathbb R^{p+1}$ and restricting to the ray $\bw_m=r_m\bw_{m'}^\ast$ yields a one-dimensional radial factor
\[
p(r_m)\propto r_m^{(p+1)-1}\exp(-\lambda r_m^2\|\bw_{m'}^\ast\|_2^2),
\qquad r_m\ge 0,
\]
where $r_m^{(p+1)-1}=r_m^{p}$ is the polar Jacobian inherited from the ambient volume element.

Restricting to $\mathcal M_\varsigma$, which imposes $\sum_{m\in G_{m'}} r_m^2=1$, makes the exponential term
constant in $(r_m)_{m\in G_{m'}}$, hence the joint density on $(r_m)_{m\in G_{m'}}$ is proportional to $\prod_{m\in G_{m'}} r_m^{p}$
on the positive orthant of the unit sphere in $\mathbb{R}^{k}$ for $k=|G_{m'}|$. Now setting $c_m=r_m^2$, we have $r_m=\sqrt{c_m}$ and $dr_m = \frac{1}{2}c_m^{-1/2}dc_m$, so the induced density on
$(c_m)_{m\in G_{m'}}$ on the simplex $\sum_{m\in G_{m'}} c_m=1$, $c_m\ge 0$, is proportional to $\prod_{m\in G_{m'}} \left(c_m^{p/2}\cdot c_m^{-1/2}\right) = \prod_{m\in G_{m'}} c_m^{\alpha-1}$, with $\alpha=\frac{p+1}{2}$, which is exactly $\mathrm{Dir}(\alpha,\ldots,\alpha)$.

The stated expectations and covariances of $c_m$ are the standard moment formulas of a symmetric Dirichlet
distribution. 

\end{proof}

\subsection{Derivation of \cref{thm:splitposterior}} \label{app:deriv_splitposterior}

Throughout, fix an assignment map $\varsigma$ and write $G_{m'}:=\{m\in[M]:\varsigma(m)=m'\}$ with
$k_{m'}:=|G_{m'}|$. Recall from \cref{cor:representation} that, on $\mathcal M_{\varsigma}$, each block $m\in G_{m'}$
can be parameterized as $\bw_{1,m}=\sqrt{c_m}\,\bw_{1,m'}^\ast, \w_{2,m}=\sqrt{c_m}\,\w_{2,m'}^\ast, 
c_m\ge 0, \sum_{m\in G_{m'}} c_m = 1$. Moreover, by \cref{lem:splitposterior}, under the induced manifold posterior
$\mathbb P_n^\varsigma$, the vectors
$\bc^{(m')}:=(c_m)_{m\in G_{m'}}$ are independent across $m'$ and satisfy
$\bc^{(m')}\sim\mathrm{Dir}(\alpha,\ldots,\alpha)$.

\subsubsection{Weight Moments}
Fix $m'\in[M^\ast]$ and $m\in G_{m'}$. Since $\bw_{1,m}=\sqrt{c_m}\bw_{1,m'}^\ast$ and $\w_{2,m}=\sqrt{c_m}\w_{2,m'}^\ast$,
the first moments reduce to computing $\E[\sqrt{c_m}]$ under a symmetric Dirichlet law.
For $\bc\sim \mathrm{Dir}(\alpha,\ldots,\alpha)$ in dimension $k:=k_{m'}$, each marginal satisfies
$c_m \sim \mathrm{Beta}(\alpha,(k-1)\alpha)$. Hence
\begin{align*}
\E[\sqrt{c_m}]
&= \E\big[c_m^{1/2}\big]
= \frac{B(\alpha+\tfrac12,(k-1)\alpha)}{B(\alpha,(k-1)\alpha)}
= \frac{\Gamma(\alpha+\tfrac12)\Gamma(k\alpha)}{\Gamma(\alpha)\Gamma(k\alpha+\tfrac12)}.
\end{align*}
Therefore,
\[
\E[\bw_{1,m}]
= \E[\sqrt{c_m}]\,\bw_{1,m'}^\ast,
\qquad
\E[\w_{2,m}]
= \E[\sqrt{c_m}]\,\w_{2,m'}^\ast.
\]

Second moments follow analogously from Dirichlet moments.
Using $\E[c_m]=1/k$ and, for $m\neq \tilde m$ within the same group,
\[
\Var(c_m)=\frac{\alpha(k\alpha-\alpha)}{(k\alpha)^2(k\alpha+1)}
=\frac{k-1}{k^2(k\alpha+1)},
\qquad
\Cov(c_m,c_{\tilde m})=-\frac{\alpha^2}{(k\alpha)^2(k\alpha+1)}
=-\frac{1}{k^2(k\alpha+1)},
\]
we obtain, for any coordinates $j,\ell\in[p]$,
\begin{align*}
\Cov\!\big((\bw_{1,m})_j,(\bw_{1,m})_\ell \big)
&= \Var(\sqrt{c_m})\,(\bw_{1,m'}^\ast)_j(\bw_{1,m'}^\ast)_\ell,\\
\Cov\!\big((\bw_{1,m})_j,(\bw_{1,\tilde m})_\ell \big)
&= \Cov(\sqrt{c_m},\sqrt{c_{\tilde m}})\,(\bw_{1,m'}^\ast)_j(\bw_{1,m'}^\ast)_\ell,
\qquad \tilde m\in G_{m'}\setminus\{m\},
\end{align*}
and the same structure for $\w_{2,m}$ by replacing $\bw_{1,m'}^\ast$ with $\w_{2,m'}^\ast$. Finally, if $m$ and $\tilde m$ belong to different groups $G_{m'}$ and $G_{\tilde m'}$ with $m'\neq \tilde m'$, then independence of $\bc^{(m')}$ and $\bc^{(\tilde m')}$ (as proven in \cref{app:proofsplitpost}) implies the corresponding cross-covariances are zero.

\subsubsection{Asymptotics}
We consider two asymptotic regimes in the Theorem, (ii) and (iii). We start with (iii).

\paragraph{Part (iii)} Fix $p$ (hence $\alpha=(p+1)/2$ is fixed) and consider a balanced allocation with
$k_{m'}\asymp M/M^\ast$. In particular, as $M\to\infty$ we have $k_{m'}\to\infty$ for each $m'\in[M^\ast]$.

\smallskip
\noindent\emph{First moment.}
For $m\in G_{m'}$, we have $\bom_m=\sqrt{c_m}\,\bom_{m'}^\ast$ and
$\bc^{(m')}\sim \mathrm{Dir}(\alpha,\ldots,\alpha)$ in dimension $k:=k_{m'}$.
Therefore
\[
\E[\bom_m]
=\E[\sqrt{c_m}]\,\bom_{m'}^\ast
=\mu_{k,\alpha}\,\bom_{m'}^\ast,
\qquad
\mu_{k,\alpha}:=\frac{\Gamma(\alpha+\tfrac12)\Gamma(k\alpha)}{\Gamma(\alpha)\Gamma(k\alpha+\tfrac12)}.
\]
Since $\alpha$ is fixed and $k\to\infty$, the standard Gamma-ratio asymptotic ${\Gamma(x)}({\Gamma(x+\tfrac12)})^{-1} = x^{-1/2}\bigl(1+\mathcal O(x^{-1})\bigr)$ for $x\to\infty$, implies, with $x=k\alpha$, $\mu_{k,\alpha}
={\Gamma(\alpha+\tfrac12)}{\Gamma(\alpha)}^{-1}\,(k\alpha)^{-1/2}\bigl(1+\mathcal O(k^{-1})\bigr)
=\Theta(k^{-1/2})$. Under balanced allocation $k\asymp M/M^\ast$, this yields $\E[\bom_m] =\Theta(M^{-1/2})\,\bom_{m'}^\ast$.

\smallskip
\noindent\emph{Second moment and covariance.}
Using again $\bom_m=\sqrt{c_m}\,\bom_{m'}^\ast$ and $\E[c_m]=1/k$ for a symmetric Dirichlet law, $\E[\bom_m\bom_m^\top ]
=\E[c_m]\,\bom_{m'}^\ast\bom_{m'}^{\ast\top}
=\frac{1}{k}\,\bom_{m'}^\ast\bom_{m'}^{\ast\top}$.
Consequently,
\begin{align*}
\Cov(\bom_m )=\E[\bom_m\bom_m^\top]
-\E[\bom_m]\E[\bom_m]^\top=\Bigl(\frac{1}{k}-\mu_{k,\alpha}^2\Bigr)\,\bom_{m'}^\ast\bom_{m'}^{\ast\top}.
\end{align*}
From the expansion above, $\mu_{k,\alpha}^2 = C_\alpha\,(k\alpha)^{-1}\bigl(1+\mathcal O(k^{-1})\bigr)$ with
$C_\alpha:=\bigl(\Gamma(\alpha+\tfrac12)/\Gamma(\alpha)\bigr)^2$.
Hence
\[
\frac{1}{k}-\mu_{k,\alpha}^2
=\frac{1}{k}\Bigl(1-\frac{C_\alpha}{\alpha}\Bigr)+\mathcal O(k^{-2})
=\Theta(k^{-1}),
\]
where the constant is strictly positive for every fixed $\alpha>0$.
Therefore $\Cov(\bom_m)
=\Theta(k^{-1})\,\bom_{m'}^\ast\bom_{m'}^{\ast\top}
=\Theta(M^{-1})\,\bom_{m'}^\ast\bom_{m'}^{\ast\top}$, under balanced allocation $k\asymp M/M^\ast$.

\paragraph{Part (ii)} If $M>M^\ast$ is fixed, then each $k_{m'}$ is fixed and the conditional law
$\bc^{(m')}\sim\mathrm{Dir}(\alpha,\ldots,\alpha)$ remains non-degenerate (its covariance matrix has strictly positive diagonal entries). Thus, even if the ordinary posterior in $\mathbb R^d$ contracts onto
$\bigcup_{\varsigma}\mathcal M_{\varsigma}$ as $n\to\infty$,
the induced manifold posterior $\mathbb P_n^\varsigma$ remains non-degenerate
along the splitting coordinates within any fixed $\mathcal M_{\varsigma}$.
\qed

\section{Additional Results and Details} \label{app:addres}

\subsection{Performance Comparison} \label{app:de_vs_bde}

\cref{tab:rmse_lppd_des_bdes} summarizes the results of 10 DE members compared against 10 BDE members based on 1000 posterior samples for the different combinations of $n$ and $M$.

\begin{table}[t]
\centering
\small
\begin{tabular}{cc|cc|cc}
\toprule
$n$ & $M$ &
\multicolumn{2}{c|}{RMSE} &
\multicolumn{2}{c}{LPPD} \\
 &  &
DEs (MAP) & BDEs (post) &
DEs (MAP) & BDEs (post) \\
\midrule
64   & 5   & 0.281 (0.028) & \textbf{0.220 (0.000)} & -629.126 (402.214) & \textbf{191.161 (3.165)} \\
64   & 10  & 0.269 (0.020) & \textbf{0.225 (0.001)} & -451.490 (270.894) & \textbf{-6.035 (4.816)} \\
64   & 20  & 0.261 (0.006) & \textbf{0.230 (0.001)} & -325.642 (84.364)  & \textbf{-153.265 (5.945)} \\
64   & 40  & \textbf{0.258 (0.005)} & 0.299 (0.032) & -292.155 (68.175)  & \textbf{-290.989 (131.960)} \\
\midrule
128  & 5   & 0.262 (0.045) & \textbf{0.222 (0.000)} & -397.857 (733.819) & \textbf{235.191 (3.079)} \\
128  & 10  & 0.244 (0.005) & \textbf{0.225 (0.001)} & -114.483 (62.672)  & \textbf{173.195 (4.631)} \\
128  & 20  & 0.251 (0.006) & \textbf{0.234 (0.003)} & -196.812 (73.733)  & \textbf{106.496 (19.250)} \\
128  & 40  & \textbf{0.253 (0.004)} & 0.259 (0.009) & -219.711 (45.662)  & \textbf{-189.240 (64.176)} \\
\midrule
256  & 5   & 0.238 (0.048) & \textbf{0.214 (0.000)} &  -97.456 (734.581) & \textbf{323.820 (4.360)} \\
256  & 10  & 0.218 (0.003) & \textbf{0.214 (0.002)} &  195.522 (28.465)  & \textbf{319.614 (18.226)} \\
256  & 20  & \textbf{0.217 (0.001)} & \textbf{0.217 (0.003)} &  204.783 (14.054)  & \textbf{258.964 (22.974)} \\
256  & 40  & \textbf{0.219 (0.002)} & 0.223 (0.003) &  184.713 (17.744)  & \textbf{199.068 (22.544)} \\
\midrule
512  & 5   & 0.266 (0.075) & \textbf{0.205 (0.001)} & -543.261 (1147.19) & \textbf{338.905 (5.740)} \\
512  & 10  & 0.213 (0.004) & \textbf{0.206 (0.001)} &  255.522 (40.960)   & \textbf{288.174 (9.575)} \\
512  & 20  & \textbf{0.209 (0.001)} & \textbf{0.209 (0.002)} &  252.027 (14.568)   & \textbf{265.624 (15.397)} \\
512  & 40  & \textbf{0.214 (0.002)} & \textbf{0.214 (0.002)} &  \textbf{258.853 (11.160)}   & \textbf{258.853 (13.439)} \\
\midrule
1024 & 5   & 0.244 (0.061) & \textbf{0.205 (0.001)} &   -4.629 (758.61)  & \textbf{341.968 (100.066)} \\
1024 & 10  & 0.205 (0.001) & \textbf{0.203 (0.001)} &  289.748 (39.400)   & \textbf{355.305 (8.315)} \\
1024 & 20  & 0.205 (0.001) & \textbf{0.203 (0.001)} &  374.889 (6.737)    & \textbf{374.963 (8.723)} \\
1024 & 40  & \textbf{0.209 (0.002)} & \textbf{0.209 (0.002)} &  379.049 (6.008)    & \textbf{379.675 (10.920)} \\
\midrule
2048 & 5   & 0.244 (0.061) & \textbf{0.203 (0.001)} &  -24.316 (753.07)  & \textbf{361.497 (759.413)} \\
2048 & 10  & 0.202 (0.001) & \textbf{0.201 (0.001)} &  315.698 (43.052)   & \textbf{375.665 (18.226)} \\
2048 & 20  & \textbf{0.204 (0.001)} & \textbf{0.204 (0.001)} &  372.791 (7.001)    & \textbf{378.036 (5.874)} \\
2048 & 40  & \textbf{0.204 (0.001)} & \textbf{0.204 (0.001)} &  373.894 (3.614)    & \textbf{377.077 (11.160)} \\
\bottomrule
\end{tabular}
\caption{RMSE and LPPD comparison between deep ensembles (DEs) and Bayesian deep ensembles (BDEs) for different dataset sizes \(n\) and model widths \(M\).}
\label{tab:rmse_lppd_des_bdes}
\end{table}

\subsection{Permutation Diagnostics and Identifiability Measures}
\label{app:permutation_diagnostics}

We introduce diagnostics that quantify permutation non-identifiability in posterior
samples of two-layer ReLU networks. Throughout, let $\bw = (\bw_{1,1},\dots,\bw_{1,M}, \w_{2,1},\dots,\w_{2,M})$ denote the network parameters and recall the neuron-wise parameter blocks $\bom_m := (\bw_{1,m}^\top, \w_{2,m})^\top \in \mathbb{R}^{p+1}$. A \emph{permutation chamber} is a connected component of parameter space obtained
by fixing a representative of the permutation orbit and excluding degenerate
configurations in which two hidden units become indistinguishable (e.g., collinear
or zero-contributing). Distinct chambers correspond to different permutation matrices
$\bm{\pi} \in \Pi$ and are separated by boundaries of Lebesgue measure zero. To empirically identify these chambers, we proceed as follows.

\subsubsection{Permutation Alignment via Optimal Assignment}
Let $\bW_1, \widetilde{\bW}_1 \in \mathbb{R}^{M \times p}$ denote two first-layer weight
matrices with rows $\bw_{1,i}$ and $\widetilde{\bw}_{1,j}$, respectively.
Define the normalized directions $\hat{\bw}_{1,i} := \frac{\bw_{1,i}}{\|\bw_{1,i}\|_2}$ and $\hat{\widetilde{\bw}}_{1,j} := \frac{\widetilde{\bw}_{1,j}}{\|\widetilde{\bw}_{1,j}\|_2}$ whenever the norms are nonzero. We define the similarity matrix $\mathbf{S} = (S_{ij})_{i\in[M],j\in[M]}$ with $S_{ij} := \bigl| \langle \hat{\bw}_{1,i}, \hat{\widetilde{\bw}}_{1,j} \rangle \bigr|$ and compute the permutation matrix $\bm{\pi} := \arg\max_{\bm{\pi}' \in \Pi} \sum_{i=1}^M S_{i,\mathfrak{P}_{\bm{\pi}'}(i)}$,
where $\mathfrak{P}_{\bm{\pi}'}$ denotes the index permutation induced by $\bm{\pi}'$.
This assignment problem is solved using the Hungarian algorithm and yields a
permutation matrix $\bm{\pi} \in \Pi$ aligning $\bW_1$ to $\widetilde{\bW}_1$ \citep[see, e.g.][]{munkres1957algorithms}.

\paragraph{Weight-Local Permutation Tracking Along an MCMC Chain}
Let $\{\bw^{(t)}\}_{t=0}^T$ denote posterior samples from a single MCMC chain.
For each $t \ge 1$, let $\bm{\pi}_t^{\mathrm{loc}} \in \Pi$ denote the permutation
matrix aligning $\bW_1^{(t)}$ to $\bW_1^{(t-1)}$ via the assignment above. We define the cumulative permutation matrices recursively by $\bm{\pi}_t := \bm{\pi}_{t-1} \bm{\pi}_t^{\mathrm{loc}}$ and $\bm{\pi}_0 := \bI$.
The sequence $\{\bm{\pi}_t\}_{t=0}^T$ represents the permutation of hidden units at iteration
$t$ relative to the initial draw. This construction is stable under small parameter
updates and distinguishes genuine transitions between permutation chambers from
assignment instability.

\subsubsection{Switching rate}
Given the sequence $\{\bm{\pi}_t\}_{t=0}^T$, the \emph{switching rate} is defined as $\mathrm{SR}
:= T^{-1} \sum_{t=1}^T \mathbf{1}\!\left\{ \bm{\pi}_t \neq \bm{\pi}_{t-1} \right\}$. A vanishing switching rate indicates that the Markov chain remains confined to a single permutation chamber, whereas a positive rate indicates transitions between distinct chambers.

\subsubsection{Margin-Based Identifiability}
Permutation assignments may be ill-conditioned when multiple alignments achieve
nearly identical objective values. To quantify this effect, we define a margin-based
identifiability measure. For a given similarity matrix $\mathbf S$, define for each $i \in [M]$ the quantities $s_{i,(1)} := \max_j S_{ij}$ and $s_{i,(2)} := \max_{j \neq j^\ast} S_{ij}$, where $j^\ast$ attains the maximum. The \emph{row margin} is defined as $\Delta_i := s_{i,(1)} - s_{i,(2)}$. The \emph{minimum margin} at iteration $t$ is $\Delta_{\min}^{(t)} := \min_{i=1,\dots,M} \Delta_i^{(t)}$, where $\Delta_i^{(t)}$ is computed from the similarity matrix between
$\bW_1^{(t)}$ and $\bW_1^{(t-1)}$.
Finally, the \emph{mean minimum margin} over a chain is $\overline{\Delta}_{\min} := {T}^{-1}
\sum_{t=1}^T \Delta_{\min}^{(t)}$.

\subsubsection{Interpretation}
The diagnostics above separate two distinct aspects of permutation non-identifiability: (1) \emph{dynamical behavior}, captured by the switching rate, which reflects whether
an MCMC chain traverses multiple permutation chambers and (2) \emph{geometric identifiability}, captured by the mean minimum margin, which quantifies how well-defined neuron-to-neuron assignments are within a chamber. In overparameterized regimes, permutation chambers may remain dynamically isolated while becoming increasingly ill-conditioned due to redundant or weakly contributing
neurons. 

\subsection{Estimation of Splitting Coefficients}
\label{sec:split_coeff_estimation}

To empirically estimate splitting coefficients in overparametrized models ($M > M^\ast$), we proceed as follows. For each posterior draw, every hidden neuron $m \in [M]$ is first assigned to one of the $M^\ast$ true neurons via cosine similarity between the corresponding first-layer weight vectors. This yields an assignment map $\varsigma : [M] \to [M^\ast]$ and associated groups $G_{m'} := \varsigma^{-1}(m')$.

Conditioned on a fixed assignment, the splitting coefficient of neuron $m \in G_{m'}$ is computed by projecting its parameter vector $\bom_m$ onto the corresponding true neuron parameter $\bom^\ast_{m'}$ and normalizing within the group, $c_m := ({\langle \bom_m, \bom^\ast_{m'} \rangle}) / ({\sum_{\tilde m \in G_{m'}} \langle \bom_{\tilde m}, \bom^\ast_{m'} \rangle}), m \in G_{m'}$. By construction, this yields non-negative coefficients satisfying $\sum_{m \in G_{m'}} c_m = 1$. The empirical distributions shown in \cref{fig:beta} are obtained by aggregating these coefficients across posterior draws and chains and comparing their marginals to the corresponding Beta distributions implied by the symmetric Dirichlet law in \cref{lem:splitposterior}.

\subsection{Additional SGLD Results} \label{app:sgld_results}

To complement our switching behavior results for NUTS, Figure~\ref{fig:sgld_chamber_switching} reports the same diagnostics for SGLD. The results show the same qualitative behavior as observed for NUTS, supporting the conclusion that permutation chamber switching is rare in this setting.

\begin{figure}
    \centering
    \includegraphics[width=0.55\linewidth]{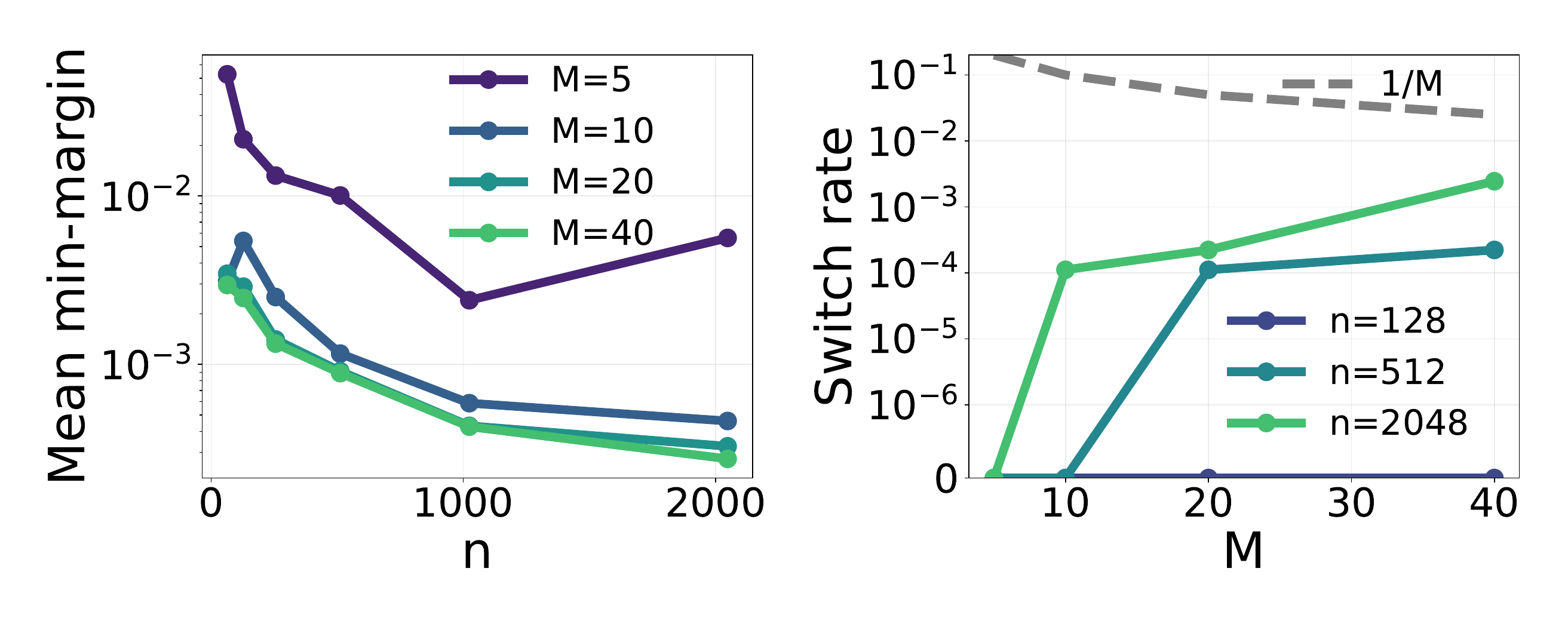}
    \caption{Left: Mean minimum margin quantifying the depth of chains, with values below $10^{-1}$ indicating a chain residing deep in the permutation chamber. Right: Switch rate into different permutation chambers, showing that permutations increase with $M$ and $n$, but the number of expected permutations remains smaller than 1 (gray dashed line) for all chains.}
    \label{fig:sgld_chamber_switching}
\end{figure}

\subsection{Practical Implications} \label{app:practical_implications}

In this section, we provide two additional experiments illustrating how the non-identifiability effects studied in the main text can affect practical procedures that rely on parameter-space uncertainty. The experimental details are given in the captions of Figures~\ref{fig:continual_learning_ewc} and~\ref{fig:corrected_rhat}. These experiments are meant as targeted demonstrations rather than exhaustive benchmarks.

\paragraph{Continual learning}

\begin{figure}[t]
    \centering
    \includegraphics[width=0.4\linewidth]{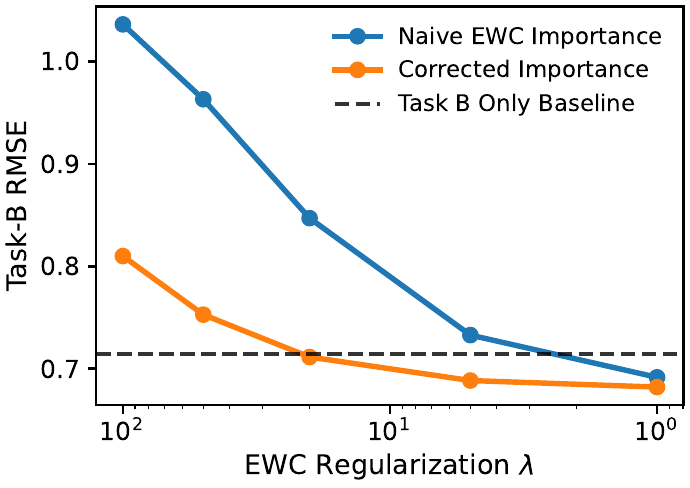}
    \caption{Task-B RMSE after sequential Task A$\rightarrow$Task B training with EWC in a redundant-neuron model ($M^\ast=5$, $M=20$) across different regularization strengths $\lambda\in\{1,5,20,50,100\}$.
Task A and Task B are generated from the same hidden-layer latent feature directions and the same noise model, while Task B changes how these latent features are linearly combined at the output layer.
Naive diagonal Fisher importance gives consistently higher error than its importance corrected version using information of group equivalent neurons.
The dashed line shows a Task B only baseline (no Task A training, no EWC).
The naive method inflates group-wise importance (about $7\times$ in this setup), and corrected importance improves Task-B RMSE by about 0.125 relative to naive EWC.}
    \label{fig:continual_learning_ewc}
\end{figure}

Figure~\ref{fig:continual_learning_ewc} studies a sequential two-task setting using Elastic Weight Consolidation \citep[EWC;][]{kirkpatrick2017}. Since EWC relies on weight-space importance estimates to constrain updates after the first task, non-identifiability-induced variance can distort which directions are treated as important. In particular, redundant parameter directions may exhibit substantial posterior variation without corresponding functional importance. The experiment compares the standard EWC penalty with a corrected variant that removes the variance contribution attributable to non-identifiable directions. The results show that this correction improves adaptation to the second task across the considered regularization strengths.

\begin{figure}[t]
    \centering
    \includegraphics[width=0.4\linewidth]{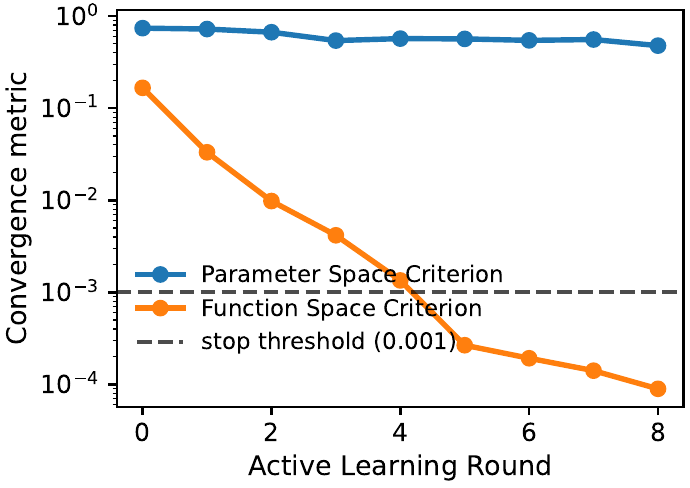}
    \caption{Convergence trajectories across active learning rounds in the overparameterized model ($M=40$), with indices corresponding to increasing labeled sample sizes $n\in[64,16384]$.
Epistemic uncertainty measured in parameter space remains almost constant over rounds, whereas the function space convergence metric decays and at round 5 crosses a pre-defined threshold marking a desired uncertainty level.
The setup follows the experiments in Section 5.2.}
    \label{fig:corrected_rhat}
\end{figure}

\paragraph{Sampling diagnostics}
Figure~\ref{fig:corrected_rhat} studies convergence diagnostics for sampling-based inference. In the presence of non-identifiability, standard parameter-space $\hat R$ can remain inflated because different chains may occupy different, but functionally equivalent, regions of parameter space \citep{sommer2024connecting}. This can happen even when the chains have effectively converged in terms of their predictive behavior. The experiment compares standard parameter-space diagnostics, functional diagnostics, and a corrected parameter-space diagnostic that accounts for the between-chain variance induced by non-identifiable split directions. The corrected diagnostic better reflects convergence in this setting, supporting the view that non-identifiability should be accounted for when interpreting weight-space sampling diagnostics.

\section{Computational Details} \label{app:comp}

\subsection{Computational Environment}

Experiments were conducted on a conventional Laptop with 12 i7-1265U CPUs (12th Generation). None of the experiments took more than 12h to run.

\subsection{Experimental Details}
\label{sec:exp_details}

All experiments in Figures~3--6 and Figure~8 are based on posterior samples obtained using NUTS as implemented in \texttt{BlackJAX} and follow a two-stage strategy as proposed in \citet{sommer2024connecting} and adapted in \citet{sommer2025microcanonical,smile}. For each dataset and network width, multiple independent Markov chains are initialized via an Adam-based optimization procedure to obtain approximate MAP solutions. These initial states are then used to start NUTS sampling.

We employ windowed adaptation during a warmup phase of $1{,}000$ iterations to automatically tune the step size and the inverse mass matrix, targeting an acceptance probability of $0.8$. After warmup, sampling proceeds with fixed hyperparameters. For each chain, we collect $1{,}000$ posterior samples, using thinning of $10$ iterations between retained samples. All reported posterior statistics and empirical distributions are computed from these retained samples. Unless stated otherwise, the same sampler configuration is used across all experiments to ensure comparability across different sample sizes $n$ and network widths $M$.

\end{document}